\documentclass{article}
\usepackage[preprint]{neurips_2025}
\usepackage[utf8]{inputenc} 
\usepackage[T1]{fontenc}    
\usepackage{hyperref}       
\usepackage{url}            
\usepackage{booktabs}       
\usepackage{amsfonts}       
\usepackage{nicefrac}       
\usepackage{microtype}           
\usepackage{amssymb}
\usepackage{mathtools}
\usepackage{amsthm}
\usepackage{amsmath}
\usepackage{xcolor}
\usepackage{adjustbox}
\usepackage{multirow}
\usepackage{minitoc}
\usepackage{cleveref}
\usepackage{algorithm}
\usepackage{algorithmic}
\usepackage{graphicx}
\usepackage{subcaption}
\usepackage{wrapfig}
\definecolor{lightSkyBlue}{RGB}{70, 130, 180}
\definecolor{lightGreen}{RGB}{0, 100, 0}
\usepackage{pdflscape}
\theoremstyle{plain}
\newtheorem{theorem}{Theorem}[section]

\newtheorem{lemma}[theorem]{Lemma}
\newtheorem{corollary}[theorem]{Corollary}
\theoremstyle{definition}
\newtheorem{definition}[theorem]{Definition}
\newtheorem{assumption}[theorem]{Assumption}
\theoremstyle{remark}

\title{Offline Reinforcement Learning \\with Penalized Action Noise Injection}
\author{
  JunHyeok Oh$^1$ \quad Byung-Jun Lee$^{1,2}$ \\
  \\
  $^1$Korea University \quad $^2$Gauss Labs Inc.\\
  \texttt{\{the2ndlaw, byungjunlee\}@korea.ac.kr} \\
}
\begin{document}
\maketitle
\begin{abstract}
    Offline reinforcement learning (RL) optimizes a policy using only a fixed dataset, making it a practical approach in scenarios where interaction with the environment is costly. Due to this limitation, generalization ability is key to improving the performance of offline RL algorithms, as demonstrated by recent successes of offline RL with diffusion models. However, it remains questionable whether such diffusion models are necessary for highly performing offline RL algorithms, given their significant computational requirements during inference. In this paper, we propose Penalized Action Noise Injection (PANI), a method that simply enhances offline learning by utilizing noise-injected actions to cover the entire action space, while penalizing according to the amount of noise injected. This approach is inspired by how diffusion models have worked in offline RL algorithms. We provide a theoretical foundation for this method, showing that offline RL algorithms with such noise-injected actions solve a modified Markov Decision Process (MDP), which we call the noisy action MDP. PANI is compatible with a wide range of existing off-policy and offline RL algorithms, and despite its simplicity, it demonstrates significant performance improvements across various benchmarks.
    \end{abstract}
    \label{submission}
    \section{Introduction}
    
    Reinforcement learning (RL) enables agents to learn decision making policies through interaction with an environment. While effective in many domains, real world applications such as healthcare and autonomous driving often restrict such interaction due to safety and cost concerns. Offline RL addresses this limitation by training policies on precollected datasets, removing the need for online exploration. Despite this advantage, offline RL remains challenged by the overestimation of out of distribution (OOD) actions, which can degrade policy performance at deployment.
    
    To mitigate the OOD problem, recent approaches have explored the generative capabilities of diffusion models. Methods such as Diffusion-QL~\citep{wang2022diffusion} and SfBC~\citep{chen2022offline} use diffusion models to construct behavior cloning policies that support $Q$-learning. Other techniques, including QGPO~\citep{lu2023contrastive}, incorporate $Q$-value feedback to guide action sampling, while DTQL~\citep{chen2024diffusion} and SRPO~\citep{chen2023score} apply diffusion-based regularization instead of relying on generative policies. Although these methods achieve strong empirical results, their computational cost and inference latency raise concerns about scalability and deployment.
    
    Building on the success of diffusion-based methods, we propose Penalized Action Noise Injection (PANI), a lightweight alternative that leverages noise-perturbed actions from offline datasets and penalizes them according to the noise magnitude. By avoiding complex generative modeling, PANI mitigates OOD overestimation while maintaining computational efficiency. Empirically, PANI yields substantial gains on standard offline RL benchmarks, highlighting its effectiveness in practice.
    
    PANI is broadly compatible with existing off-policy and offline RL algorithms, requiring only minor modifications to integrate with methods such as IQL~\citep{kostrikov2021offline} and TD3~\citep{fujimoto2018addressing}. Our contributions are threefold. First, we introduce \textbf{Penalized Action Noise Injection (PANI)}, a simple yet theoretically grounded method that enables $Q$-network updates over a broader region of the action space using only offline data. Second, we formalize the \textbf{Noisy Action MDP}, a modified Markov Decision Process induced by noise injection, and provide theoretical analysis showing that it reduces OOD value overestimation. Third, we propose a \textbf{Hybrid Noise Distribution} that enhances robustness and performance. Overall, PANI demonstrates significant improvements in both efficiency and policy quality across diverse offline RL settings.
    
    \section{Preliminaries}
    RL provides a foundational framework for solving sequential decision-making problems, where an agent learns to optimize its actions through interactions with an environment. Formally, RL problems are modeled as Markov Decision Processes (MDPs), defined by \((\mathcal{S}, \mathcal{A}, R, P, \gamma)\), where \(\mathcal{S}\) is the state space, \(\mathcal{A}\) is the action space, \(R\) is the reward function, \(P\) is the transition probability distribution, and \(\gamma \in (0, 1)\) is the discount factor. The primary objective in RL is to find a policy \(\pi\), which maps states to a probability distribution over actions, maximizing the expected cumulative discounted reward:
    \(
    \eta(\pi) = \mathbb{E}_{\pi} \left[ \sum_{t=0}^\infty \gamma^t r_{t} \right],
    \)
    where \(r_{t}\) denotes the reward received at time \(t\). This is typically achieved by iteratively refining the policy to approach the optimal policy \(\pi^* = \arg\max_\pi \eta(\pi)\). One way to evaluate the policy's performance is to estimate the action value function \(Q^\pi(s, a)\), which measures the expected cumulative reward starting from state \(s\), taking action \(a\), and subsequently following policy \(\pi\):
    \(
    Q^\pi(s, a) = \mathbb{E}_\pi\left[ \sum_{t=0}^\infty \gamma^t r_{t} \mid s_0 = s, a_0 = a \right].
    \)
    \paragraph{Offline RL} 
    Offline reinforcement learning (RL) aims to learn a policy from a fixed dataset without further environment interaction. A key challenge is the overestimation of unseen, out-of-distribution (OOD) actions, whose $Q$-values remain uncorrected due to the lack of data. In deep RL, the standard practice of alternating value maximization and bootstrapped updates can amplify this issue. While online RL mitigates overestimation through continuous data collection, offline RL lacks this feedback, causing policies to overprioritize OOD actions and perform poorly at deployment if left unaddressed.

    A \textbf{detailed discussion of related work} is provided in the Appendix~\ref{appendix:related_works}.

    \section{Motivation}
    \label{sec:motivation}
    Out-of-distribution (OOD) error poses a major challenge not only in offline reinforcement learning (RL) but also in generative modeling. A representative example is score matching \citep{hyvarinen2005estimation}, which estimates the score function \(\nabla_x \log p(x)\) of a distribution \(p\) based on sample data. However, in regions where data is sparse, score estimates become unreliable due to insufficient updates, leading to significant estimation errors \citep{song2019generative}.
    
    To address this issue, Denoising Score Matching (DSM) \citep{vincent2011connection} was introduced. Instead of directly estimating the score of the original data distribution, DSM learns the score function of a noise-perturbed version of the data. By injecting noise into samples, it enables learning signals even in low-density regions, helping mitigate errors in score estimation. Specifically, the score network \(s_\theta\) is trained to minimize the following objective:
    \[
    J_\text{DSM}(\theta) = \mathbb{E}_{x \sim p, \tilde{x} \sim q_\sigma(\cdot \mid x)}\left[ \| s_\theta(\tilde{x}) - \nabla \log q_\sigma(\tilde{x} \mid x) \|_2^2\right],
    \]
    where \(q_\sigma(\cdot \mid x)\) is a noise distribution, typically Gaussian: \(\mathcal{N}(x, \sigma^2 \mathbf{I})\).
    
    However, DSM still tends to produce unreliable estimates at lower noise levels, where updates rely on sparse data. At higher noise levels, the added noise can obscure important data structures, reducing the accuracy of score estimation and degrading downstream performance.
    
    Diffusion models extend the principles of DSM by learning the score of the noisy distribution through a multi-step denoising process \citep{song2020score}. These models are trained to progressively denoise data corrupted by noise at varying levels, corresponding to different time steps \(t\). Specifically, by minimizing the following objective:
    \[
    J(\theta) = \mathbb{E}_{{t \sim \mathcal{U}(0, 1), \epsilon \sim \mathcal{N}(0, \mathbf{I})}} \left[\|\epsilon_\theta(z_t, t) - \epsilon \|_2^2\right] \quad 
    \text{where} \quad z_t = \alpha_t x + \sigma_t \epsilon, \text{  thus,  }  z_t \sim \mathcal{N}(\alpha_t x , \sigma_t^2 \mathbf{I}).
    \]
    This formulation can also be viewed from the perspective of DSM. Defining \(q_{\sigma_t}(\cdot|x) \sim \mathcal{N}(\alpha_t x , \sigma_t^2 \mathbf{I})\):
    \[
    \|\epsilon_\theta(z_t, t) - \epsilon \|_2^2 = \sigma_t^2 \| s_\theta(z_t, t) - \nabla \log q_{\sigma_t}(z_t | x) \|_2^2
    \]
    we can rewrite the objective as:
    \[
    J(\theta) = \mathbb{E}_{\substack{t \sim \mathcal{U}(0, 1),  z_t \sim q_{\sigma_t}(\cdot|x)}} \left[\sigma_t^{2}\|s_\theta(z_t, t) - 
    \nabla \log q_{\sigma_t}(z_t | x) \|_2^2\right].
    \]
    
    This training framework uses DSM to approximate the score function over multiple noise scales, combining the detailed signals at low noise levels with the robustness of higher noise. Moreover, training iteratively across noise levels can be seen as a form of data augmentation, as it exposes the model to a broader range of input variations during learning~\citep{kingma2024understanding}.

    In offline RL, analogous to score matching, limited coverage in the action space can hinder $Q$-network updates in low-density regions, potentially leading to overestimation of unseen or rarely sampled actions. In particular, actions absent from the dataset receive no direct learning signal, making them prone to out-of-distribution overestimation. 
    
    To address this, we take inspiration from DSM and propose Penalized Noisy Action Injection, which injects noise into actions during training to encourage updates in underrepresented regions of the action space. Furthermore, motivated by the use of multi-scale noise in diffusion models, we introduce a hybrid noise distribution that combines various noise levels to enhance robustness. The effectiveness of our approach is illustrated in Figure~\ref{fig:toy_q_visualization}, where PANI leads to more stable $Q$-value estimates and suppresses overestimation in low-density regions.

    \begin{figure}[t!]
        \centering
        \includegraphics[width=1.0\linewidth]{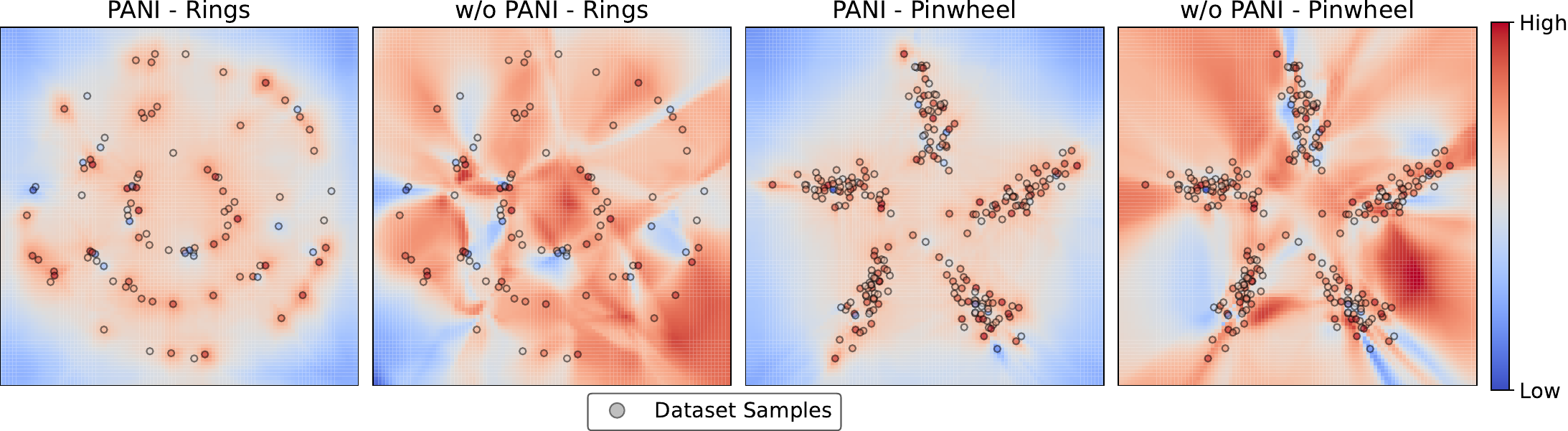}
        \caption{Visualization of learned $Q$-values on toy datasets. 
        Each pair compares models trained with and without PANI on Rings (left) and Pinwheel (right). 
        Background shows $Q$-values (red: high, blue: low); circles represent dataset actions, colored by their rewards.}
        \label{fig:toy_q_visualization}
    \end{figure}

    \section{Penalized Action Noise Injection}
    \label{sec:PANI}
    In this section, we present Penalized Action Noise Injection (PANI), a simple yet effective method for extending value-based RL algorithms to better handle out-of-distribution actions. Given a dataset \(\mathcal{D}\), conventional value-based methods typically optimize the following objective:
    \[
    J(\theta) = \mathbb{E}_{(s, a, s') \sim \mathcal{D}}\left[(Q_\theta(s, a) - y(s, a, s'))^2\right],
    \]
    where \(y\) is the target value derived from the Bellman equation. In offline RL, updates are limited to actions in \(\mathcal{D}\), leaving the $Q$-network prone to overestimating unseen actions.
    
    To mitigate the overestimation of unseen actions, we inject noise into dataset actions, allowing the $Q$-function to be updated on perturbed actions while penalizing $Q$-values according to the squared distance from the original action. Specifically, we modify the standard update objective as follows:
    \begin{equation*}
    \bar{J}(\theta) = \mathbb{E}_{\substack{(s, a, s') \sim \mathcal{D},  a' \sim q_\sigma(\cdot|a)}}\left[\|Q_\theta(s, a') - \bar{y}(s, a, s', a')\|_2^2\right],
    \end{equation*}
    where the penalized target value is defined by $\bar{y}(s, a, s', a') = y(s, a, s') - \|a - a'\|_2^2.$
    Here, the penalty term \(\|a - a'\|_2^2\) discourages the network from assigning high $Q$-values to actions that deviate significantly from dataset actions.
    
    Thus, our method can be implemented with minimal changes by sampling actions from the dataset, injecting noise to obtain perturbed actions \(a'\), and updating the $Q$-network using the penalized target value. This approach integrates seamlessly with algorithms such as TD3 \citep{fujimoto2018addressing} and IQL \citep{kostrikov2021offline}, requiring only minor modifications to the Q-update step. See Appendix~\ref{sec:implementation} for implementation details.
    
    \section{Noisy Action Markov Decision Process}
    \label{sec:NAMDP}
    In the previous section, we introduced Penalized Action Noise Injection (PANI) as a method for mitigating overestimation in Offline RL. To understand how noise affects learning, we formalize the PANI objective as $Q$-learning within a new Markov Decision Process, the Noisy Action MDP (NAMDP), and analyze its properties to provide theoretical insights into the approach.
    
    We begin by formally defining the noise distribution, which plays a crucial role in the formulation of the NAMDP. The noise distribution determines how noise is injected into the action space, influencing the updates in the $Q$-network and the resulting policy behavior.
    
    \begin{definition}[Noise Distribution]
        \label{def:noise_distribution}
    A noise distribution \(q_\sigma\) is a distribution parameterized by \(a \in \mathcal{A}\) and a noise scale \(\sigma > 0\), with support \(\text{supp}(q_\sigma)\) such that the action space \(\mathcal{A}\) is a subset of its support.
    \end{definition}
    
    For example, a Gaussian noise distribution \(q_\sigma(a' \mid a) = \mathcal{N}(a' \mid a, \sigma^2 \mathbf{I})\) satisfies these requirements.
    
    \begin{wrapfigure}{r}{0.5\textwidth}
    \centering
    \vspace{-1.5mm}
    \includegraphics[width=0.5\textwidth]{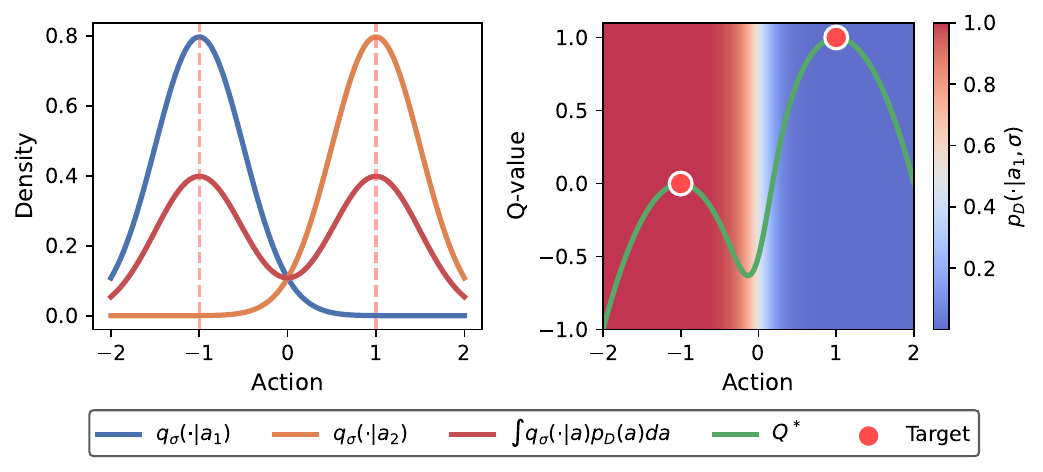}
    \caption{\textbf{Left:} Noise distributions and the resulting noised distribution.
    \textbf{Right:} $Q$-value predictions under the NAMDP, with the background color representing \(p_D(a' \mid a_1, \sigma)\). Note that \(a_1 = -1\), \(a_2 = 1\), with rewards \(r(a_1)=0\), \(r(a_2)=1\). The green curve shows the groundtruth $Q$-values. }
    \vspace{-5mm}
    \label{fig:noise_distributions}
    \end{wrapfigure}
    Given a distribution \(p\), the probability of generating a sample \(a'\) given \(a\), incorporating the noise distribution \(q_\sigma\), can be expressed as:
    \begin{equation}
    \label{eq:p_a'}
        p(a' \mid a, \sigma) = \frac{p(a) q_\sigma(a' \mid a)}{\int p(a) q_\sigma(a' \mid a)  \, da}.
    \end{equation}
    The denominator in this equation represents the convolution of the noise distribution \(q_\sigma\) with the given distribution \(p\):
    \begin{equation*}
        p_\sigma(a') = \int p(a) q_\sigma(a' \mid a)  \, da.
    \end{equation*}
    If the noise distribution \(q_\sigma\) is reasonable, the probability \(p(a' \mid a, \sigma)\) tends to be higher for \(a'\) near \(a\) and lower for those farther away. This behavior is illustrated in Figure~\ref{fig:noise_distributions}.
    
    We define the Noisy Action MDP using the weight function \(p_D(a' \mid s, a, \sigma)\), which, under a reasonable noise distribution, assigns higher weight to actions near \(a\).

    \begin{definition}[Noisy Action MDP]
        Given a noise distribution \(q_\sigma\), a finite dataset \(\mathcal{D} = \{(s_i, a_i, r_i, s'_i)\}_i\) and a dataset distribution \(p_D\), the NAMDP is defined as an MDP \((\mathcal{S}, \mathcal{A}, R_\sigma, P_\sigma, \gamma)\), where:
        \[
        P_\sigma(s' \mid s, a') = \int_\mathcal{A} p_D(s' \mid s, a)\,p_D(a' \mid s, a, \sigma)\, da,
        \]
        \[
        R_\sigma(s, a') = \int_\mathcal{A} p_D(a' \mid s, a, \sigma)\,\bigl(R(s, a) - \|a - a'\|_2^2\bigr)\, da.
        \]
    \end{definition}
    Using this definition, we now show that minimizing the PANI objective is equivalent to learning the $Q$-value function in the NAMDP. This result provides a formal grounding for PANI within a modified decision process, enabling theoretical analysis based on the structure of the NAMDP.
    
    \begin{theorem}[PANI Objective]
        \label{the:NAMDP}
        Suppose that the function \(Q\) minimizes the following objective:
        \begin{equation*}
        \label{eq:NAMDPobjective}
        \mathbb{E}_{a \sim p_D(\cdot \mid s),\, a' \sim q_\sigma(\cdot \mid a)}\left[ \| Q(s, a') - \bar{y}(s, a, a') \|_2^2 \right],
        \end{equation*}
        where the target value \(\bar{y}(s, a, a')\) is defined as:
        \begin{align*}
        \mathbb{E}_{\substack{s' \sim p_D(\cdot \mid s, a), \bar{a} \sim \pi(\cdot \mid s')}}\Bigl[ R(s, a) - \|a - a'\|_2^2 + \gamma Q^\pi(s', \bar{a}) \Bigr].
        \end{align*}
        Then, the function \(Q\) is the $Q$-value function of \(\pi\) in the NAMDP.
    \end{theorem}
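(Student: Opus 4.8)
The plan is to exploit the standard fact that the minimizer of a squared-error regression objective is a conditional expectation, and then to unfold the target $\bar{y}$ so that the reward-penalty term and the bootstrapped term reassemble exactly into the NAMDP reward $R_\sigma$ and transition $P_\sigma$. First I would fix a state $s$ and a perturbed action $a'$ and recall that for an objective of the form $\mathbb{E}[\|Q(s,a') - \bar{y}(s,a,a')\|_2^2]$, the pointwise minimizer over the value $Q(s,a')$ is the conditional mean $\mathbb{E}[\bar{y}(s,a,a') \mid s, a']$, where the conditioning uses the joint law of $(a,a')$ induced by $a \sim p_D(\cdot\mid s)$ followed by $a' \sim q_\sigma(\cdot\mid a)$. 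The key identification — and the step I expect to be the main conceptual hurdle — is that the conditional density of $a$ given $(s,a')$ is exactly the weight $p_D(a'\mid s,a,\sigma)$ of Equation~\eqref{eq:p_a'}: the numerator $p_D(a\mid s)\,q_\sigma(a'\mid a)$ is the joint density and the denominator $p_\sigma(a')$ is the marginal, so Bayes' rule yields the posterior over $a$ despite the conditioning-bar notation. I would note in passing that this also shows $\int_\mathcal{A} p_D(a'\mid s,a,\sigma)\,da = 1$, so that the integrals in the NAMDP definition are genuine expectations, and that the support condition in Definition~\ref{def:noise_distribution} ensures $a'$ ranges over all of $\mathcal{A}$, pinning down the minimizer everywhere it is needed.

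With this in hand, I would write the minimizer as $Q(s,a') = \int_\mathcal{A} p_D(a'\mid s,a,\sigma)\,\bar{y}(s,a,a')\,da$ and substitute the definition of $\bar{y}$. Because the term $R(s,a)-\|a-a'\|_2^2$ does not depend on $(s',\bar{a})$, the inner expectation splits into two pieces. The first piece, $\int_\mathcal{A} p_D(a'\mid s,a,\sigma)\,(R(s,a)-\|a-a'\|_2^2)\,da$, is by definition precisely $R_\sigma(s,a')$. For the second piece I would push the $a$-integral through the $s'$-expectation and swap the order of integration, so that the bracket $\int_\mathcal{A} p_D(s'\mid s,a)\,p_D(a'\mid s,a,\sigma)\,da$ emerges — which is exactly $P_\sigma(s'\mid s,a')$. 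The bootstrapped contribution therefore collapses to $\gamma\,\mathbb{E}_{s'\sim P_\sigma(\cdot\mid s,a'),\,\bar{a}\sim\pi(\cdot\mid s')}[Q^\pi(s',\bar{a})]$.

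Finally I would recognize the resulting identity $Q(s,a') = R_\sigma(s,a') + \gamma\,\mathbb{E}_{s'\sim P_\sigma(\cdot\mid s,a'),\,\bar{a}\sim\pi(\cdot\mid s')}[Q^\pi(s',\bar{a})]$ as exactly the NAMDP Bellman backup for $\pi$ applied to $Q^\pi$. Since $Q^\pi$ is by definition the fixed point of that backup operator in the NAMDP, the right-hand side equals $Q^\pi(s,a')$, and hence $Q = Q^\pi$ on $\mathcal{A}$, as claimed. The only genuinely delicate point beyond the Bayes'-rule identification is the interchange of integration order in the transition term; I would justify it by the finiteness of the dataset and boundedness of $Q^\pi$ (equivalently, absolute convergence of the relevant integrals), which makes the Fubini-type swap legitimate.
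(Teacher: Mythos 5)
Your proposal is correct and follows essentially the same route as the paper's proof: both characterize the pointwise minimizer as the $p_D(a' \mid s, a, \sigma)$-weighted average of the targets (your Bayes'-rule identification is exactly the ratio the paper obtains), split it into the $R_\sigma$ and $P_\sigma$ pieces, and conclude via the NAMDP Bellman equation. The only difference is cosmetic — the paper justifies the pointwise optimality condition via the Euler equation for functionals, whereas you invoke the standard conditional-expectation characterization of squared-loss minimizers, which is the same fact; your explicit final step $Q = \mathcal{T}^\pi Q^\pi = Q^\pi$ using the fixed-point property is, if anything, slightly more careful than the paper's closing sentence.
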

    This offers a key insight into the behavior of PANI. As established in Definition \ref{def:noise_distribution}, the noised training distribution spans the entire action space, mitigating the overestimation of $Q$-values typically caused by bootstrapped updates using out-of-distribution (OOD) actions in offline RL~\citep{kumar2019stabilizing}. This directly leads to the result in the preceding theorem, demonstrating that minimizing the PANI objective yields the exact $Q$-value function of the NAMDP.
    
    An interesting trade-off emerges as the noise level $\sigma$ increases. On one hand, the balanced action coverage induced by $q_\sigma$ accelerates the convergence of $Q$; on the other hand, higher noise levels cause the NAMDP to deviate further from the original MDP, introducing additional bias. Further theoretical results, including error bounds and an analysis showing how PANI discourages the selection of OOD actions, are provided in Appendix~\ref{appendix:NAMDP}.
        
    \section{Noise Distribution Selection}
    \label{sec:NDS}
    The choice of noise distribution plays a key role in the performance of PANI. 
    As \(\sigma \rightarrow 0\), the effect of PANI diminishes, leaving the overestimation problem unaddressed. When the injected noise is too small, perturbed actions remain close to the original dataset actions, and the number of samples required to sufficiently cover the action space increases sharply. As a result, the $Q$-network receives limited updates in low-density regions, allowing overestimated $Q$-values for unseen actions to persist. In contrast, we next examine the side effects associated with high noise levels.
        
    Consider the formulation of \(p_D(a' \mid s, a, \sigma)\) in Eq. (\ref{eq:p_a'}), where the denominator is given by
    $
    p_\sigma(a' \mid s) = \int p_D(a \mid s) q_\sigma(a' \mid a)  da
    $.
    In practice, since the dataset is finite, this results in \(p_\sigma(a' \mid s)\) being a finite mixture of noise distributions centered at observed actions. As the noise level \(\sigma\) increases, the modes of this mixture tend to shift away from the original action modes. This shift can cause the distribution to place non-negligible probability mass on actions far from the dataset support, inadvertently distorting the weight function $p_D(a' \mid s, a, \sigma)$ to emphasize unreliable actions, which amplifies OOD overestimation.
    
    For example, when using a Gaussian noise distribution, the leftmost plot in Figure~\ref{fig:noise_dsitribution_selection} shows how increasing the noise level \(\sigma\) (the standard deviation) affects the noisy distribution \(p_\sigma\). At low noise levels (\(\sigma^2 = 0.5\)), the distribution retains distinct modes that reflect the dataset's original action structure. As \(\sigma\) increases (\(\sigma^2 \geq 0.75\)), these modes blur and merge, obscuring the boundaries between actions. The right plot shows how this mode collapse distorts the ground-truth $Q$-function in the NAMDP: with high noise levels, $Q$-values are overestimated in regions far from the dataset.
    
    High noise levels can also lead to sample inefficiency, resulting in ineffective $Q$-network updates. In typical RL settings with bounded action spaces, using an unbounded noise distribution such as Gaussian can cause sampled actions to fall outside the valid range, producing invalid or uninformative targets. Additionally, high noise levels reduce the precision of action sampling. For example, when two actions \(a_1\) and \(a_2\) are far apart, the likelihood of sampling a noisy action \(a'\) near either becomes nearly uniform. This flattens the sampling distribution, increases variance in the estimated targets, and ultimately reduces learning stability. These observations highlight the importance of carefully selecting and tuning the noise distribution in PANI to balance expressiveness and stability.

\begin{figure*}[t!]
    \centerline{\includegraphics[width=\columnwidth]{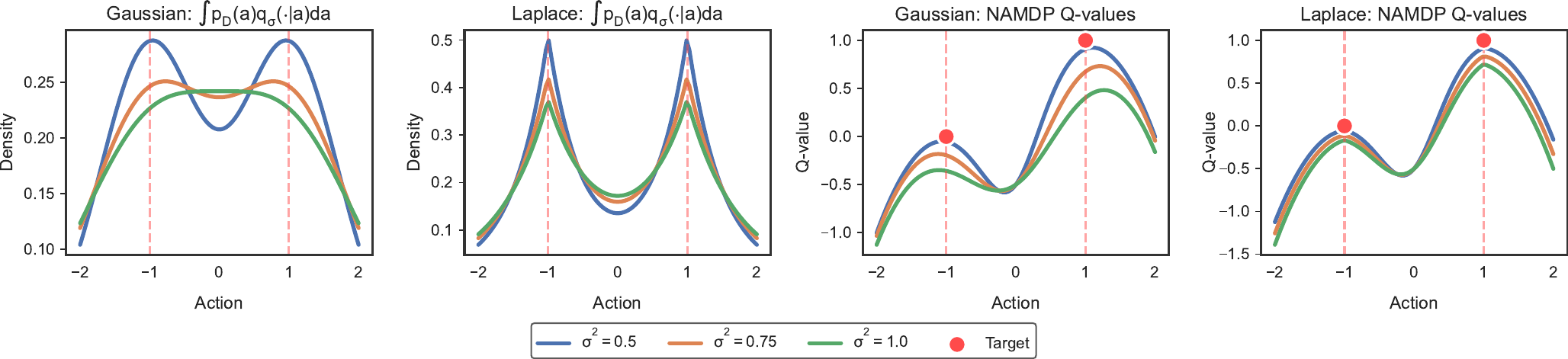}}
      \caption{
        Comparison of noise distributions (Gaussian and Laplace) and their impact on NAMDP. The left two plots show the noisy action distributions \(\int p_D(a) q_\sigma(\cdot \mid a)\,da\) for Gaussian (leftmost) and Laplace (second from left) distributions across different noisy levels (\(\sigma^2 = 0.5, 0.75, 1.0\)). The right two plots illustrate the corresponding NAMDP ground-truth $Q$-values under these distributions. 
        }
        \label{fig:noise_dsitribution_selection}
    \end{figure*}

\subsection{Distribution Selection}
Above, we analyzed the trade-offs associated with high and low noise levels. Here, we explore strategies for selecting noise distributions that balance these trade-offs. By tailoring the noise distribution, we aim to maintain broad coverage of the action space while avoiding issues such as mode collapse and sample inefficiency.

\paragraph{Leptokurtic Distributions} 
Leptokurtic distributions, which have sharper peaks and heavier tails than Gaussians, offer favorable properties for noise injection. The Laplace distribution is a representative example. Its sharp peak helps preserve distinct modes, mitigating the mode shift problem under high noise. Meanwhile, the heavier tails increase the likelihood of sampling distant actions, supporting updates in less-visited regions of the action space.

To further analyze the effect of noise shape, we compare Gaussian and Laplace distributions under equal variance. As shown in Figure~\ref{fig:noise_dsitribution_selection}, the comparison highlights how distribution shape affects mode preservation and the resulting $Q$-values in the NAMDP. The results show that even under fixed variance, the shape of the noise distribution has a significant impact on mode preservation and $Q$-value quality. These findings suggest that leptokurtic distributions are well-suited for PANI, as they mitigate mode collapse at high noise levels and improve coverage at low noise levels.

While leptokurtic distributions help alleviate high-noise issues such as mode collapse and instability they remain sensitive to the choice of noise scale. To address this limitation and improve robustness across a broader range of noise levels, we design the Hybrid Noise Distribution, which combines the concentrated mass of leptokurtic distributions with the broader coverage of a uniform component.

\paragraph{Hybrid Noise Distribution} 
\label{para:HND}

The hybrid noise distribution is designed to maintain robust performance across varying noise scales by combining two complementary components: a uniform mixture for broad exploration and an exponential scaling mechanism to induce leptokurtic behavior. Its construction involves two key steps.

First, to address sample inefficiency at high noise levels, we define a mixture of the original noise and a uniform distribution over the action space:
\[
q_t^\mathcal{A}(a' \mid a) = \alpha(t)\, \mathcal{U}(a' \mid \mathcal{A}) + (1 - \alpha(t))\, q_t(a' \mid a),
\]
where \(\mathcal{U}(a' \mid \mathcal{A})\) denotes the uniform distribution over \(\mathcal{A}\), and \(\alpha(t) = \min(t, 1)\). As \(t \to 1\), the distribution transitions smoothly to uniform, increasing coverage and reducing the likelihood of out-of-bound or degenerate samples.

To further improve robustness, we build on the motivation presented in Section~\ref{sec:motivation}, where diffusion-based methods leverage multi-scale noise to stabilize learning. Specifically, exponential scaling of Gaussian noise is known to increase kurtosis \citep{west1987scale}, which helps preserve mode structure at high noise, reducing instability across noise levels.
    
    Based on this insight, we define the hybrid noise distribution as:
    \[
    q_\sigma^\text{hyb}(a' \mid a) = \mathbb{E}_{\lambda \sim \mathcal{U}(\log \sigma, 0)} \left[ q_{t}^\mathcal{A}(a' \mid a) \right],
    \]
    where \(t = \exp(\lambda)\). Here, \(\sigma\) controls the overall noise level rather than serving as a standard deviation. This formulation combines the broad coverage of the uniform mixture with the scale-adaptive kurtosis induced by exponential sampling, leading to improved robustness.
    
    In our experiments, we used a Gaussian base noise defined as \(q_t(a' \mid a) = \mathcal{N}(a' \mid a, t^2 \mathbf{I})\). This setup balances exploration with local structure, resulting in stable and effective $Q$-network updates.
    
    To evaluate its effectiveness, we compared the hybrid noise distribution against both standard Gaussian noise and Laplace noise, the latter being naturally leptokurtic. As shown in Figure~\ref{fig:sub-a}, the hybrid noise distribution exhibited robustness across a wide range of noise levels compared to both baselines.

    \begin{table*}[!t]
      \centering
    \begin{center}
      \caption{Average normalized scores following \citep{fu2020d4rl}, reported on Gym-MuJoCo and AntMaze tasks. Results are computed over 5 random seeds with 10 trajectories per seed for Gym-MuJoCo, and 100 trajectories per seed for AntMaze. The $\pm$ symbol indicates standard error across training seeds. Bold numbers indicate the highest score for each task, and underlined values denote the second highest. Red text shows the performance gain relative to the original baseline. The Average (medium) score includes only medium, medium-replay, and medium-expert datasets.}
    \label{exp:Gym-MuJoCo-table}
      \begin{small}
        \begin{adjustbox}{max width=\textwidth}
        \begin{tabular}{cc ccc cc cc cc}
        \toprule
        & & \multicolumn{3}{c}{Diffusion-policy} & \multicolumn{2}{c}{Diffusion-based} & \multicolumn{2}{c}{Diffusion-free} & \multicolumn{2}{c}{Ours} \\
        \cmidrule(lr){3-5} \cmidrule(lr){6-7} \cmidrule(lr){8-9} \cmidrule(lr){10-11}
        \textbf{Dataset} & \textbf{Environment} & \textbf{SfBC} & \textbf{D-QL} & \textbf{QGPO} & \textbf{SRPO} & \textbf{DTQL} & \textbf{IQL} & \textbf{TD3+BC} & \textbf{IQL-AN} & \textbf{TD3-AN} \\
        \midrule
        \multicolumn{1}{l}{medium} & \multicolumn{1}{l}{halfcheetah} & 45.9 & 51.1 & 54.1 & \underline{60.4} & 57.9 & 50.0 & 54.7 & 55.4 $\pm$ 0.3 & \textbf{61.5} $\pm$ 0.3 \\
        \multicolumn{1}{l}{medium} & \multicolumn{1}{l}{hopper} & 57.1 & 90.5 & 98.0 & 95.5 & \textbf{99.6} & 65.2 & 60.9 & \underline{98.4} $\pm$ 1.2 & 98.2 $\pm$ 0.9 \\
        \multicolumn{1}{l}{medium} & \multicolumn{1}{l}{walker2d} & 77.9 & 87.0 & 86.0 & 84.4 & \textbf{89.4} & 80.7 & 77.7 & 87.5 $\pm$ 3.7 & \underline{88.5} $\pm$ 0.6 \\
        \midrule
        \multicolumn{1}{l}{medium-replay} & \multicolumn{1}{l}{halfcheetah} & 37.1 & 47.8 & 47.6 & \underline{51.4} & 50.9 & 42.1 & 45.0 & 49.5 $\pm$ 0.4 & \textbf{53.3} $\pm$ 0.3 \\
        \multicolumn{1}{l}{medium-replay} & \multicolumn{1}{l}{hopper} & 86.2 & 100.7 & 96.9 & \underline{101.2} & 100.0 & 89.6 & 55.1 & 100.8 $\pm$ 0.4 & \textbf{102.3} $\pm$ 0.2 \\
        \multicolumn{1}{l}{medium-replay} & \multicolumn{1}{l}{walker2d} & 65.1 & \textbf{95.5} & 84.4 & 84.6 & 88.5 & 75.4 & 68.0 & \underline{88.8} $\pm$ 3.6 & 87.8 $\pm$ 6.3 \\
        \midrule
        \multicolumn{1}{l}{medium-expert} & \multicolumn{1}{l}{halfcheetah} & 92.6 & \textbf{96.8} & 93.5 & 92.2 & 92.7 & 92.7 & 89.1 & 89.9 $\pm$ 2.3 & \underline{96.4} $\pm$ 0.8 \\
        \multicolumn{1}{l}{medium-expert} & \multicolumn{1}{l}{hopper} & 108.6 & \textbf{111.1} & 108.0 & 100.1 & \underline{109.3} & 85.5 & 87.8 & 105.3 $\pm$ 3.7 & 108.8 $\pm$ 0.9 \\
        \multicolumn{1}{l}{medium-expert} & \multicolumn{1}{l}{walker2d} & 109.8 & 110.1 & 110.7 & \underline{114.0} & 110.0 & 112.1 & 110.4 & 109.6 $\pm$ 0.6 & \textbf{114.9} $\pm$ 0.2 \\
        \midrule
        \multicolumn{1}{l}{expert} & \multicolumn{1}{l}{halfcheetah} & - & - & - & - & - & \underline{95.5} & 93.4 & 93.8 $\pm$ 0.2 & \textbf{104.4} $\pm$ 3.8 \\
        \multicolumn{1}{l}{expert} & \multicolumn{1}{l}{hopper} & - & - & - & - & - & 108.8 & \textbf{109.6} & 108.6 $\pm$ 2.6 & \underline{109.0} $\pm$ 3.0 \\
        \multicolumn{1}{l}{expert} & \multicolumn{1}{l}{walker2d} & - & - & - & - & - & 96.9 & \underline{110.0} & 108.2 $\pm$ 0.2 & \textbf{112.8} $\pm$ 0.2 \\
        \midrule
        \multicolumn{1}{l}{full-replay} & \multicolumn{1}{l}{halfcheetah} & - & - & - & - & - & 75.0 & 75.0 & \underline{78.3} $\pm$ 0.1 & \textbf{81.2} $\pm$ 0.5 \\
        \multicolumn{1}{l}{full-replay} & \multicolumn{1}{l}{hopper} & - & - & - & - & - & 104.4 & 97.9 & \underline{105.9} $\pm$ 0.4 & \textbf{108.3} $\pm$ 0.1 \\
        \multicolumn{1}{l}{full-replay} & \multicolumn{1}{l}{walker2d} & - & - & - & - & - & 97.5 & 90.3 & \underline{100.6} $\pm$ 1.7 & \textbf{103.8} $\pm$ 0.7 \\
        \midrule
        \multicolumn{1}{l}{random} & \multicolumn{1}{l}{halfcheetah} & - & - & - & - & - & 19.5 & \textbf{30.9} & 24.3 $\pm$ 3.1 & \underline{30.0} $\pm$ 0.5 \\
        \multicolumn{1}{l}{random} & \multicolumn{1}{l}{hopper} & - & - & - & - & - & \textbf{10.1} & 8.5 & 9.0 $\pm$ 0.2 & \underline{10.0} $\pm$ 0.6 \\
        \multicolumn{1}{l}{random} & \multicolumn{1}{l}{walker2d} & - & - & - & - & - & \underline{11.3} & 2.0 & \textbf{19.4} $\pm$ 2.8 & 5.8 $\pm$ 0.7 \\
        \midrule
        \multicolumn{2}{l}{\textbf{Average (Gym-MuJoCo)}} & - & - & - & - & - & 72.9 & 70.3 & \underline{79.6} {\color{red}(+6.7)} & \textbf{82.1} {\color{red}(+11.7)} \\
        \multicolumn{2}{l}{\textbf{Average (medium)}} & 75.6 & 88.0 & 86.6 & 87.1 & \underline{88.7} & 77.0 & 72.1 & 87.2 {\color{red}(+10.2)} & \textbf{90.2} {\color{red}(+18.1)} \\
        \midrule
        \multicolumn{1}{l}{-} & \multicolumn{1}{l}{umaze} & 92.0 & 93.4 & 96.4 & \underline{97.1} & 92.6 & 83.3 & 66.3 & 91.2 $\pm$ 1.1 & \textbf{98.4} $\pm$ 0.5 \\
        \multicolumn{1}{l}{diverse} & \multicolumn{1}{l}{umaze} & \textbf{85.3} & 66.2 & 74.4 & \underline{82.1} & 74.4 & 70.6 & 53.8 & 68.0 $\pm$ 3.2 & 74.6 $\pm$ 4.9 \\
        \midrule
        \multicolumn{1}{l}{play} & \multicolumn{1}{l}{medium} & 81.3 & 76.6 & \underline{83.6} & 80.7 & 76.0 & 64.6 & 26.5 & 74.4 $\pm$ 3.7 & \textbf{83.8} $\pm$ 2.7 \\
        \multicolumn{1}{l}{diverse} & \multicolumn{1}{l}{medium} & 82.0 & 78.6 & \underline{83.8} & 75.0 & 80.6 & 61.7 & 25.9 & 75.2 $\pm$ 1.6 & \textbf{85.8} $\pm$ 1.2 \\
        \midrule
        \multicolumn{1}{l}{play} & \multicolumn{1}{l}{large} & 59.3 & 46.4 & \textbf{66.6} & 53.6 & 59.2 & 42.5 & 0.0 & 49.4 $\pm$ 3.0 & \underline{65.4} $\pm$ 2.7 \\
        \multicolumn{1}{l}{diverse} & \multicolumn{1}{l}{large} & 45.5 & 56.6 & \textbf{64.8} & 53.6 & \underline{62.0} & 27.6 & 0.0 & 52.8 $\pm$ 2.6 & 58.0 $\pm$ 5.1 \\
        \midrule
        \multicolumn{2}{l}{\textbf{Average (AntMaze)}} & 74.2 & 69.6 & \textbf{78.3} & 73.7 & 74.1 & 58.4 & 28.8 & 68.5 {\color{red}(+10.1)} & \underline{77.7} {\color{red}(+48.9)} \\
        \bottomrule
        \end{tabular}
        \end{adjustbox}
        \end{small}

    \end{center}
    \end{table*}
    
    \section{Experiments}
    \label{sec:experiments}
    
    In this section, we present experimental results demonstrating the effectiveness of the Penalized Action Noise Injection (PANI) method. We first show that applying PANI to baseline algorithms such as TD3 \citep{fujimoto2018addressing} and IQL \citep{kostrikov2021offline} leads to significant performance improvements across various datasets and environments in the D4RL benchmark. Next, we conduct a series of ablation studies to answer the following research questions: \textbf{Q1}. Does the Hybrid Noise Distribution provide more robust performance across different noise scales compared to Gaussian and Laplace noise? \textbf{Q2}. Can the optimal noise scale be selected based on the action coverage of the dataset?  \textbf{Q3}. Is PANI computationally more efficient than diffusion-based methods? \textbf{Q4}. Does PANI effectively reduce OOD $Q$-value overestimation? \textbf{Q5}. Does PANI consistently improve performance when applied to other state-of-the-art offline RL algorithms?

    \subsection{D4RL Evaluation}
    
    To evaluate the effectiveness of PANI, we conducted experiments on the D4RL benchmark using the off-policy algorithm TD3 \citep{fujimoto2018addressing} and the offline RL algorithm IQL \citep{kostrikov2021offline}. We applied PANI to each method, denoted as TD3-AN and IQL-AN in Table~\ref{exp:Gym-MuJoCo-table}, and observed significant performance gains over IQL and the offline version of TD3, TD3+BC~\citep{fujimoto2021minimalist}. To ensure a fair comparison, we used the extensively tuned baseline results provided by ReBRAC \citep{tarasov2024revisiting}. In addition, we included results from state-of-the-art diffusion-based methods, including Diffusion-QL \citep{wang2022diffusion}, SfBC \citep{chen2022offline}, QGPO \citep{lu2023contrastive}, DTQL \citep{chen2024diffusion}, and SRPO \citep{chen2023score}. Full implementation details, hyperparameter configurations, complete tables, and learning curves are provided in Appendix~\ref{sec:implementation}.
    
    \subsection{Ablation Study}
    
    We now provide detailed analyses addressing the five research questions introduced above. Specifically, we examine the impact of different noise distributions, noise scales, and dataset action coverage on performance. We also evaluate the computational efficiency of PANI and assess its generalizability by applying it to other state-of-the-art offline RL algorithms. 
    
    \paragraph{Noise Distribution}
    The choice of noise distribution is a critical factor in PANI. As discussed in Section~\ref{sec:NDS}, the hybrid noise distribution is specifically designed to address the sensitivity of performance to variations in noise scale. To evaluate its effectiveness, we compared it against both the Gaussian and Laplace Noise Distributions under identical hyperparameter settings, using different ranges of noise levels tailored to the characteristics of each distribution.

    \begin{figure}[t!]
      \centering
      
      \begin{subfigure}[b]{0.72\textwidth}
        \includegraphics[width=\linewidth]{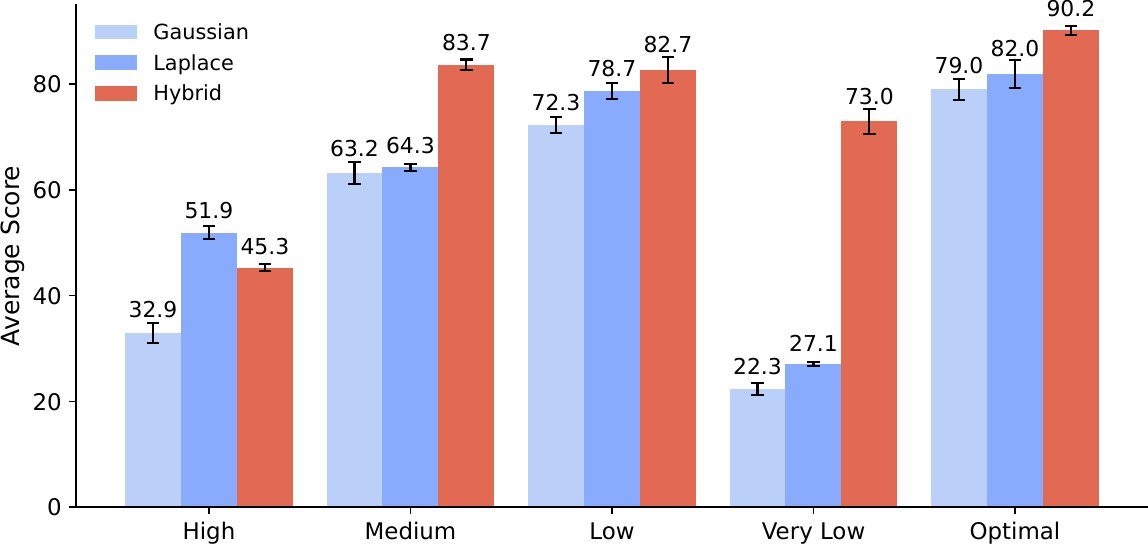}
        \caption{Average score of TD3-AN on medium, medium-replay, and medium-expert datasets in Gym-MuJoCo, comparing different noise distributions}
        \label{fig:sub-a}
      \end{subfigure}
      \hfill
      \begin{subfigure}[b]{0.26\textwidth}
        \includegraphics[width=\linewidth]{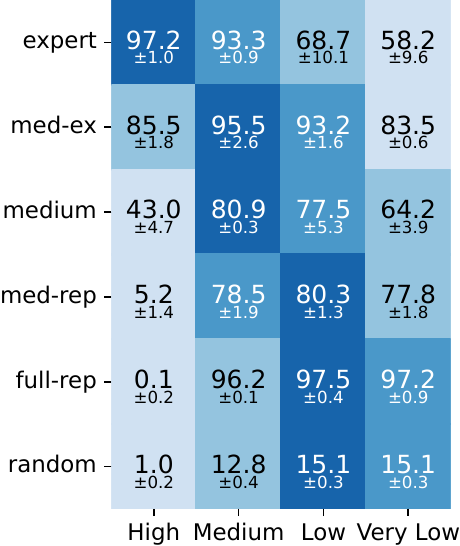}
        \caption{Performance across noise scales by dataset type}
        \label{fig:sub-b}
      \end{subfigure}
      \caption{
Comparison of noise distribution and noise scale effects in Gym-MuJoCo.
High, Medium, Low, and Very Low represent progressively lower noise levels, with specific ranges adjusted for each distribution.
Error bars indicate the standard error of the average score across 5 training seeds.
(a) shows the average score of TD3-AN comparing different noise distributions, with Optimal representing the best-performing noise scale selected individually for each distribution.
(b) shows the average performance of the hybrid noise distribution across noise scales for each dataset type.
}
    \end{figure}
    
    \begin{wraptable}{r}{0.32\textwidth}
    \centering
    \caption{Training Wall-Clock Time ($10^6$ Steps).}
    \resizebox{\linewidth}{!}{
    \begin{tabular}{l c}
        \toprule
        \textbf{Algorithm} & \textbf{Training Time} \\
        \midrule
        IQL     & 9m 33s \\
        IQL-AN  & 10m 22s (+9\%) \\ 
        TD3     & 15m 45s \\
        TD3-AN  & 16m 14s (+3\%) \\
        \bottomrule
    \end{tabular}
    }
    
    \vspace{-0.5cm}
    \label{tab:wallclock_time}
    \end{wraptable}
    
    For the hybrid noise distribution, we tested a range of noise levels appropriate to its characteristics. Similarly, for the Gaussian and Laplace noise distributions, we selected noise levels that reflect their typical behavior and scale, ensuring a fair comparison across all methods. As shown in Figure~\ref{fig:sub-a}, the hybrid noise distribution outperformed the Gaussian and Laplace noise distributions. These results highlight the hybrid distribution's ability to balance robustness and coverage, making it well-suited for a variety of environments. Detailed settings for each noise scale, along with full results and learning curves, are provided in Appendix~\ref{subsec:results}.

    \paragraph{Noise Scale Selection}
    
    With an appropriately selected $\sigma$ value, PANI with the hybrid noise distribution outperforms extensively tuned baseline algorithms, even with all hyperparameters fixed, as shown in Figure~\ref{fig:sub-a}. This demonstrates its robustness and practical applicability in real-world offline reinforcement learning scenarios where hyperparameter tuning is often infeasible.
    
    Moreover, adapting the noise scale to the characteristics of the dataset can further enhance performance, as illustrated in Figure~\ref{fig:sub-b}. Specifically, \textbf{datasets with low action diversity}, such as expert datasets, benefit from higher noise levels that promote broader updates where fine-grained action distinctions are less important. For \textbf{datasets with heterogeneous action distributions}, such as medium-expert datasets, medium noise levels strike a balance between broad updates and preserving meaningful action distinctions. \textbf{Medium-diversity datasets}, such as medium datasets, also perform best with medium noise levels, balancing exploration and the preservation of meaningful action distinctions. In contrast, \textbf{datasets with high action diversity}, including random, medium-replay, and full-replay datasets, require lower noise levels to preserve fine distinctions between actions, which is critical for achieving good performance. These results suggest a practical guideline for selecting the noise scale based on the level of action diversity in the dataset.
    
    \paragraph{Compute Resource} 
    We compared the training time of PANI-enhanced methods with their original counterparts. As shown in Table~\ref{tab:wallclock_time}, PANI introduces minimal computational overhead while maintaining efficient training times. Moreover, IQL \citep{kostrikov2021offline} and TD3 \citep{fujimoto2018addressing} are significantly faster than diffusion-based methods, which often require several times more compute~\citep{hansen2023idql}. 
    In addition, by using lightweight MLP policies without the denoising process required by diffusion models, PANI enables much faster inference \citep{chen2023score}, making it suitable for real-time or resource-limited applications. Detailed results, including hardware specifications, are provided in Appendix~\ref{sec:implementation}.
    
    \begin{figure}[t]
      \centering
      \includegraphics[width=1.0\linewidth]{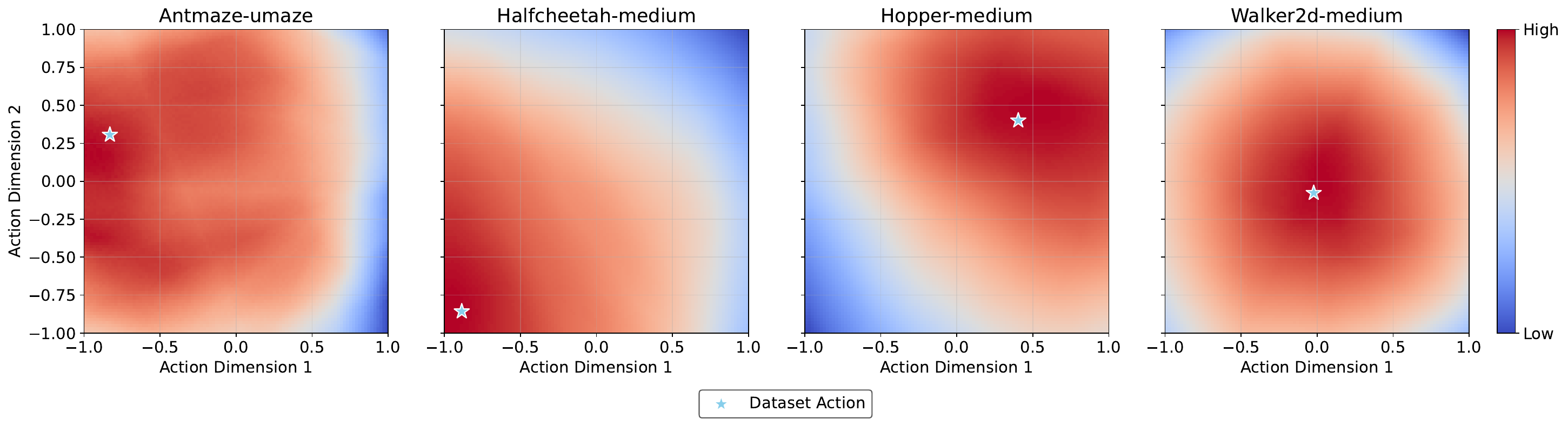}
      \caption{
    Learned $Q$-value landscapes in Gym-MuJoCo datasets. Colors represent $Q$-value magnitudes (red: high, blue: low), and the white star indicates the dataset action. These plots illustrate how the learned $Q$-functions evaluate dataset and surrounding actions across different environments.}
      \label{fig:q_vis}
    
    \end{figure}
    
    \begin{figure}[t]
      \centering
      \begin{subfigure}[b]{0.68\textwidth}
      \centering
      \resizebox{\linewidth}{!}{
      \begin{tabular}{lcccc} 
      \toprule
      Dataset & TD3+BC & TD3-AN & IQL & IQL-AN \\
      \midrule
      halfcheetah-expert &0.014 & \textbf{0.002} & 0.014 & \textbf{0.001} \\
      halfcheetah-medium &0.112 & \textbf{0.004} & 0.211 & \textbf{0.006} \\
      hopper-expert &0.483 & \textbf{0.034} & 0.425 & \textbf{0.036} \\
      hopper-medium &0.379 & \textbf{0.036} & 0.360 & \textbf{0.042} \\
      walker2d-expert &0.322 & \textbf{0.000} & 0.327 & \textbf{0.001} \\
      walker2d-medium &0.316 & \textbf{0.002} & 0.147 & \textbf{0.004} \\
      \bottomrule
      \end{tabular}}
      \caption{Measured probability of OOD overestimation $P(Q(s, a') > Q(s, a))$ across datasets, comparing baseline and PANI-enhanced methods.}
      \label{tab:ood_probability}
        \label{tab:ood_prob}
      \end{subfigure}
      \hfill
      \begin{subfigure}[b]{0.3\textwidth}
        \includegraphics[width=\linewidth]{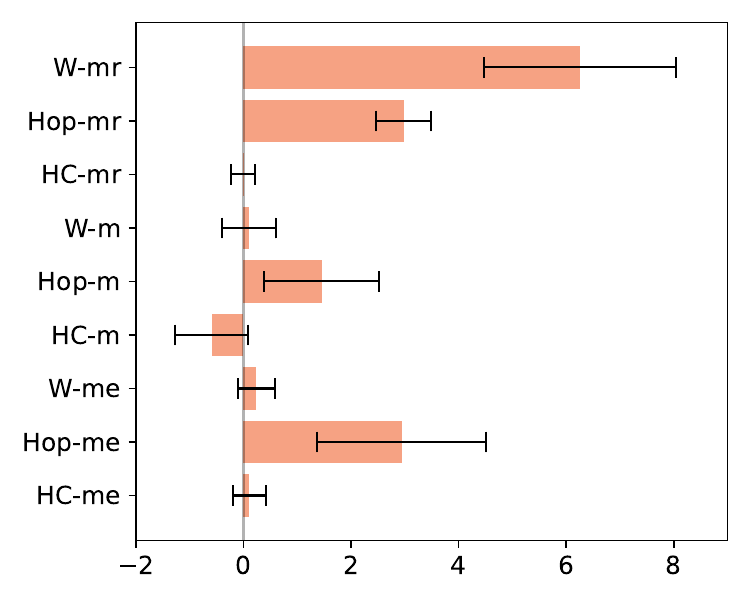}
        \caption{Performance difference between QGPO and QGPO-AN.}
        \label{fig:QGPO-AN}
      \end{subfigure}
      \caption{Empirical evaluation of OOD overestimation reduction and performance improvement. 
      (a) shows the probability of OOD overestimation $P(Q(s, a') > Q(s, a))$, where $(s, a) \sim D$ and $a'$ is uniformly sampled, evaluated across different datasets (all 95\% confidence intervals $<$ 0.001). 
      (b) shows the performance difference of QGPO-AN relative to QGPO across multiple tasks.}
    
    \end{figure}

    \paragraph{Overestimation Mitigation}  
    We evaluate the effectiveness of PANI in reducing OOD action overestimation. First, as shown in Figure~\ref{fig:q_vis}, PANI predicts lower $Q$-values for actions that deviate from the dataset actions in Gym-MuJoCo, indicating reduced overestimation in out-of-distribution regions.
     Second, we measure the probability $P(Q(s, a') > Q(s, a))$, where $a$ is a dataset action and $a'$ is uniformly sampled. As summarized in Table~\ref{tab:ood_prob}, TD3-AN and IQL-AN significantly lower this probability across all datasets, demonstrating improved robustness compared to their base versions.
    
    \paragraph{PANI with Other Algorithms}
    We also applied PANI to QGPO~\citep{lu2023contrastive}, a diffusion-based $Q$-learning algorithm. Using the same hyperparameters $(\beta_Q, K)$ as in the original QGPO for fair comparison, QGPO-AN showed performance improvements, as shown in Figure~\ref{fig:QGPO-AN}, demonstrating the generality of PANI. See Appendix~\ref{subsec:QGPO-AN} for implementation details.
    
    \section{Conclusion}
    We proposed Penalized Action Noise Injection (PANI), a simple method to mitigate OOD overestimation in offline RL with fixed datasets. By injecting noise to promote action space coverage, PANI improves existing algorithms with minimal changes. Our results show that PANI reduces overestimation and achieves substantial performance improvements over fine-tuned baselines across diverse environments.

    \paragraph{Limitations}  
    While PANI offers a robust and simple solution, its effectiveness is influenced by the choice of noise distribution. The hybrid noise distribution helps mitigate sensitivity to noise scale variations, but selecting an optimal noise scale without prior dataset information remains challenging. This issue is particularly pronounced in offline RL, where the lack of direct environment interaction limits the ability to fine-tune noise parameters. Developing adaptive or data-driven noise selection strategies could further enhance PANI's applicability across diverse datasets and environments.
\bibliographystyle{plainnat}
\bibliography{neurips_2025}
\newpage
\appendix
\section*{Appendix}
\addcontentsline{toc}{section}{Appendix}

\section{Related Works}
\label{appendix:related_works}

\paragraph{Offline RL}  
Various algorithms have been proposed to address the issue of OOD action overestimation in offline reinforcement learning. These approaches can be broadly categorized into three groups: (1) methods that enforce conservative $Q$-value estimates, (2) methods that apply penalties during policy extraction, and (3) methods that leverage generative models, either as policies or as regularizers, to guide action selection away from OOD actions.

Notable examples in the first category include CQL~\citep{kumar2020conservative}, which explicitly trains the Q-network to assign lower values to OOD actions, and ensemble-based approaches such as EDAC~\citep{an2021uncertainty} and RORL~\citep{yang2022rorl}, which mitigate overestimated $Q$-values by aggregating predictions from multiple Q-networks.

In the second category, policy-level regularization techniques mitigate OOD actions through explicit penalties. AWAC \citep{nair2020awac} introduces KL regularization, while TD3+BC \citep{fujimoto2021minimalist} and ReBRAC \citep{tarasov2024revisiting} apply L2 penalties to keep the learned policy close to the behavior policy. Anti-exploration approaches such as TD3-CVAE \citep{rezaeifar2022offline}, SAC-RND \citep{nikulin2023anti}, and SAC-DRND \citep{yang2024exploration} explicitly discourage selection of uncertain or unfamiliar actions.

The third category leverages generative models either as behavior policies or as regularizers during policy learning to avoid OOD actions. Early approaches such as BCQ~\citep{fujimoto2019off} use a VAE to generate filtered actions, avoiding OOD actions during policy improvement. More recent diffusion-based methods, including Diffusion-QL~\citep{wang2022diffusion}, SfBC~\citep{chen2022offline}, and IDQL~\citep{hansen2023idql}, construct denoising-based behavior policies. SRPO~\citep{chen2023score} and DTQL~\citep{chen2024diffusion} regularize policy learning by constraining action selection to stay close to the dataset, thereby avoiding OOD actions. In contrast, QGPO~\citep{lu2023contrastive} combines Q-guidance with generative sampling to guide the policy toward high-value actions.

In contrast to these approaches, PANI directly addresses OOD overestimation by modifying the Q-learning objective to include penalized updates on noise-injected actions.

\paragraph{Data Augmentation in RL}  
While noise-based augmentations are used in RL to enhance sample efficiency or generalization, most existing approaches focus on representation learning rather than addressing OOD overestimation.

For example, RAD \citep{laskin2020reinforcement} applies image-based augmentations to observations in online RL to improve generalization. S4RL \citep{sinha2022s4rl} adapts this idea to the offline setting, applying RAD-style augmentation to states in static datasets. These methods target representation learning rather than correcting distributional mismatch in action values.

Some works consider action noise in the context of differentiable simulation. For instance, \citet{qiao2021efficient} use noisy actions to approximate gradients and transitions in simulator-based environments, enabling more efficient learning. In contrast, our method injects noise in the offline setting without access to a simulator, using it not to create synthetic targets but to penalize $Q$-values for out-of-distribution actions.

\paragraph{Various Noise Scales and Leptokurtic Noise Distributions}  
Noise variation and scaling have been extensively studied in generative modeling and denoising frameworks. MDNS \citep{li2023learning} explores training with diverse noise levels in denoising score matching, while NCSN \citep{song2019generative} conditions training on continuous noise scales. Diffusion-based methods such as Score Distillation Sampling \citep{poole2022dreamfusion} leverage outputs across multiple noise levels to stabilize generation.

While most prior work uses Gaussian noise, recent studies explore alternative noise shapes to improve robustness. Heavy-Tailed Denoising Score Matching (HTDSM) \citep{deasy2021heavy} and t-EDM \citep{pandey2024heavy} employ leptokurtic noise distributions to better handle outliers and data sparsity.

Our method draws on these insights, but applies them in a context: penalizing noisy actions in the offline RL setting to systematically reduce OOD value overestimation.

\section{Noisy Action Markov Decision Process}
\label{appendix:NAMDP}

In this section, we revisit the definition of the Noisy Action Markov Decision Process (NAMDP) and the associated noise distribution. We provide formal proofs for the key theorems introduced in the main text and offer additional theoretical insights under specific assumptions.

\begin{definition}[Noise Distribution]
A noise distribution \(q_\sigma\) is a distribution parameterized by \(a \in \mathcal{A}\) and a noise scale \(\sigma > 0\), with support \(\text{supp}(q_\sigma)\) such that the action space \(\mathcal{A}\) is a subset of its support.
\end{definition}

\begin{definition}[Noisy Action MDP (NAMDP)]
    Given a noise distribution \(q_\sigma\), a finite dataset \(\mathcal{D} = \{(s_i, a_i, r_i, s'_i)\}_i\) and a dataset distribution \(p_D\), the NAMDP is defined as an MDP \((\mathcal{S}, \mathcal{A}, R_\sigma, P_\sigma, \gamma)\), where:
    \begin{equation}
    \textcolor{lightSkyBlue}{P_\sigma(s' \mid s, a') = \int_\mathcal{A} p_D(s' \mid s, a)\,{p_D(a' \mid s, a, \sigma)}\, da},
    \end{equation}
    \begin{equation}
    \textcolor{lightGreen}{R_\sigma(s, a') = \int_\mathcal{A} p_D(a' \mid s, a, \sigma)\,\bigl(R(s, a) - \|a - a'\|_2^2\bigr)\, da},
    \end{equation}
    \begin{equation}
        \textcolor{red}{p_D(a' \mid s, a, \sigma) = \frac{q_\sigma(a' \mid a) p_D(a|s)}{\int q_\sigma(a' \mid a) p_D(a|s) \, da}}.
    \end{equation}
\end{definition}

\subsection{Connection between PANI and NAMDP}

In this subsection, we present a key result that connects the PANI objective with the Noisy Action Markov Decision Process (NAMDP). Specifically, we show that Q-learning with PANI corresponds to learning the Q-function of the NAMDP under the defined noise distribution.

\begin{theorem}[PANI Objective]
    Suppose that the function \(Q\) minimizes the following objective:
    \begin{equation}
    \label{eq:NAMDPobjective_}
    \mathbb{E}_{a \sim p_D(\cdot \mid s),\, a' \sim q_\sigma(\cdot \mid a)}\left[ \| Q(s, a') - \bar{y}(s, a, a') \|_2^2 \right],
    \end{equation}
    where the target value \(\bar{y}(s, a, a')\) is defined as:
    \begin{align*}
    \mathbb{E}_{\substack{s' \sim p_D(\cdot \mid s, a), \bar{a} \sim \pi(\cdot \mid s')}}\Bigl[ R(s, a) - \|a - a'\|_2^2 + \gamma Q^\pi(s', \bar{a}) \Bigr].
    \end{align*}
    Then, the function \(Q\) is the $Q$-value function of \(\pi\) in the NAMDP.
\end{theorem}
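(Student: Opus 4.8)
The plan is to exploit the elementary fact that the unconstrained minimizer of a squared-error objective is a conditional expectation, and then to show that this conditional expectation reproduces exactly the Bellman equation defining the $Q$-value function of $\pi$ in the NAMDP. Since the objective in \eqref{eq:NAMDPobjective_} places no restriction on the function class, its pointwise minimizer at each pair $(s, a')$ is
\[
Q(s, a') = \mathbb{E}\bigl[\bar{y}(s, a, a') \mid s, a'\bigr],
\]
where the conditioning is taken with respect to the sampling law $a \sim p_D(\cdot \mid s)$, $a' \sim q_\sigma(\cdot \mid a)$. The first key step is therefore to identify the posterior law of $a$ given $(s, a')$ under this joint distribution.

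First I would apply Bayes' rule to the joint density $p_D(a \mid s)\, q_\sigma(a' \mid a)$. Its $a'$-marginal is precisely the convolution $p_\sigma(a' \mid s) = \int q_\sigma(a' \mid a)\, p_D(a \mid s)\, da$, so the posterior of $a$ coincides with the weight function $p_D(a' \mid s, a, \sigma)$ from its defining equation. The only subtlety is notational: although this quantity is written with $a'$ before the conditioning bar, integrating it against $da$ (as is done throughout the NAMDP definition) treats it as a density over $a$, and I would verify that it integrates to one over $a$ — which is immediate, since its numerator integrates to its denominator. Strict positivity of $p_\sigma(a' \mid s)$ on the relevant support, needed for well-definedness, is guaranteed by Definition~\ref{def:noise_distribution}, which ensures $q_\sigma$ covers all of $\mathcal{A}$.

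With the posterior identified, I would substitute the definition of $\bar{y}$ and split the conditional expectation into a reward term and a bootstrap term. The reward term, $\int p_D(a' \mid s, a, \sigma)\bigl(R(s, a) - \|a - a'\|_2^2\bigr)\, da$, matches the definition of $R_\sigma(s, a')$ verbatim. For the bootstrap term I would write out the nested expectations over $a$ (posterior), $s' \sim p_D(\cdot \mid s, a)$, and $\bar{a} \sim \pi(\cdot \mid s')$, then interchange the order of integration (Fubini) to collapse the inner integral $\int p_D(s' \mid s, a)\, p_D(a' \mid s, a, \sigma)\, da$ into exactly $P_\sigma(s' \mid s, a')$. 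This produces $\gamma\, \mathbb{E}_{s' \sim P_\sigma(\cdot \mid s, a'),\, \bar{a} \sim \pi}[Q^\pi(s', \bar{a})]$.

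Combining the two terms shows the minimizer satisfies $Q(s, a') = R_\sigma(s, a') + \gamma\, \mathbb{E}_{s' \sim P_\sigma,\, \bar{a} \sim \pi}[Q^\pi(s', \bar{a})]$, which is precisely the Bellman consistency equation characterizing the $Q$-value function of $\pi$ in the NAMDP; reading $Q^\pi$ as the NAMDP value function of $\pi$ makes the conclusion $Q = Q^\pi_\sigma$ immediate, and invoking uniqueness of the fixed point (from $\gamma < 1$) removes any residual circularity. I expect the main obstacle to be the bookkeeping in the posterior-identification and Fubini steps: one must track carefully which variable each integral runs over, since the weight function's notation suppresses that it is a density in $a$, and confirm that the $\|a - a'\|_2^2$ penalty — being a function of the integrated-out variable $a$ — is absorbed into $R_\sigma$ rather than into $P_\sigma$.
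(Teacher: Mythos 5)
Your proof is correct and is essentially the paper's own argument: the paper derives the same pointwise first-order condition via the Euler equation for functionals \citep{gelfand2000calculus}, which is just a heavier-handed way of stating your observation that the unconstrained $L^2$ minimizer is the conditional expectation under the posterior $p_D(a'\mid s,a,\sigma)$, and then performs the identical split into $R_\sigma$ and $P_\sigma$ via Fubini. Your closing remark on fixed-point uniqueness is a small bonus the paper leaves implicit.
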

\begin{proof}
Let us derive the optimal Q-function for the NAMDP by applying the Euler equation for functionals \citep{gelfand2000calculus}. Consider a functional of the form:
\begin{equation}
    \label{eq:euler}
    J\left[u\right] = \int \cdots \int_\mathbb{R} F(x_1,  \dots, x_n, u) dx_1 \cdots dx_n
\end{equation}
which depends on \(n\) independent variables \(x_1, \dots, x_n\) and an unknown function \(u\) of these variables. For the functional to be optimal, the following condition must hold:
\begin{equation}
    F_u(x) = 0 \quad \text{for all} \quad x
\end{equation}
The NAMDP objective \cref{eq:NAMDPobjective_} can similarly be expressed as a functional:
\begin{align}
    J\left[Q\right] &= \mathbb{E}_{\substack{a \sim p_D(\cdot|s) \\ a' \sim q_\sigma(\cdot|a)}} \Bigg[ \Big\| Q(s, a') - \mathbb{E}_{\substack{s' \sim p_D(\cdot|s, a) \\ \bar{a} \sim \pi(\cdot|s')}} \big[ R(s, a) - \|a'-a\|_2^2+  \gamma Q^\pi(s', \bar{a}) \big] \Big\|_2^2 \Bigg] 
    \\&=  \int_{\mathbb{R}^n}\int_{\mathcal{A}} p_D(a|s)q_\sigma(a'|a) \notag
    \\& \quad \quad \quad \quad \quad \Big\| Q(s, a') - \mathbb{E}_{\substack{s' \sim p_D(\cdot|s, a) \\ \bar{a} \sim \pi(\cdot|s')}} \big[ R(s, a) - \|a' - a\|_2^2 +  \gamma Q^\pi(s', \bar{a}) \big] \Big\|_2^2 da  da'
    \\&= \int_{\mathbb{R}^n} F({a'}_1, ..., {a'}_n, Q) da'
\end{align}
where 
\begin{align}
F(&{a'}_1, ..., {a'}_n, Q) = \\& \int_{\mathcal{A}} p_D(a|s)q_\sigma(a'|a) \Big\| Q(s, a') - \mathbb{E}_{\substack{s' \sim p_D(\cdot|s, a) \\ \bar{a} \sim \pi(\cdot|s')}} \big[ R(s, a) - \|a' - a\|_2^2 +  \gamma Q^\pi(s', \bar{a}) \big] \Big\|_2^2 da 
\end{align}

Suppose that \(Q^*\) is the minimizer of the NAMDP objective \eqref{eq:NAMDPobjective_}. By the Euler equation for the functional \eqref{eq:euler}, we have:

\begin{align}
&F_{Q^*}(a') = \\& 2 \int_\mathcal{A} p_D(a|s)q_\sigma(a'|a)\left(Q^*(s, a') - \mathbb{E}_{\substack{s' \sim p_D(\cdot|s, a) \\ \bar{a} \sim \pi(\cdot|s')}} \big[ R(s, a) - \|a'-a\|_2^2 +  \gamma Q^\pi(s', \bar{a}) \big]\right) da = 0
\end{align}
Thus,
\begin{align}
    & \textcolor{red}{\int_\mathcal{A}  p_D(a|s)q_\sigma(a'|a)} Q^*(s, a') \textcolor{red}{da} \\&= \int_\mathcal{A} \textcolor{red}{p_D(a|s)q_\sigma(a'|a)} \mathbb{E}_{\substack{s' \sim p_D(\cdot|s, a) \\ \bar{a} \sim \pi(\cdot|s')}} \big[ R(s, a) - \|a'-a\|_2^2 +  \gamma Q^\pi(s', \bar{a}) \big] da
    \\& \iff Q^*(s, a') = \frac{\int_\mathcal{A} \textcolor{red}{p_D(a|s)q_\sigma(a'|a)} \mathbb{E}_{\substack{s' \sim p_D(\cdot|s, a) \\ \bar{a} \sim \pi(\cdot|s')}} \big[ R(s, a) - \|a'-a\|_2^2 +  \gamma Q^\pi(s', \bar{a}) \big] da}{\textcolor{red}{\int_\mathcal{A}  p_D(a|s)q_\sigma(a'|a) da}}
    \\&= \frac{\int_\mathcal{S} \int_\mathcal{A} \textcolor{red}{p_D(a|s)q_\sigma(a'|a)} p_D(s'|s, a) \mathbb{E}_{\bar{a} \sim \pi(\cdot|s')} \big[ R(s, a) - \|a'-a\|_2^2 +  \gamma Q^\pi(s', \bar{a}) \big] da ds'}{\textcolor{red}{\int_\mathcal{A}  p_D(a|s)q_\sigma(a'|a) da}}
    \\&= \int_\mathcal{S} \int_\mathcal{A} p_D(s'|s, a) \textcolor{red}{p_D(a' | s, a, \sigma)} \mathbb{E}_{\bar{a} \sim \pi(\cdot|s')} \big[ R(s, a) - \|a'-a\|_2^2 +  \gamma Q^\pi(s', \bar{a}) \big] da ds'
    \\&= \textcolor{lightGreen}{\int_\mathcal{A} p_D(a' | s, a, \sigma) (R(s, a) - \|a'-a\|_2^2) da} \notag \\& \quad \quad \quad \quad \quad \quad \quad \quad + \gamma \int_\mathcal{S} \mathbb{E}_{\bar{a} \sim \pi(\cdot|s')} \left[Q^\pi(s', \bar{a})\right] \textcolor{lightSkyBlue}{\int_\mathcal{A} p_D(s'|s, a) p_D(a' | s, a, \sigma) da} ds'
    \\&= \textcolor{lightGreen}{R_\sigma(s, a')} + \gamma \mathbb{E}_{\textcolor{lightSkyBlue}{s' \sim P_\sigma(\cdot|s, a')}, \bar{a} \sim \pi(\cdot|s')} \left[Q^\pi(s', \bar{a})\right]
\end{align}

Therefore, \(Q^*\) satisfies the Bellman equation in NAMDP and is $Q$-value of \(\pi\) in NAMDP
\end{proof}

This result enables us to analyze how the choice of noise distribution in PANI influences the learned $Q$-values, by comparing the ground-truth Q-function under the NAMDP to that of the original MDP.

\subsection{Error Bound Between NAMDP and the True MDP}

In this subsection, we provide theoretical insight into the difference in expected returns between the Noisy Action MDP (NAMDP) and the original MDP. For this analysis, we assume that both the state space \(\mathcal{S}\) and the action space \(\mathcal{A}\) are finite, and that \(R_\sigma\) is well-defined on \(\mathcal{S} \times \mathcal{A}\).

\begin{theorem}[Error Bound in NAMDP]
    \label{the:bound}
    Let \(\eta(\pi)\) and \(\bar{\eta}(\pi)\) denote the expected returns in the true MDP $(\mathcal{S}, \mathcal{A}, R, P, \gamma)$ and the NAMDP $(\mathcal{S}, \mathcal{A}, R_\sigma, P_\sigma, \gamma)$, respectively. suppose that NAMDP reward function \(R_\sigma\) is bounded, The error between them is bounded as:
    \begin{equation*}
        |\eta(\pi) - \bar{\eta}(\pi)| \leq \epsilon_r + \epsilon_m,
    \end{equation*}
    where:
    \begin{align*}
        \epsilon_r &= \mathbb{E}_{d^\pi} \big[\left|R(s, a) - R_\sigma(s, a)\right|\big], \\
        \epsilon_m &= 2 \bar{r}_\text{max} \frac{\gamma \mathbb{E}_{d^\pi}\left[\mathbb{TV}(P(s'|s, a) \| P_\sigma(s'|s, a))\right]}{(1 - \gamma)^2}.
    \end{align*}
    Here, \(R(s, a)\), and \(d^\pi\) are the reward functions and the discounted state-action visitation distribution in the true MDP, \(\bar{r}_\text{max}\) is the maximum reward in NAMDP, \(\gamma\) is the discount factor, \(\mathbb{TV}\) is the total variation. 
\end{theorem}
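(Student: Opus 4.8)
The plan is to treat this as a simulation-lemma-style comparison between two MDPs that share the state space, action space, policy $\pi$, and discount $\gamma$, but differ in both the reward ($R$ versus $R_\sigma$) and the transition kernel ($P$ versus $P_\sigma$). To separate the two sources of error I would introduce an intermediate object: the expected return $\tilde\eta(\pi)$ of $\pi$ evaluated under the \emph{true} dynamics $P$ but with the NAMDP reward $R_\sigma$. A single application of the triangle inequality then gives
\[
|\eta(\pi) - \bar\eta(\pi)| \le |\eta(\pi) - \tilde\eta(\pi)| + |\tilde\eta(\pi) - \bar\eta(\pi)|,
\]
and I would show the first term is controlled by $\epsilon_r$ and the second by $\epsilon_m$.

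For the reward term, $\eta(\pi)$ and $\tilde\eta(\pi)$ are generated by the \emph{same} transition kernel $P$ and the same policy, so the induced trajectory distributions coincide. Their difference is therefore a discounted sum of pointwise reward gaps along trajectories, which collapses to an expectation of $R(s,a) - R_\sigma(s,a)$ under the discounted state-action visitation distribution $d^\pi$ of the true MDP. Taking absolute values inside and using the triangle inequality for expectations yields $\epsilon_r = \mathbb{E}_{d^\pi}[\,|R(s,a) - R_\sigma(s,a)|\,]$ (up to the occupancy normalization convention).

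The transition term is the crux and the main obstacle. Here I would write the Bellman recursion for the difference of the two value functions $V_1$ (for $(P, R_\sigma)$) and $V_2$ (for $(P_\sigma, R_\sigma)$). Adding and subtracting $\mathbb{E}_{s' \sim P(\cdot\mid s,a)}[V_2(s')]$ splits each step into a propagation term $\gamma\,\mathbb{E}_{s'\sim P}[V_1(s') - V_2(s')]$ and a one-step model-mismatch term $\gamma\bigl(\mathbb{E}_{s'\sim P}[V_2(s')] - \mathbb{E}_{s'\sim P_\sigma}[V_2(s')]\bigr)$. The mismatch term is bounded using the standard inequality $|\mathbb{E}_p f - \mathbb{E}_q f| \le 2\,\|f\|_\infty\,\mathbb{TV}(p\|q)$ together with $\|V_2\|_\infty \le \bar{r}_\text{max}/(1-\gamma)$, which is where the assumed boundedness of $R_\sigma$ enters. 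Recognizing the recursion as a Bellman fixed point with ``reward'' equal to the mismatch term and dynamics $P$, I would unroll it into a discounted sum, so that the per-step mismatch errors accumulate weighted precisely by the \emph{true} MDP occupancy $d^\pi$.

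The delicate bookkeeping — and the part I expect to be trickiest — is tracking the two factors of $(1-\gamma)^{-1}$: one arising from the sup-norm bound $\|V_2\|_\infty \le \bar{r}_\text{max}/(1-\gamma)$ and a second from the geometric discounted sum (equivalently, the normalization of $d^\pi$), which together produce the $\gamma/(1-\gamma)^2$ prefactor and the constant $2\bar{r}_\text{max}$ in $\epsilon_m$. Care is also needed to ensure the mismatch errors are weighted by $d^\pi$ of the \emph{true} kernel $P$ rather than $P_\sigma$, which follows because the recursion propagates the value gap forward under $P$. Combining the two bounds via the triangle inequality gives $|\eta(\pi) - \bar\eta(\pi)| \le \epsilon_r + \epsilon_m$, completing the argument.
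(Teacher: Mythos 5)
Your proposal is correct and rests on the same decomposition as the paper: your intermediate quantity $\tilde\eta(\pi)$ (the return of $\pi$ under the true dynamics $P$ but with the NAMDP reward $R_\sigma$) is exactly $\sum_{s,a} d^\pi(s,a)R_\sigma(s,a)$, which is the middle term the paper introduces when it splits $|\eta(\pi)-\bar\eta(\pi)|$ by the triangle inequality, and your treatment of the reward term is identical. The genuine difference is in the occupancy-mismatch term: the paper disposes of $\bigl|\sum_{s,a}\bigl(d^\pi(s,a)-\bar d^\pi(s,a)\bigr)R_\sigma(s,a)\bigr|$ in one line by invoking Lemma A.1 of \citet{lee2020batch}, whereas you re-derive that bound from first principles via the standard simulation-lemma recursion — splitting each Bellman step into a propagation term under $P$ and a one-step mismatch term bounded by $2\|V_2\|_\infty\,\mathbb{TV}$ with $\|V_2\|_\infty\le \bar r_{\max}/(1-\gamma)$, then unrolling under $P$ so the per-step errors accumulate with weight $d^\pi$. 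Your bookkeeping is right: the two factors of $(1-\gamma)^{-1}$ (one from the sup-norm bound on $V_2$, one from the discounted unrolling) combine with $2\bar r_{\max}$ and the leading $\gamma$ to give exactly $\epsilon_m$, and choosing to propagate the value gap under $P$ rather than $P_\sigma$ is precisely what makes the errors weighted by $d^\pi$ rather than $\bar d^\pi$, matching the theorem statement. What your route buys is a self-contained proof with no external dependency; what the paper's route buys is brevity. Two conventions must be fixed for the constants to come out identically, and both ambiguities are present in the paper as well: the factor $2$ in front of $\mathbb{TV}$ presumes $\mathbb{TV}(p\|q)=\tfrac12\|p-q\|_1$, and identifying the unrolled discounted sum with $\mathbb{E}_{d^\pi}$ requires using the same (un)normalized convention for $d^\pi$ as in the identity $\eta(\pi)=\sum_{s,a}d^\pi(s,a)R(s,a)$.
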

\begin{proof}

Let \(\bar{d}^\pi\) denote the discounted state-action visitation distribution induced by policy \(\pi\) in the NAMDP. Then, the difference in expected returns between the true MDP and the NAMDP can be written as:

\[
|\eta(\pi) - \bar{\eta}(\pi)| = \left| \sum_{s, a} \left(d^\pi(s, a) R(s, a) - \bar{d}^\pi(s, a) R_\sigma(s, a)\right) \right|.
\]

Applying the triangle inequality, we decompose this expression as:

\begin{align}
|\eta(\pi) - \bar{\eta}(\pi)| 
&\leq \left| \sum_{s, a} d^\pi(s, a) \left(R(s, a) - R_\sigma(s, a)\right) \right| 
+ \left| \sum_{s, a} \left(d^\pi(s, a) - \bar{d}^\pi(s, a)\right) R_\sigma(s, a) \right| \\
&= \mathbb{E}_{(s, a) \sim d^\pi} \left[ |R(s, a) - R_\sigma(s, a)| \right] 
+ \left| \sum_{s, a} \left(d^\pi(s, a) - \bar{d}^\pi(s, a)\right) R_\sigma(s, a) \right|.
\end{align}

For the second term, we apply Lemma A.1 from \citet{lee2020batch}, which yields:

\[
\left| \sum_{s, a} \left(d^\pi(s, a) - \bar{d}^\pi(s, a)\right) R_\sigma(s, a) \right|
\leq \frac{2 \gamma \bar{r}_{\max}}{(1 - \gamma)^2} 
\mathbb{E}_{(s, a) \sim d^\pi} \left[\mathbb{TV}(P(s' \mid s, a) \,\|\, P_\sigma(s' \mid s, a))\right],
\]
where \(\bar{r}_{\max} = \max_{s,a} |R_\sigma(s, a)|\).

Substituting this into the previous expression gives the final bound:

\[
|\eta(\pi) - \bar{\eta}(\pi)| \leq 
\underbrace{ \mathbb{E}_{d^\pi} \left[ |R(s, a) - R_\sigma(s, a)| \right] }_{\epsilon_r} + 
\underbrace{ \frac{2 \gamma \bar{r}_{\max}}{(1 - \gamma)^2} 
\mathbb{E}_{d^\pi} \left[\mathbb{TV}(P(s' \mid s, a) \,\|\, P_\sigma(s' \mid s, a))\right] }_{\epsilon_m}.
\] \end{proof}

Therefore, if the noise distribution is reasonable, it assigns higher weight to dataset actions, where both the reward and transition dynamics of the true MDP are accurately represented. In this case, as long as the learned policy avoids OOD actions, the optimal policy in the NAMDP is expected to perform well in the original MDP. In the following subsections, we show that, in practice, policies optimized under appropriate noise settings in NAMDPs tend to avoid OOD actions.

\subsection{\texorpdfstring{Avoiding OOD Actions in NAMDP when $\sigma \to 0$}{Avoiding OOD Actions in NAMDP when sigma -> 0}}
\label{subsec:no_ood_sigma_to_zero}
In this subsection, we examine the behavior of the optimal policy in the NAMDP as the noise level \(\sigma \to 0\). We show that under certain conditions, the limiting optimal policy avoids OOD actions; however, this result does not directly imply that the same behavior holds for all NAMDPs when \(\sigma\) is merely small.

Instead, we establish an intermediate result in this subsection that will later allow us to prove, in the next subsection, that optimal policies in NAMDPs with a finite action space indeed avoid OOD actions under small noise. 

We begin by stating an assumption on the noise distribution, which will be used throughout the analysis:

\begin{assumption}
    \label{assum:noise}
    The noise distribution \(q_\sigma(a' \mid a)\) satisfies the following properties for all \(a, a_1, a_2 \in \mathcal{A}\):
    
    \begin{enumerate}
        \item If \(\|a_1 - a\|_2^2 > \|a_2 - a\|_2^2\), then
        \[
        \lim_{\sigma \to 0^+} \frac{q_\sigma(a \mid a_1)}{q_\sigma(a \mid a_2)} = 0.
        \]
        
        \item If \(\|a_1 - a\|_2^2 = \|a_2 - a\|_2^2\), then
        \[
        \lim_{\sigma \to 0^+} \frac{q_\sigma(a \mid a_1)}{q_\sigma(a \mid a_2)} = 1.
        \]
    \end{enumerate}
\end{assumption}
    
For example, distributions such as the Gaussian and Laplace distributions satisfy the properties.
Given an arbitrary function \(S : \mathcal{A} \times \mathcal{A} \to \mathbb{R}\), the following lemma characterizes the limiting behavior of a noise-weighted average as \(\sigma \to 0\).

\newcommand{\C}{C(\bx, p)}
\newcommand{\bx}{\mathbf{\bar{x}}}
\newcommand{\Sx}{\mathcal{S}(\bx, \mathbf{x})}

\begin{lemma}
    \label{lem:noise_zero}  
    Given a noise distribution \(q_\sigma\), if the support of \(p\) is finite, then for any function \(S\), we have:
    
    \begin{equation}
        \lim_{\sigma \to 0^+} \int_\mathcal{A} S(a, a') p(a | a') \, da 
        = \frac{\sum_{a \in C(a', p)} S(a, a') p(a)}{\sum_{a \in C(a', p)} p(a)} 
        = \mathbb{E}_{a \sim p_C(\cdot|a')}\left[S(a, a')\right],
    \end{equation}
    
    where
       \begin{equation}
           p(a | a') = \frac{q_\sigma(a' | a)p(a)}{\sum_{a} q_\sigma(a' | a)p(a)}, C(a', p) = \left\{a \in \text{supp}(p) \mid \|a' - a\|_2^2 \leq \|a' - \mathbf{y}\|_2^2 \forall \mathbf{y} \in \text{supp}(p)\right\}.
       \end{equation}  

\end{lemma}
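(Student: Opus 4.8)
The plan is to exploit the finiteness of $\text{supp}(p)$ to convert the integral into a finite sum and then analyze the limiting behavior of the normalized weights term by term. Since $\text{supp}(p)$ is finite, the quantity $\int_\mathcal{A} S(a,a')\,p(a\mid a')\,da$ is really a finite sum $\sum_{a \in \text{supp}(p)} S(a,a')\,p(a\mid a')$, with each weight given by
\[
p(a\mid a') = \frac{q_\sigma(a'\mid a)\,p(a)}{\sum_{b \in \text{supp}(p)} q_\sigma(a'\mid b)\,p(b)}.
\]
The difficulty is that the individual factors $q_\sigma(a'\mid a)$ may each vanish or blow up as $\sigma \to 0^+$, so they cannot be controlled in isolation; only their relative decay rates matter. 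The key device is to renormalize by the noise density evaluated at a nearest neighbor.

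First I would fix a reference point $a_0 \in C(a',p)$, which exists because $C(a',p)$ is the nonempty set of minimizers of $\|a'-\cdot\|_2^2$ over the finite set $\text{supp}(p)$. Dividing numerator and denominator of every weight by $q_\sigma(a'\mid a_0)$ leaves the weight unchanged but rewrites it as
\[
p(a\mid a') = \frac{\dfrac{q_\sigma(a'\mid a)}{q_\sigma(a'\mid a_0)}\,p(a)}{\sum_{b \in \text{supp}(p)} \dfrac{q_\sigma(a'\mid b)}{q_\sigma(a'\mid a_0)}\,p(b)},
\]
exposing ratios to which Assumption~\ref{assum:noise} applies directly. Taking the assumption's fixed (noisy) argument to be $a'$ and comparing the centers $a$ and $a_0$: property~2 gives $q_\sigma(a'\mid a)/q_\sigma(a'\mid a_0) \to 1$ whenever $a \in C(a',p)$ (equal distance to $a'$), while property~1 gives $q_\sigma(a'\mid a)/q_\sigma(a'\mid a_0) \to 0$ whenever $a \notin C(a',p)$ (strictly larger distance).

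Next I would pass to the limit. Because $\text{supp}(p)$ is finite, both the renormalized numerator $\sum_a S(a,a')\,\tfrac{q_\sigma(a'\mid a)}{q_\sigma(a'\mid a_0)}\,p(a)$ and the renormalized denominator are finite sums, so the limit commutes with summation and I may substitute the term-wise limits above. The numerator converges to $\sum_{a \in C(a',p)} S(a,a')\,p(a)$ and the denominator to $\sum_{a \in C(a',p)} p(a)$. Since $C(a',p)$ is nonempty and $p$ is strictly positive on its support, the limiting denominator is strictly positive, so the limit of the ratio equals the ratio of the limits, which is exactly $\mathbb{E}_{a \sim p_C(\cdot\mid a')}[S(a,a')]$.

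The main obstacle is the renormalization step: without dividing by $q_\sigma(a'\mid a_0)$ the factors $q_\sigma(a'\mid a)$ have no well-defined limit, and it is precisely the \emph{relative} decay quantified by Assumption~\ref{assum:noise} that determines which dataset actions survive. Once the division by $q_\sigma(a'\mid a_0)$ is in place, the remainder is routine term-by-term convergence of a finite sum together with positivity of the limiting denominator.
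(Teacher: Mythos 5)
Your proposal is correct and follows essentially the same route as the paper's proof: the paper likewise renormalizes every weight by the noise density at a nearest dataset point ($a^*$ in the paper, your $a_0$), applies the two properties of Assumption~\ref{assum:noise} to send the off-$C(a',p)$ ratios to $0$ and the on-$C(a',p)$ ratios to $1$, and then takes term-wise limits of the finite sums. The only cosmetic difference is that the paper splits the sum into the $C(a',p)$ and complement pieces and evaluates each limit separately, whereas you handle the renormalized numerator and denominator in one pass; the key idea and all essential steps coincide.
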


\begin{proof}
    We analyze the expectation by splitting it into two cases: points in \(C(a', p)\) and points outside \(C(a', p)\).

    Let \(a^* \in C(a', p)\) be a closest point to \(a'\). By the Assumption~\ref{assum:noise}, \(\frac{q_\sigma(a' | a)}{q_\sigma(a' | a^*)} \to 0\) as \(\sigma \to 0^+\) for \(a \notin C(a', p)\). Thus:
    \begin{align}
        \lim_{\sigma \to 0^+} \sum_{a \notin C(a', p)} S(a, a')& \frac{q_\sigma(a' | a)p(a)}{\sum_{a} q_\sigma(a' | a)p(a)} 
        \\&= \lim_{\sigma \to 0^+} \frac{\sum_{a \notin C(a', p)} S(a, a') q_\sigma(a' | a)p(a)}{\sum_{a \in C(a', p)} q_\sigma(a' | a)p(a) + \sum_{a \notin C(a', p)} q_\sigma(a' | a)p(a)} \nonumber \\
        &= \lim_{\sigma \to 0^+} \frac{\sum_{a \notin C(a', p)} S(a, a') \frac{q_\sigma(a' | a)}{q_\sigma(a' | a^*)} p(a)}{\sum_{a \in C(a', p)} \frac{q_\sigma(a' | a)}{q_\sigma(a' | a^*)}p(a) + \sum_{a \notin C(a', p)} \frac{q_\sigma(a' | a)}{q_\sigma(a' | a^*)}p(a)} \nonumber \\
        &= \frac{0}{\sum_{a \in C(a', p)} p(a) + 0} = 0.
    \end{align}
    
    For \(a \in C(a', p)\), the noise property gives \(\frac{q_\sigma(a' | a)}{q_\sigma(a' | a')} \to 1\) for all \(a, a' \in C(a', p)\). Hence:
    \begin{align}
        \lim_{\sigma \to 0^+} \sum_{a \in C(a', p)} S(a, a')& \frac{q_\sigma(a' | a)p(a)}{\sum_{a} q_\sigma(a' | a)p(a)} 
        \\&= \lim_{\sigma \to 0^+} \frac{\sum_{a \in C(a', p)} S(a, a') q_\sigma(a' | a)p(a)}{\sum_{a \in C(a', p)} q_\sigma(a' | a)p(a) + \sum_{a \notin C(a', p)} q_\sigma(a' | a)p(a)} \nonumber \\
        &= \lim_{\sigma \to 0^+} \frac{\sum_{a \in C(a', p)} S(a, a') \frac{q_\sigma(a' | a)}{q_\sigma(a' | a^*)}p(a)}{\sum_{a \in C(a', p)} \frac{q_\sigma(a' | a)}{q_\sigma(a' | a^*)}p(a) + \sum_{a \notin C(a', p)} \frac{q_\sigma(a' | a)}{q_\sigma(a' | a^*)}p(a)} \nonumber \\
        &= \frac{\sum_{a \in C(a', p)} S(a, a') p(a)}{\sum_{a \in C(a', p)} p(a)}.
    \end{align}
    
    Therefore, we have
    \begin{align}
        \lim_{\sigma \to 0^+} \int_{a} S(a, a') p(a | a') \, da 
        &= \lim_{\sigma \to 0^+} \left(\sum_{a \in C(a', p)} S(a, a') p(a | a') + \sum_{a \notin C(a', p)} S(a, a') p(a | a')\right) \nonumber \\
        &= \frac{\sum_{a \in C(a', p)} S(a, a') p(a)}{\sum_{a \in C(a', p)} p(a)} = \mathbb{E}_{a \sim p_C(\cdot|a')}\left[S(a, a')\right],
    \end{align}
    where \(p_C(a | a') = \frac{p(a)}{\sum_{a \in C(a', p)} p(a)}\) is the restriction of \(p\) to \(C(a', p)\).
\end{proof}

\newcommand{\Ca}{C(a', p_D(\cdot|s))}
\newcommand{\Can}{C(a, p_D(\cdot|s))}
\newcommand{\Ec}[1]{\mathbb{E}_{a \sim p_C(\cdot|a', s)}\left[#1\right]}

Using this lemma, we can express the $Q$-value in closed form as \(\sigma \to 0^+\), as shown below:

\begin{lemma}
    \label{lem:q_value}
    suppose that \(\sigma \to 0^+\), then the Q value in NAMDP holds:
    \[
    \mathbb{E}_{a \sim p_C(\cdot|a', s)}[Q^\pi(s, a)] = Q^\pi(s, a') + \inf_{a \in C(a', p_D(\cdot|s))} \|a' - a\|_2^2.
    \]
\end{lemma}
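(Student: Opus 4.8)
The plan is to start from the Bellman fixed-point equation that the NAMDP $Q$-value of $\pi$ satisfies, as established by the PANI Objective theorem, namely
\[
Q^\pi(s, a') = R_\sigma(s, a') + \gamma\, \mathbb{E}_{s' \sim P_\sigma(\cdot \mid s, a'),\, \bar a \sim \pi(\cdot \mid s')}\bigl[Q^\pi(s', \bar a)\bigr],
\]
and to pass to the limit $\sigma \to 0^+$ on the reward and transition terms by invoking Lemma~\ref{lem:noise_zero} with $p = p_D(\cdot \mid s)$, so that the noise-weighted posterior $p_D(a' \mid s, a, \sigma)$ appearing in both $R_\sigma$ and $P_\sigma$ collapses onto the restricted distribution $p_C(\cdot \mid a', s)$ supported on $\Ca$. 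Concretely, I would apply the lemma to $R_\sigma$ with the choice $S(a, a') = R(s, a) - \|a - a'\|_2^2$ and to $P_\sigma$ with $S(a, a') = p_D(s' \mid s, a)$; since $\mathcal{S}$ is finite the limit commutes with the sum over $s'$, turning the bootstrap term into $\gamma\,\Ec{\sum_{s'} p_D(s' \mid s, a)\,\mathbb{E}_{\bar a \sim \pi(\cdot \mid s')}[Q^\pi(s', \bar a)]}$.

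The central observation is that every $a \in \Ca$ is, by the very definition of $C(a', \cdot)$, a minimizer of $\|a' - \cdot\|_2^2$ over $\text{supp}(p_D(\cdot \mid s))$, so all such $a$ are equidistant from $a'$ at distance $\inf_{a \in \Ca}\|a' - a\|_2^2$. Hence the penalty $\|a - a'\|_2^2$ is constant on the support of $p_C(\cdot \mid a', s)$ and factors out of the expectation. After pulling it out, I would recognize the remaining bracketed quantity $R(s, a) + \gamma \sum_{s'} p_D(s' \mid s, a)\,\mathbb{E}_{\bar a}[Q^\pi(s', \bar a)]$ as exactly $Q^\pi(s, a)$ for each $a \in \Ca$: because any such $a$ is a dataset support point, its own nearest-neighbour set is the singleton $\Can = \{a\}$ with vanishing penalty, so the $\sigma \to 0^+$ NAMDP Bellman equation evaluated at $a$ reduces precisely to this expression. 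Substituting this identification yields $Q^\pi(s, a') = \Ec{Q^\pi(s, a)} - \inf_{a \in \Ca}\|a' - a\|_2^2$, which rearranges to the claimed identity.

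The hard part will be the limit interchange: I must argue that $\lim_{\sigma \to 0^+} Q^\pi_\sigma$ exists and itself satisfies the limiting Bellman equation, rather than taking limits purely formally. On a finite $\mathcal{S} \times \mathcal{A}$ this follows because the policy-evaluation operator is a $\gamma$-contraction whose fixed point depends continuously on $(R_\sigma, P_\sigma)$, and Lemma~\ref{lem:noise_zero} guarantees pointwise convergence $R_\sigma \to R_0$ and $P_\sigma \to P_0$; continuity of the fixed point in these data then gives $Q^\pi_\sigma \to Q^\pi_0$ and legitimizes evaluating the bracketed term at dataset actions. A secondary point I would verify explicitly is that for a support point $a$ the closest-point set is indeed $\{a\}$, which holds since $\|a - a\|_2^2 = 0 < \|a - y\|_2^2$ for every other support point $y$ of the finite distribution; this is precisely what forces the penalty to vanish and collapses the restricted posterior to a Dirac mass at $a$, closing the argument.
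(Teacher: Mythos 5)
Your proposal is correct and follows essentially the same route as the paper's proof: both apply Lemma~\ref{lem:noise_zero} to obtain the limiting reward and transition functions, invoke the Bellman equation in the limiting NAMDP, factor the (constant) penalty $\|a'-a\|_2^2$ out of the expectation over $p_C(\cdot \mid a', s)$, and identify the remaining bracket with $Q^\pi(s,a)$ at dataset support points. The two points you flag as needing care --- the fixed-point continuity argument justifying the interchange of limit and Bellman recursion, and the observation that $C(a, p_D(\cdot \mid s)) = \{a\}$ for a support point $a$ --- are left implicit in the paper (the former is essentially deferred to Lemma~\ref{lem:bellman_operator} and Corollary~\ref{cor:q_epsilon_delta}), so your write-up is, if anything, more careful.
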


\begin{proof}
We assume \(\sigma \to 0^+\), leveraging the fact that the dataset is finite. By Lemma~\ref{lem:noise_zero}, the transition probabilities in the NAMDP converge as:
\begin{align}
\bar{P}(s'|s, a') &= \lim_{\sigma \to 0^+} \int p_D(s'|s, a) p_D(a' | s, a, \sigma) \, da \\&= \frac{\sum_{a \in C(a', p_D(\cdot|s))} p_D(s'|s, a) p_D(a|s)}{\sum_{a \in C(a', p_D(\cdot|s))} p_D(a|s)} \\&= \mathbb{E}_{a \sim p_C(\cdot|a', s)}[p_D(s'|s, a)].
\end{align}

For the reward \(\bar{R}(s, a')\), we derive:
\begin{align}
    \bar{R}(s, a') &= \lim_{\sigma \to 0^+}  \int_\mathcal{A} p_D(a' | s, a, \sigma) (R(s, a) - \|a - a'\|_2^2) \, da \\
    &= \frac{\sum_{a \in C(a', p_D(\cdot|s))} p_D(a|s) (R(s, a) - \|a - a'\|_2^2)}{\sum_{a \in C(a', p_D(\cdot|s))} p_D(a|s)} \\
    &= \mathbb{E}_{a \sim p_C(\cdot|a', s)}[R(s, a) - \|a - a'\|_2^2].
\end{align}

Next, note that the definition of \(C(a', p_D(\cdot|s))\) is:
\[
C(a', p_D(\cdot|s)) = \{a \in \text{supp}(p_D(\cdot|s)) \mid \|a' - a\|_2^2 \leq \|a' - b\|_2^2 \quad \forall b \in \text{supp}(p_D(\cdot|s))\}.
\]
This means \(C(a', p_D(\cdot|s))\) contains actions \(a\) in \(\text{supp}(p_D(\cdot|s))\) that are closest to \(a'\) in \(L_2\)-norm. By construction, the distribution \(p_C(\cdot|a', s)\) assigns full probability mass to \(C(a', p_D(\cdot|s))\), and therefore:
\[
\mathbb{E}_{a \sim p_C(\cdot|a', s)}[\|a' - a\|_2^2] = \inf_{a \in \text{supp}(p_D(\cdot|s))} \|a' - a\|_2^2.
\]

Now, using the Bellman equation for the NAMDP:
\[
Q^\pi(s, a') = \bar{R}(s, a') + \gamma \mathbb{E}_{s' \sim \bar{p}(\cdot|s, a')}[V^\pi(s')].
\]

Substituting \(\bar{R}(s, a)\) and \(\bar{P}(s'|s, a)\):
\begin{align}
    Q^\pi(s, a') &= \Ec{R(s, a) - \|a' - a\|_2^2} + \gamma \int_\mathcal{S} \mathbb{E}_{a \sim p_C(\cdot|a', s)}[p_D(s'|s, a) V^\pi(s')] \, ds' \\
    &= \Ec{R(s, a) - \|a' - a\|_2^2 + \gamma \mathbb{E}_{s' \sim p_D(\cdot|s, a)}[V^\pi(s')]} \\
    &= \Ec{R(s, a)  + \gamma \mathbb{E}_{s' \sim p_D(\cdot|s, a)}[V^\pi(s')]} - \Ec{\|a' - a\|_2^2}.\\
    &= \Ec{\bar{R}(s, a)  + \gamma \mathbb{E}_{s' \sim \bar{P}(\cdot|s, a)}[V^\pi(s')]} - \Ec{\|a' - a\|_2^2}.\\
    &= \Ec{Q^\pi(s, a)} - \Ec{\|a' - a\|_2^2}.
\end{align}

Finally, since $\Ec{[\|a' - a\|_2^2]} = \inf_{a \in \text{supp}(p_D(\cdot|s))} \|a' - a\|_2^2$, we conclude:
\[
\mathbb{E}_{a \sim p_C(\cdot|a', s)}[Q^\pi(s, a)] = Q^\pi(s, a') + \inf_{a \in C(a', p_D(\cdot|s))} \|a' - a\|_2^2.
\]
This completes the proof.
\end{proof}

Using the above lemma, we can now show that the optimal policy avoids OOD actions in the limiting case, as formalized in the following corollary:

\begin{corollary}[No OOD when \(\sigma \to 0\)] 
    Given a noise distribution \(q_\sigma\), when \(\sigma \to 0\), the optimal policy in the NAMDP is guaranteed to select actions within the dataset distribution.
    \label{cor:no_ood}
\end{corollary}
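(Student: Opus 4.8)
The plan is to apply Lemma~\ref{lem:q_value} directly to the optimal policy of the limiting NAMDP and then exploit the strict positivity of the distance penalty at out-of-distribution (OOD) actions. Let $\pi^*$ denote the optimal policy in the NAMDP and write $Q^* = Q^{\pi^*}$ for the corresponding optimal $Q$-function. Taking $\pi = \pi^*$ in Lemma~\ref{lem:q_value} and rearranging the identity gives, for every state $s$ and action $a'$,
\[
Q^*(s, a') = \mathbb{E}_{a \sim p_C(\cdot|a', s)}[Q^*(s, a)] - \inf_{a \in C(a', p_D(\cdot|s))} \|a' - a\|_2^2.
\]
The crucial structural fact is that $C(a', p_D(\cdot|s)) \subseteq \text{supp}(p_D(\cdot|s))$, so the expectation on the right is a convex combination of $Q^*$ evaluated only at in-dataset actions.

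First I would bound the expectation by its maximum over the finite set $C(a', p_D(\cdot|s))$: there exists an in-dataset action $a^\star \in C(a', p_D(\cdot|s))$ with $\mathbb{E}_{a \sim p_C(\cdot|a', s)}[Q^*(s, a)] \le Q^*(s, a^\star)$. Substituting into the identity yields
\[
Q^*(s, a') \le Q^*(s, a^\star) - \inf_{a \in C(a', p_D(\cdot|s))} \|a' - a\|_2^2.
\]
Next I would observe that whenever $a'$ is OOD, i.e. $a' \notin \text{supp}(p_D(\cdot|s))$, the distance to the nearest dataset action is strictly positive, since the dataset is finite and hence no dataset action coincides with $a'$. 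The penalty is therefore strict, giving $Q^*(s, a') < Q^*(s, a^\star)$ with $a^\star$ an in-dataset action.

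To conclude, I would let $M_D = \max_{a \in \text{supp}(p_D(\cdot|s))} Q^*(s, a)$, a maximum over a finite set. The inequality above shows that every OOD action $a'$ satisfies $Q^*(s, a') < Q^*(s, a^\star) \le M_D$, so no OOD action can attain the value $M_D$, whereas some in-dataset action does. Hence the greedy maximizer $\arg\max_a Q^*(s, a)$, and therefore the support of $\pi^*(\cdot|s)$, lies entirely within $\text{supp}(p_D(\cdot|s))$, which is precisely the claim. Note this argument uses only the finiteness of the dataset, not of the whole action space.

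The main obstacle is handling the limiting regime cleanly. Lemma~\ref{lem:q_value} is an identity that holds in the limit $\sigma \to 0^+$, so I must apply it to $Q^*$ and $\pi^*$ interpreted consistently in this limit rather than fix a small $\sigma$ and pass limits through the optimization. As emphasized in the surrounding discussion, this corollary characterizes only the limiting NAMDP and does not by itself certify OOD-avoidance for merely small $\sigma$; that sharper statement is reserved for the following subsection. Accordingly, I would phrase the result as a property of the limiting $Q^*$, invoke greediness of the optimal policy, and avoid any claim about uniform behavior over a neighborhood of $\sigma = 0$.
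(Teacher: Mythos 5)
Your proposal is correct and follows essentially the same route as the paper's proof: both invoke Lemma~\ref{lem:q_value}, use the strict positivity of the penalty $\inf_{a \in \text{supp}(p_D(\cdot|s))} \|a' - a\|_2^2$ for any OOD action $a'$, and conclude that some in-dataset action in $C(a', p_D(\cdot|s))$ strictly dominates $a'$ in $Q^*$-value. The only difference is presentational --- the paper argues by contradiction while you argue directly via the maximum $M_D$ over dataset actions and make the greediness of $\pi^*$ explicit --- which, if anything, slightly tightens the paper's reasoning.
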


\begin{proof}
    Suppose that \(\pi^*\) is the optimal policy in the NAMDP when \(\sigma \to 0\). Assume for contradiction that there exists \(s \in \mathcal{S}\) and \(a' \in \mathcal{A}\) such that \(a' \in \text{supp}(\pi^*(\cdot|s))\) but \(a' \notin \text{supp}(p_D(\cdot|s))\). This assumption implies that the policy selects an action \(a\) outside the dataset distribution \(p_D(\cdot|s)\) despite being optimal.

    By \cref{lem:q_value}, the $Q$-value can be expressed as:
    \[
    \Ec{Q^*(s, a)} = Q^*(s, a') + \inf_{a \in \text{supp}(p_D(\cdot|s))} \|a' - a\|_2^2,
    \]
    where the term \(\inf_{a \in \text{supp}(p_D(\cdot|s))} \|a' - a\|_2^2\) represents the minimum penalty induced by the distance between \(a'\) and the closest action in \(\text{supp}(p_D(\cdot|s))\).

    Since \(a' \notin \text{supp}(p_D(\cdot|s))\), this penalty term is strictly positive. Hence, substituting this into the inequality, we have:
    \begin{align}
        \max_{a \in C(a', p_D(\cdot|s))} Q^*(s, a) &\geq \Ec{Q^*(s, a)} \\
        &= Q^*(s, a') + \inf_{a \in \text{supp}(p_D(\cdot|s))} \|a' - a\|_2^2 \\
        &> Q^*(s, a').
    \end{align}

    This inequality implies that there exists another action \(a \in C(a', p_D(\cdot|s))\) (where \(C(a', p_D(\cdot|s))\) represents the set of nearby actions under the dataset distribution) with a higher $Q$-value than \(Q^*(s, a')\). Thus, \(a'\) cannot be optimal, contradicting the assumption that \(a' \in \text{supp}(\pi^*(\cdot|s))\).
\end{proof}

\subsection{Avoiding OOD Actions Under Small Noise}
In this subsection, we further analyze the case where the action and state spaces are finite and show that, under sufficiently small noise, the optimal policy in the NAMDP avoids OOD actions. Specifically, we show that for any $\epsilon > 0$, there exists a $\sigma' > 0$ such that for all $0 < \sigma < \sigma'$, the Bellman optimality operator of the NAMDP is within $\epsilon$ of that of the noiseless case when $\sigma \to 0$.

Unless otherwise specified, we denote the NAMDP in the limit when $\sigma \to 0$ by $\bar{\mathcal{M}} = (\mathcal{S}, \mathcal{A}, \bar{R}, \bar{P}, \gamma)$, and the NAMDP with noise scale $\sigma$ by $\mathcal{M}_\sigma = (\mathcal{S}, \mathcal{A}, R_\sigma, P_\sigma, \gamma)$.

\begin{lemma}
\label{lem:bellman_operator}
Let $\bar{\mathcal{T}}$ denote the Bellman optimality operator of the NAMDP $\bar{\mathcal{M}}$, and let $\mathcal{T}_\sigma$ denote the Bellman optimality operator of the NAMDP $\mathcal{M}_\sigma$. Let $Q$ denote the $Q$-function under $\bar{\mathcal{M}}$. Then, for any $\epsilon > 0$, there exists $\sigma' > 0$ such that for all $0 < \sigma < \sigma'$,
$$
\sup_{(s, a) \in \mathcal{S} \times \mathcal{A}} \left| (\bar{\mathcal{T}} Q)(s, a) - (\mathcal{T}_\sigma Q)(s, a) \right| < \epsilon.
$$
\end{lemma}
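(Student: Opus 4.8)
The plan is to write both Bellman optimality operators explicitly and control their difference term by term, using the fact that the finiteness of $\mathcal{S}\times\mathcal{A}$ upgrades the pointwise limits already established for $\bar{R}$ and $\bar{P}$ into uniform ones. Concretely, for the fixed $Q$ the two operators read
\begin{align*}
(\bar{\mathcal{T}}Q)(s,a) &= \bar{R}(s,a) + \gamma\sum_{s'}\bar{P}(s'\mid s,a)\max_{\bar a}Q(s',\bar a),\\
(\mathcal{T}_\sigma Q)(s,a) &= R_\sigma(s,a) + \gamma\sum_{s'}P_\sigma(s'\mid s,a)\max_{\bar a}Q(s',\bar a),
\end{align*}
so subtracting and applying the triangle inequality gives
\[
\bigl|(\bar{\mathcal{T}}Q)(s,a)-(\mathcal{T}_\sigma Q)(s,a)\bigr|\le \bigl|\bar{R}(s,a)-R_\sigma(s,a)\bigr| + \gamma Q_{\max}\sum_{s'}\bigl|\bar{P}(s'\mid s,a)-P_\sigma(s'\mid s,a)\bigr|,
\]
where $Q_{\max}=\sup_{s,a}|Q(s,a)|$ is finite: since $\bar{\mathcal{M}}$ has bounded reward, $Q_{\max}\le \bar{r}_{\max}/(1-\gamma)$.

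Next I would establish that both the reward gap and the transition gap vanish as $\sigma\to 0^+$ at every fixed state-action pair. This is precisely an application of Lemma~\ref{lem:noise_zero}: taking $p=p_D(\cdot\mid s)$ and $S(a,a')=R(s,a)-\|a-a'\|_2^2$ yields $R_\sigma(s,a')\to\bar{R}(s,a')$, while taking $S(a,a')=p_D(s'\mid s,a)$ for each fixed $s'$ yields $P_\sigma(s'\mid s,a')\to\bar{P}(s'\mid s,a')$. These are exactly the limiting objects identified in the proof of Lemma~\ref{lem:q_value} as defining $\bar{R}$ and $\bar{P}$, so every summand in the bound above tends to zero pointwise.

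Finally, I would convert this pointwise convergence into uniform convergence using finiteness. Because $\mathcal{S}\times\mathcal{A}$ and the inner sum over $\mathcal{S}$ are finite, for any $\epsilon>0$ I can, for each of the finitely many pairs $(s,a)$ and triples $(s,a,s')$, choose a threshold below which the reward gap is $<\epsilon/2$ and the total transition contribution is $<\epsilon/2$; setting $\sigma'$ to the minimum of these finitely many thresholds makes the displayed bound smaller than $\epsilon$ simultaneously for all $(s,a)$, and taking the supremum over $(s,a)$ yields the claim. The only points requiring care are this pointwise-to-uniform step and checking that the limits furnished by Lemma~\ref{lem:noise_zero} coincide with the $\bar{R},\bar{P}$ that define $\bar{\mathcal{M}}$; I do not expect a genuine obstacle here, since finiteness trivializes the uniformity and the identification of the limits is already contained in the proof of Lemma~\ref{lem:q_value}.
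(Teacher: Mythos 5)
Your proposal is correct and follows essentially the same route as the paper's proof: the same triangle-inequality decomposition into a reward gap plus a transition gap, the same bound $\max_{(s,a)}|Q(s,a)|\le \bar{r}_{\max}/(1-\gamma)$, and the same pointwise-to-uniform upgrade via finiteness of $\mathcal{S}\times\mathcal{A}$ with $\sigma'$ taken as the minimum of finitely many thresholds. If anything, you are slightly more explicit than the paper, which simply asserts $R_\sigma\to\bar{R}$ and $P_\sigma\to\bar{P}$, whereas you trace these limits back to Lemma~\ref{lem:noise_zero} and their identification in the proof of Lemma~\ref{lem:q_value}.
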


\begin{proof}
    Let $\bar{\mathcal{T}}$ and $\mathcal{T}_\sigma$ be defined as in Lemma~\ref{lem:bellman_operator}.  
    Fix any $\epsilon > 0$. Since $R_\sigma \to \bar{R}$ and $P_\sigma \to \bar{P}$ as $\sigma \to 0$, for each $(s, a)$, there exists $\sigma_{(s, a)}^R > 0$ such that for all $0 < \sigma < \sigma_{(s, a)}^R$,
    $$
    |\bar{R}(s, a) - R_\sigma(s, a)| < \frac{\epsilon}{2}.
    $$
    Similarly, for each $(s, a, s')$, there exists $\sigma_{(s, a, s')}^P > 0$ such that for all $0 < \sigma < \sigma_{(s, a, s')}^P$,
    $$
    |\bar{P}(s' | s, a) - P_\sigma(s' | s, a)| < \frac{(1 - \gamma) \epsilon}{2 \gamma |\mathcal{S}| \max_{s, a} |\bar{R}(s, a)|}.
    $$
    
    We now bound the difference between the Bellman operators, for any $(s, a)$,
    \begin{align}
    &|(\bar{\mathcal{T}} Q)(s, a) - (\mathcal{T}_\sigma Q)(s, a)| \\
    &\leq |\bar{R}(s, a) - R_\sigma(s, a)| 
        + \gamma \left| \sum_{s'} \left( \bar{P}(s' | s, a) - P_\sigma(s' | s, a) \right) \max_{a'} Q(s', a') \right| \\
    &\leq |\bar{R}(s, a) - R_\sigma(s, a)| 
        + \gamma |\mathcal{S}| \max_{(s, a, s')} |\bar{P}(s' | s, a) - P_\sigma(s' | s, a)| \max_{(s, a)} |Q(s, a)| \\
    &\leq |\bar{R}(s, a) - R_\sigma(s, a)| 
        + \frac{\gamma |\mathcal{S}| \max_{(s, a, s')} |\bar{P}(s' | s, a) - P_\sigma(s' | s, a)| \max_{(s, a)} |\bar{R}(s, a)|}{1 - \gamma}.
    \end{align}
    
    Define $\sigma' = \min\left( \{\sigma_{(s, a)}^R\}_{(s, a)} \cup \{\sigma_{(s, a, s')}^P\}_{(s, a, s')} \right)$.  Since the state and action spaces are finite, this minimum is taken over a finite set, ensuring that $\sigma' > 0$.

    Then, for all $0 < \sigma < \sigma'$, we have

    \begin{align}
        \sup_{(s, a) \in \mathcal{S} \times \mathcal{A}} & |(\bar{\mathcal{T}} Q)(s, a) - (\mathcal{T}_\sigma Q)(s, a)| 
        \\&\leq \sup_{(s, a) \in \mathcal{S} \times \mathcal{A}} |\bar{R}(s, a) - R_\sigma(s, a)|
        \\& + \frac{\gamma |\mathcal{S}| \max_{(s, a, s')} |\bar{P}(s' | s, a) - P_\sigma(s' | s, a)| \max_{(s, a)} |\bar{R}(s, a)|}{1 - \gamma}  \\ &< \frac{\epsilon}{2} + \frac{\epsilon}{2} = \epsilon,
        \end{align}
        
    completing the proof.
    \end{proof}
As a direct consequence of Lemma~\ref{lem:bellman_operator}, we obtain the following corollary:
\begin{corollary}
\label{cor:q_epsilon_delta}
Let $Q^*$ denote the optimal $Q$-function of $\bar{\mathcal{M}}$, and let $Q_\sigma^*$ denote the optimal $Q$-function of $\mathcal{M}_\sigma$. Then, for any $\epsilon > 0$, there exists $\sigma' > 0$ such that for all $0 < \sigma < \sigma'$, 
$$
\sup_{(s, a) \in \mathcal{S} \times \mathcal{A}} \left| Q^*(s, a) - Q_\sigma^*(s, a) \right| < \epsilon.
$$
\end{corollary}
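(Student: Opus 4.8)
The plan is to treat this as a standard perturbation-of-fixed-points argument for contraction mappings, using Lemma~\ref{lem:bellman_operator} as the single nontrivial input. Since the state and action spaces are finite and $\bar{R}$ is bounded (hence, by the convergence $R_\sigma \to \bar{R}$, so is $R_\sigma$ for small $\sigma$), both $\bar{\mathcal{T}}$ and $\mathcal{T}_\sigma$ are $\gamma$-contractions in the supremum norm over $\mathcal{S} \times \mathcal{A}$. By the Banach fixed-point theorem they admit unique fixed points, which are precisely the optimal $Q$-functions $Q^*$ and $Q_\sigma^*$; this is what lets me convert operator closeness into value-function closeness.

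The key steps, in order: first I would record the two fixed-point identities $Q^* = \bar{\mathcal{T}} Q^*$ and $Q_\sigma^* = \mathcal{T}_\sigma Q_\sigma^*$. Next I would insert the intermediate term $\mathcal{T}_\sigma Q^*$ and apply the triangle inequality to obtain
\[
\|Q^* - Q_\sigma^*\|_\infty \;\leq\; \|\bar{\mathcal{T}} Q^* - \mathcal{T}_\sigma Q^*\|_\infty + \|\mathcal{T}_\sigma Q^* - \mathcal{T}_\sigma Q_\sigma^*\|_\infty,
\]
where $\|\cdot\|_\infty$ denotes the supremum over $\mathcal{S}\times\mathcal{A}$. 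The first term is controlled directly by Lemma~\ref{lem:bellman_operator}, evaluated at the specific function $Q = Q^*$: for any target accuracy $\epsilon' > 0$ there is a $\sigma'$ such that this term stays below $\epsilon'$ for all $0 < \sigma < \sigma'$. The second term I would bound using the $\gamma$-contraction property of $\mathcal{T}_\sigma$, giving $\gamma \|Q^* - Q_\sigma^*\|_\infty$.

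Combining these yields the self-referential inequality $\|Q^* - Q_\sigma^*\|_\infty \leq \epsilon' + \gamma \|Q^* - Q_\sigma^*\|_\infty$, which I would rearrange to $\|Q^* - Q_\sigma^*\|_\infty \leq \epsilon'/(1-\gamma)$. Finally, to obtain the stated bound for a prescribed $\epsilon$, I would simply run the argument with $\epsilon' = (1-\gamma)\epsilon$, feeding this choice into Lemma~\ref{lem:bellman_operator} to produce the required $\sigma'$.

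The main obstacle is not any hard estimate but a structural one: the lemma only guarantees closeness of $\bar{\mathcal{T}}$ and $\mathcal{T}_\sigma$ when both are applied to the fixed $Q$-function of $\bar{\mathcal{M}}$, so the decomposition must be arranged so that $\mathcal{T}_\sigma$ is compared to $\bar{\mathcal{T}}$ only at $Q^*$ (never at $Q_\sigma^*$), with the remaining discrepancy absorbed by the contraction factor. I would also confirm that $R_\sigma$ is bounded for small $\sigma$ so that the contraction constant $\gamma$ and the Banach fixed-point theorem genuinely apply; over finite spaces this follows from $R_\sigma \to \bar{R}$ together with the boundedness of $\bar{R}$.
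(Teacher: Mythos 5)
Your proposal is correct and follows essentially the same argument as the paper: both insert the intermediate term $\mathcal{T}_\sigma Q^*$, bound the first term by Lemma~\ref{lem:bellman_operator} and the second by the $\gamma$-contraction property of $\mathcal{T}_\sigma$, and rearrange the resulting self-referential inequality, with the target accuracy $(1-\gamma)\epsilon$ fed into the lemma. The only cosmetic difference is that you fix $\epsilon'$ at the end rather than at the start, and you make explicit the Banach fixed-point and boundedness checks that the paper leaves implicit.
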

\begin{proof}
Fix any $\epsilon > 0$. By Lemma~\ref{lem:bellman_operator}, there exists $\sigma' > 0$ such that for all $0 < \sigma < \sigma'$, 
$$
\sup_{(s, a) \in \mathcal{S} \times \mathcal{A}} \left| (\bar{\mathcal{T}} Q^*)(s, a) - (\mathcal{T}_\sigma Q^*)(s, a) \right| < (1 - \gamma) \epsilon.
$$

Since $Q^*$ and $Q_\sigma^*$ are the fixed points of $\bar{\mathcal{T}}$ and $\mathcal{T}_\sigma$, respectively, and since the Bellman optimality operator is a $\gamma$-contraction, we have:

\begin{align}
\sup_{(s, a) \in \mathcal{S} \times \mathcal{A}} & \left| Q^*(s, a) - Q_\sigma^*(s, a) \right| 
\\&= \sup_{(s, a) \in \mathcal{S} \times \mathcal{A}} \left| (\bar{\mathcal{T}} Q^*)(s, a) - (\mathcal{T}_\sigma Q_\sigma^*)(s, a) \right| \\
&\leq \sup_{(s, a)} \left| (\bar{\mathcal{T}} Q^*)(s, a) - (\mathcal{T}_\sigma Q^*)(s, a) \right| 
   + \sup_{(s, a)} \left| (\mathcal{T}_\sigma Q^*)(s, a) - (\mathcal{T}_\sigma Q_\sigma^*)(s, a) \right| \\
&< (1 - \gamma) \epsilon + \gamma \sup_{(s, a)} \left| Q^*(s, a) - Q_\sigma^*(s, a) \right|.
\end{align}

Rearranging gives:
\begin{align}
\sup_{(s, a) \in \mathcal{S} \times \mathcal{A}} \left| Q^*(s, a) - Q_\sigma^*(s, a) \right| 
&< \frac{(1 - \gamma) \epsilon}{1 - \gamma} = \epsilon.
\end{align}
\end{proof}

Using Lemma~\ref{lem:q_value} and Corollary~\ref{cor:q_epsilon_delta}, we can now show that the optimal policy in $\mathcal{M}_\sigma$ selects actions close to those in the dataset for sufficiently small noise.
\begin{theorem}[No OOD Action Selection]
\label{the:no_ood}
Let $\pi^*_\sigma$ be the optimal policy in the NAMDP $\mathcal{M}_\sigma$. Then, for any $\epsilon > 0$, there exists $\sigma' > 0$ such that for all $0 < \sigma < \sigma'$ and for all $a' \in \text{supp}(\pi^*_\sigma(\cdot|s))$,  
\begin{equation*}
    \inf_{a \in \text{supp}(p_D(\cdot|s))} \|a' - a\|_2^2 < \epsilon,
\end{equation*}
where $\text{supp}(p_D(\cdot|s))$ denotes the support of the behavior policy $p_D$, that is, the set of actions observed in the dataset at state $s$.
\end{theorem}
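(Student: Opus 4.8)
The plan is to prove the contrapositive: I will show that any action $a'$ lying at squared distance at least $\epsilon$ from the dataset support cannot be selected by $\pi^*_\sigma$ once $\sigma$ is small enough. The two ingredients are the exact ``penalty gap'' that the distance term creates in the limiting NAMDP $\bar{\mathcal{M}}$ (Lemma~\ref{lem:q_value}) and the uniform convergence of optimal $Q$-functions $Q^*_\sigma \to Q^*$ (Corollary~\ref{cor:q_epsilon_delta}). The guiding intuition is that in the limit every OOD action is strictly suboptimal by an amount equal to its distance penalty, and uniform closeness of $Q^*_\sigma$ to $Q^*$ transfers a strict version of this gap to all sufficiently small $\sigma$.

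First I would fix $\epsilon > 0$ and apply Corollary~\ref{cor:q_epsilon_delta} with tolerance $\delta = \epsilon/4$ to obtain $\sigma' > 0$ with $\sup_{(s,a)}|Q^*(s,a) - Q^*_\sigma(s,a)| < \epsilon/4$ for all $0 < \sigma < \sigma'$; crucially this single $\sigma'$ is already uniform over the finite space $\mathcal{S} \times \mathcal{A}$. Next, fix an arbitrary state $s$ and an action $a'$ with $\inf_{a \in \text{supp}(p_D(\cdot|s))}\|a'-a\|_2^2 \geq \epsilon$. Applying Lemma~\ref{lem:q_value} to the optimal policy of $\bar{\mathcal{M}}$ (whose $Q$-function is $Q^*$) and rearranging gives
\[
Q^*(s,a') = \mathbb{E}_{a \sim p_C(\cdot|a',s)}[Q^*(s,a)] - \inf_{a \in \text{supp}(p_D(\cdot|s))}\|a'-a\|_2^2 \leq V^*(s) - \epsilon,
\]
where $V^*(s) = \max_a Q^*(s,a)$ and I used that $p_C(\cdot|a',s)$ is a probability measure, so its expectation of $Q^*$ is at most $V^*(s)$.

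Then I would exhibit a competing dataset action. By Corollary~\ref{cor:no_ood} the optimal policy of $\bar{\mathcal{M}}$ is supported on dataset actions, so $V^*(s)$ is attained by some $\hat{a} \in \text{supp}(p_D(\cdot|s))$, i.e.\ $Q^*(s,\hat{a}) = V^*(s)$. For $0 < \sigma < \sigma'$ the uniform bound then yields $Q^*_\sigma(s,\hat{a}) > V^*(s) - \epsilon/4$ and $Q^*_\sigma(s,a') < V^*(s) - 3\epsilon/4$, hence $Q^*_\sigma(s,\hat{a}) - Q^*_\sigma(s,a') > \epsilon/2 > 0$. Since the support of any optimal policy lies in $\arg\max_a Q^*_\sigma(s,a)$, this strict inequality forces $a' \notin \text{supp}(\pi^*_\sigma(\cdot|s))$, which is exactly the contrapositive of the claim.

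The main obstacle is making the ``limit'' statement (Lemma~\ref{lem:q_value}, valid only as $\sigma \to 0^+$) interact rigorously with a statement about each fixed small $\sigma$: the limiting gap is exact, but I must convert it into a strict, uniform margin that survives the perturbation. This is precisely what Corollary~\ref{cor:q_epsilon_delta} supplies, and the bookkeeping with $\delta = \epsilon/4$ guarantees that the $\epsilon$-penalty present in $\bar{\mathcal{M}}$ dominates the $2\delta$ error incurred by replacing $Q^*$ with $Q^*_\sigma$. A secondary point to handle carefully is that $\pi^*_\sigma$ may be stochastic or exhibit ties, which is resolved by the standard fact that an optimal policy places mass only on $Q^*_\sigma$-maximizing actions; finiteness of $\mathcal{S} \times \mathcal{A}$ then ensures all suprema and maximizers are attained and that one $\sigma'$ works simultaneously for every state and every far-away action.
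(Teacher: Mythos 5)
Your proposal is correct and follows essentially the same route as the paper's proof: both rest on the exact distance-penalty identity of Lemma~\ref{lem:q_value} applied to $Q^*$ and the uniform bound of Corollary~\ref{cor:q_epsilon_delta} with tolerance $\epsilon/4$, and both conclude by exhibiting a competing action whose $Q^*_\sigma$-value strictly exceeds that of the far-away action by more than $\epsilon/2$, so that action cannot lie in the support of $\pi^*_\sigma$. The only differences are cosmetic: you phrase it as a contrapositive rather than the paper's contradiction, and you compare against a global maximizer of $Q^*(s,\cdot)$ (using finiteness, with Corollary~\ref{cor:no_ood} as an optional extra) where the paper compares against $\max_{a \in C(a', p_D(\cdot|s))} Q^*_\sigma(s,a)$.
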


\begin{proof}
Assume for contradiction that there exists $\epsilon' > 0$ such that for all $\sigma' > 0$, there exists $0 < \sigma < \sigma'$ and $a' \in \text{supp}(\pi^*_\sigma(\cdot|s))$ satisfying
$$
\inf_{a \in \text{supp}(p_D(\cdot|s))} \|a' - a\|_2^2 \geq \epsilon'.
$$

By Corollary~\ref{cor:q_epsilon_delta}, there exists $\sigma' > 0$ such that for all $0 < \sigma < \sigma'$, 
\begin{equation}
    \label{eq:q_error}
\sup_{(s, a) \in \mathcal{S} \times \mathcal{A}} \left| Q^*(s, a) - Q_\sigma^*(s, a) \right| < \frac{\epsilon'}{4}.
\end{equation}

Next, using Lemma~\ref{lem:q_value}, which states that
$$
\mathbb{E}_{a \sim p_C(\cdot|a', s)}[Q^*(s, a)] = Q^*(s, a') + \inf_{a \in C(a', p_D(\cdot|s))} \|a' - a\|_2^2,
$$
we proceed as follows:

First, observe that the expected $Q$-value over $C(a', p_D(\cdot|s))$ under $Q_\sigma^*$ is lower bounded by the same expectation under $Q^*$, up to the error ${\epsilon'}/{4}$ from Eq.~\eqref{eq:q_error}:

\begin{align}
\max_{a \in C(a', p_D(\cdot|s))} Q_\sigma^*(s, a) 
&\geq \mathbb{E}_{a \sim p_D(\cdot|s)} \left[ Q_\sigma^*(s, a) \right] \\
&> \mathbb{E}_{a \sim p_D(\cdot|s)} \left[ Q^*(s, a) \right] - \frac{\epsilon'}{4}.
\end{align}

Applying Lemma~\ref{lem:q_value}, we can expand this expectation as:
\begin{align}
\mathbb{E}_{a \sim p_D(\cdot|s)} \left[ Q^*(s, a) \right] 
= Q^*(s, a') + \inf_{a \in \text{supp}(p_D(\cdot|s))} \|a' - a\|_2^2.
\end{align}

Substituting this into the inequality above yields:
\begin{align}
\max_{a \in C(a', p_D(\cdot|s))} Q_\sigma^*(s, a) 
&> Q^*(s, a') + \inf_{a \in \text{supp}(p_D(\cdot|s))} \|a' - a\|_2^2 - \frac{\epsilon'}{4} \\
&> Q_\sigma^*(s, a') + \inf_{a \in \text{supp}(p_D(\cdot|s))} \|a' - a\|_2^2 - \frac{\epsilon'}{2},
\end{align}
where the second inequality uses the fact that $Q^*$ and $Q_\sigma^*$ are close by at most ${\epsilon'}/{4}$ in both directions.

Since we have assumed that $\inf_{a \in \text{supp}(p_D(\cdot|s))} \|a' - a\|_2^2 \geq \epsilon'$, this further implies:
\begin{align}
\max_{a \in C(a', p_D(\cdot|s))} Q_\sigma^*(s, a) 
&\geq Q_\sigma^*(s, a') + \frac{\epsilon'}{2} > Q^*_\sigma(s, a') 
\end{align}

This shows that there exists an action in $C(a', p_D(\cdot|s))$ that achieves strictly higher $Q$-value than $a'$, meaning $a'$ cannot be optimal. This contradicts the assumption that $Q^*_\sigma$ is the optimal $Q$-value.

\end{proof}

\newpage
\section{Implementation Details}
\label{sec:implementation}
All experiments were conducted on a single NVIDIA RTX 3090 GPU with an Intel Xeon Gold 6330 CPU.

Our code is available at:~https://anonymous.4open.science/r/PANI-1EED

\subsection{IQL-AN}
\begin{algorithm}[ht]
    \caption{IQL with Penalized Action Noise Injection (IQL-AN)}
    \label{algo:IQL-AN}
    \begin{algorithmic}
        \STATE Initialize critic networks \(Q_{\theta_1}\), \(Q_{\theta_2}\), value network \(V_\psi\) and actor network \(\pi_\phi\) with random parameters \(\theta_1\), \(\theta_2\), \(\psi\) and \(\phi\)
        \STATE Initialize target networks: \(\theta'_1 \leftarrow \theta_1\), \(\theta'_2 \leftarrow \theta_2\), \(\phi' \leftarrow \phi\)
        \FOR{\(t = 1\) to \(T\)}
            \STATE Sample a mini-batch of transitions \((s, a, r, s')\) from the dataset \(\mathcal{D}\)
            \STATE Sample a noisy action \(a'\) from the distribution \(q_\sigma(\cdot \mid a)\)
            \STATE \textbf{Value update:}
            \STATE Minimize the PANI value objective \(J_V^\text{IQL}(\phi)\)  \eqref{eq:IQL_V}
            \STATE \textbf{Critic update:}
            \STATE Minimize the PANI critic objective \(J_Q^\text{IQL}(\theta)\)  \eqref{eq:IQL_Q}
            \STATE \textbf{Actor update:}
            \IF{policy is deterministic}
                \STATE Minimize the PANI deterministic actor objective \(J_\text{det}^\text{IQL}(\phi) \) \eqref{eq:IQL_deterministic}
            \ELSE
                \STATE Minimize the PANI stochastic actor objective \(J_\text{sto}^\text{IQL}(\phi)\)  \eqref{eq:IQL_stochastic}
            \ENDIF
            \STATE \textbf{Target network update:}
            \STATE Update critic target networks: \(\theta'_i \leftarrow \eta \theta_i + (1-\eta)\theta'_i\)
            \STATE Update actor target network: \(\phi' \leftarrow \eta \phi + (1-\eta)\phi'\)
        \ENDFOR
    \end{algorithmic}
\end{algorithm}

IQL-AN extends IQL \citep{kostrikov2021offline} by incorporating Penalized Action Noise Injection, enhancing its ability to address OOD overestimation as detailed in Algorithm~\ref{algo:IQL-AN}. The training process follows the objective functions defined below:

\begin{equation}
    \label{eq:IQL_V}
    J_V^\text{IQL}(\psi) = \mathbb{E}_{(s, a) \sim \mathcal{D}}\left[L_2^\tau \left(\min_{i = 1, 2}Q_{\theta_i'}(s, a) - V_\psi(s)\right)\right]
\end{equation}

\begin{equation*}
    \text{where} \quad L_2^\tau(x) = |\tau- \mathbb{I}(u < 0)|x^2
\end{equation*}

\begin{equation}
    \label{eq:IQL_Q}
    J_Q^\text{IQL}(\theta) = \mathbb{E}_{\substack{(s, a, s') \sim \mathcal{D} \\ a' \sim q_\sigma(\cdot|a)}}\left[\left(Q_\theta(s, a') - (r(s, a) - \|a-a'\|_2^2 + \gamma V_\psi(s')  \right)^2\right]
\end{equation}

\begin{equation}
    \label{eq:IQL_deterministic}
    J_\text{det}^\text{IQL}(\phi) = -\mathbb{E}_{(s, a) \sim \mathcal{D}}\left[\min_{i = 1, 2}Q_{\theta_i'}(s, \pi_\phi(s)) \right]
\end{equation}

\begin{equation}
    \label{eq:IQL_stochastic}
    J_\text{sto}^\text{IQL}(\phi) = -\mathbb{E}_{\substack{(s, a) \sim \mathcal{D} \\ a' \sim \pi_\phi(\cdot|s)}}\left[\min_{i = 1, 2}Q_{\theta_i'}(s, a') - \alpha \log \pi_\phi(a|s) \right]
\end{equation}

In the Gym-MuJoCo environments, we used a deterministic policy, whereas in the AntMaze environments, we employed a unimodal Gaussian policy transformed via a hyperbolic tangent bijection. Additionally, we incorporated the NLL loss used in DTQL \citep{chen2024diffusion}.  

During training, experiments were conducted with $\log \sigma$ values of $-1, -5, -10, -20$ for Gym-MuJoCo. For $\tau$, we primarily set $\tau = 0.7$ across all Gym-MuJoCo tasks but additionally explored $\tau = 0.99$ for the halfcheetah-medium-expert and halfcheetah-expert datasets. In the AntMaze environments, we tested with $\log \sigma$ values of $-10, -20$ and $\alpha$ values of $0.3, 0.5, 1.0$, while fixing $\tau = 0.9$.  

The optimal hyperparameters for each environment are provided in Table~\ref{tab:optimal_hyperparameters}, while the performance across different parameter settings is presented in Appendix~\ref{subsec:results}. For training curves, refer to Appendix~\ref{subsec:training_curve}.

\subsection{TD3-AN}
\begin{algorithm}[ht]
    \caption{TD3 with Penalized Action Noise Injection (TD3-AN)}
    \begin{algorithmic}
        \STATE Initialize critic networks \(Q_{\theta_1}\), \(Q_{\theta_2}\), and actor network \(\pi_\phi\) with random parameters \(\theta_1\), \(\theta_2\), and \(\phi\)
        \STATE Initialize target networks: \(\theta'_1 \leftarrow \theta_1\), \(\theta'_2 \leftarrow \theta_2\), \(\phi' \leftarrow \phi\)
        \FOR{\(t = 1\) to \(T\)}
            \STATE Sample a mini-batch of transitions \((s, a, r, s')\) from the dataset \(\mathcal{D}\)
            \STATE Sample a noisy action \(a'\) from the distribution \(q_\sigma(\cdot \mid a)\)
            
            \STATE \textbf{Critic update:}
            \STATE Compute the target action: \(\tilde{a} \leftarrow \pi_{\phi'}(s') + \epsilon\), where \(\epsilon \sim \text{clip}(\mathcal{N}(0, \bar{\sigma}), -c, c)\)
            \STATE Minimize the PANI critic objective \(J_Q^\text{TD3}(\theta)\)  (\ref{eq:TD3_Q})

            \IF{\(t \bmod d = 0\)}
                \STATE \textbf{Actor update:}
                \STATE Minimize the PANI actor objective \(J_\pi^\text{TD3}(\phi)\)  (\ref{eq:TD3_pi})
                \STATE \textbf{Target network update:}
                \STATE Update critic target networks: \(\theta'_i \leftarrow \eta \theta_i + (1-\eta)\theta'_i\)
                \STATE Update actor target network: \(\phi' \leftarrow \eta \phi + (1-\eta)\phi'\)
            \ENDIF
        \ENDFOR
    \end{algorithmic}
    \label{algo:TD3-AN}
\end{algorithm}

TD3-AN is an algorithm that applies Penalized Action Noise Injection to TD3 \citep{fujimoto2018addressing}. As described in Section~\ref{sec:PANI}, it can be implemented with only minor modifications, as shown in Algorithm~\ref{algo:TD3-AN}. Specifically, it is trained using the following objective functions:

\begin{equation}
    \label{eq:TD3_Q}
    J_Q^\text{TD3}(\theta) = \mathbb{E}_{\substack{(s, a, s') \sim \mathcal{D} \\ a' \sim q_\sigma(\cdot|a)}}\left[\left(Q_\theta(s, a') - (r(s, a) - \|a-a'\|_2^2 + \gamma \min_{i = 1, 2}Q_{\theta_i'}(s', \tilde{a})  \right)^2\right]
\end{equation}

\begin{equation}
    \label{eq:TD3_pi}
    J_\pi^\text{TD3}(\phi) = -\mathbb{E}_{(s, a) \sim \mathcal{D}}\left[\min_{i = 1, 2}Q_{\theta_i'}(s, \pi_\phi(s)) - \alpha \|a - \pi_\phi(s) \|_2^2 \right]
\end{equation}

When training TD3-AN, we use the hybrid noise distribution described in Section~\ref{para:HND} as the noise distribution. For the Gym-MuJoCo environments, experiments were conducted with $\alpha = 0$ and $\log \sigma$ values of $-1, -5, -10, -20$. In the AntMaze environments, we tested with $\alpha = 0.3, 0.5, 1.0$ and $\log \sigma$ values of $-5, -10, -20$. 

The optimal hyperparameters for each environment are provided in Table~\ref{tab:optimal_hyperparameters}, while the performance across different parameter settings is presented in Appendix~\ref{subsec:results}. For training curves, refer to Appendix~\ref{subsec:training_curve}.

\subsection{QGPO-AN}
\label{subsec:QGPO-AN}

In addition to TD3 and IQL, we further evaluated PANI on QGPO\citep{lu2023contrastive}, a state-of-the-art diffusion-based algorithm. PANI can be integrated into QGPO with only minor modifications, as shown in Algorithm~\ref{algo:QGPO}. For fair comparison, we used the same hyperparameters $(\beta_Q, K)$ as in the original QGPO. We also tested additional settings with guidance scales of $1.0$, $2.0$, $3.0$, $5.0$, $8.0$, and $10.0$, and $\log \sigma$ values of $-0.5$, $-1.0$, $-20.0$, and $-30.0$.

Due to computational constraints—each experiment requiring approximately 9 hours to complete—we were unable to perform extensive hyperparameter tuning. Nevertheless, the positive results obtained with default parameters are encouraging and suggest that further performance gains may be possible with more thorough optimization. The optimal hyperparameters used for each environment are summarized in Table~\ref{tab:QGPO_optimal}.

\begin{algorithm}[ht]
    \caption{QGPO with Penalized Action Noise Injection (QGPO-AN)}
    \label{algo:QGPO}
    \begin{algorithmic}
        \STATE Initialize behavior model $\epsilon_{\theta}$, critic $Q_\psi$, energy model $f_\phi$
        
        \STATE \textbf{Train behavior model:}
        \FOR{each gradient step}
            \STATE Sample a mini-batch of transitions $(s, a) \sim \mathcal{D}$
            \STATE Sample noise $\epsilon \sim \mathcal{N}(0, I)$
            \STATE Sample time $t \sim \mathcal{U}(0, T)$
            \STATE Compute perturbed actions $a_t \leftarrow \alpha_t a + \sigma_t \epsilon$
            \STATE Minimize $\| \epsilon_\theta(a_t \mid s, t) - \epsilon \|_2^2$
        \ENDFOR
        
        \STATE \textbf{Generate support actions:}
        \FOR{each state $s$ in $\mathcal{D}^\mu$}
            \STATE Sample $K$ actions $\hat{a}^{(1:K)} \sim \mu_\theta(\cdot \mid s)$
            \STATE Store $\hat{a}^{(1:K)}$ in $\mathcal{D}^{\mu_\theta}(s)$
        \ENDFOR
        
        \STATE \textbf{Train critic and energy model:}
        \FOR{each gradient step}
            \STATE Sample a mini-batch of transitions $(s, a, r, s') \sim \mathcal{D}$
            \STATE Sample noise $\epsilon \sim \mathcal{N}(0, I)$
            \STATE Sample time $t \sim \mathcal{U}(0, T)$
            \STATE Retrieve support actions $\hat{a}^{(1:K)} \sim \mathcal{D}^{\mu_\theta}(s)$ and $\hat{a}'^{(1:K)} \sim \mathcal{D}^{\mu_\theta}(s')$
            
            \STATE Compute target $Q$-value:
            \[
                \mathcal{T}^\pi Q_\psi(s, a) = r + \gamma \sum_{\hat{a}'} \frac{\exp(\beta_Q Q_\psi(s', \hat{a}'))}{\sum_j \exp(\beta_Q Q_\psi(s', \hat{a}'_j))} Q_\psi(s', \hat{a}')
            \]
            \STATE Sample a noisy action \(a'\) from the distribution \(q_\sigma(\cdot \mid a)\)
            \STATE Minimize $\| Q_\psi(s, a') - \mathcal{T}^\pi Q_\psi(s, a) - \|a - a'\|_2^2\|_2^2$
            
            \STATE Compute perturbed support actions $\hat{a}_t \leftarrow \alpha_t \hat{a} + \sigma_t \epsilon$
            
            \STATE Maximize:
            \[
                \sum_i \frac{\exp(\beta Q_\psi(s, \hat{a}_i))}{\sum_j \exp(\beta Q_\psi(s, \hat{a}_j))} \log \frac{\exp(f_\phi(\hat{a}_{i,t} \mid s, t))}{\sum_j \exp(f_\phi(\hat{a}_{j,t} \mid s, t))}
            \]
        \ENDFOR
    \end{algorithmic}
\end{algorithm}

\newpage
\subsection{Hyperparameters}

\begin{table}[ht]
    \centering
    \caption{TD3-AN's and IQL-AN's common hyperparameters}
    \begin{adjustbox}{max width=\textwidth}
    \begin{tabular}{l|l}
        \toprule
        \textbf{Hyperparameter} & \textbf{Value} \\
        \midrule
        optimizer & Adam \citep{kingma2014adam} \\
        batch size & 256 \\
        learning rate (all networks) & $10^{-3}$ \\
        target update rate ($\eta$) & $5\times 10^{-3}$ \\
        hidden dim (all networks) & 256 \\
        hidden layers (all networks) & 3 \\
        discount factor ($\gamma$) & 0.99 \\
        training steps ($T$) & $10^6$ \\
        activation & ReLU \\
        layer norm \citep{ba2016layer} (all networks) & True \\
        
        \bottomrule
    \end{tabular}
    \end{adjustbox}
    
    \label{tab:hyperparams}
\end{table}

\begin{table}[ht]
\caption{TD3-AN's common hyperparameters}
    \centering
    \begin{adjustbox}{max width=\textwidth}
    \begin{tabular}{l|l}
        \toprule
        \textbf{Hyperparameter} & \textbf{Value} \\
        \midrule
        Policy update frequency ($d$) & 2 \\
        Policy Noise ($\bar{\sigma}$) & 0.2 \\
        Policy Noise Clipping ($c$) & 0.5 \\
        \bottomrule
    \end{tabular}
    \end{adjustbox}
    
    \label{tab:TD3_AN_hyperparams}
\end{table}

\begin{table}[ht!]
\caption{Optimal hyperparameters for TD3-AN and IQL-AN across different environments}
\centering
\begin{adjustbox}{max width=0.5\textwidth}
\begin{tabular}{ll|cc|ccc}
\toprule
\multicolumn{2}{c|}{} & \multicolumn{2}{c|}{TD3} & \multicolumn{3}{c}{IQL} \\
\textbf{Dataset} & \textbf{Environment} & $\log\sigma$ & $\alpha$ & $\log\sigma$ & $\alpha$ & $\tau$ \\
\midrule
medium & halfcheetah & -20.0 & - & -20.0 & - & 0.70 \\
medium & hopper & -5.0 & - & -5.0 & - & 0.70 \\
medium & walker2d & -5.0 & - & -20.0 & - & 0.70 \\
\midrule
medium-replay & halfcheetah & -20.0 & - & -20.0 & - & 0.70 \\
medium-replay & hopper & -10.0 & - & -20.0 & - & 0.70 \\
medium-replay & walker2d & -5.0 & - & -5.0 & - & 0.70 \\
\midrule
medium-expert & halfcheetah & -10.0 & - & -1.0 & - & 0.99 \\
medium-expert & hopper & -1.0 & - & -1.0 & - & 0.70 \\
medium-expert & walker2d & -20.0 & - & -10.0 & - & 0.70 \\
\midrule
expert & halfcheetah & -20.0 & - & -1.0 & - & 0.99 \\
expert & hopper & -1.0 & - & -1.0 & - & 0.70 \\
expert & walker2d & -5.0 & - & -1.0 & - & 0.70 \\
\midrule
full-replay & halfcheetah & -20.0 & - & -20.0 & - & 0.70 \\
full-replay & hopper & -20.0 & - & -5.0 & - & 0.70 \\
full-replay & walker2d & -10.0 & - & -10.0 & - & 0.70 \\
\midrule
random & halfcheetah & -20.0 & - & -20.0 & - & 0.70 \\
random & hopper & -10.0 & - & -5.0 & - & 0.70 \\
random & walker2d & -10.0 & - & -10.0 & - & 0.70 \\
\midrule
- & umaze & -20.0 & 1.0 & -10.0 & 1.0 & 0.90 \\
diverse & umaze & -20.0 & 1.0 & -20.0 & 1.0 & 0.90 \\
play & medium & -10.0 & 1.0 & -20.0 & 0.3 & 0.90 \\
diverse & medium & -10.0 & 1.0 & -10.0 & 0.3 & 0.90 \\
play & large & -20.0 & 0.5 & -20.0 & 0.3 & 0.90 \\
diverse & large & -10.0 & 1.0 & -20.0 & 0.3 & 0.90 \\
\bottomrule
\end{tabular}
\end{adjustbox}

\label{tab:QGPO_optimal}
\end{table}

\begin{table}[ht!]
\caption{Optimal hyperparameters for QGPO-AN across different environments}
\centering
\begin{adjustbox}{max width=0.5\textwidth}
\begin{tabular}{ll|cc}
\toprule
\textbf{Dataset} & \textbf{Environment} & $\log\sigma$ & guidance scale \\
\midrule
medium & halfcheetah & -20.0 & 10.0 \\
medium & hopper & -20.0 & 10.0 \\
medium & walker2d & -30.0 & 10.0 \\
\midrule
medium-replay & halfcheetah & -20.0 & 8.0 \\
medium-replay & hopper & -1.0 & 5.0 \\
medium-replay & walker2d & -0.5 & 8.0 \\
\midrule
medium-expert & halfcheetah & -20.0 & 5.0 \\
medium-expert & hopper & -1.0 & 1.0 \\
medium-expert & walker2d & -20.0 & 10.0 \\
\bottomrule
\end{tabular}
\end{adjustbox}

\label{tab:optimal_hyperparameters}
\end{table}

\newpage
\subsection{Results}

\label{subsec:results}

\begin{landscape}
\begin{table*}[!ht]
    \centering
    \caption{The average normalized scores, as suggested by D4RL, are reported from the final evaluation on Gym-MuJoCo and Antmaze tasks. For Gym-MuJoCo tasks, results are obtained using five independent training seeds and 10 trajectories per seed, while for Antmaze, 100 trajectories per seed are used. The $\pm$ symbol denotes the standard error of the mean performance across seeds. Performance metrics and standard errors for baseline methods are taken from their respective original papers, with the exception of IQL and TD3+BC, which are sourced from ReBRAC. Diffusion-QL and DTQL report their metrics differently - their $\pm$ values represent standard errors calculated across all trajectories from all seeds, rather than across the mean performances of individual seeds. The highest scores for each task are highlighted in bold, while the second-highest scores are underlined. The Average (medium) score includes only medium, medium-replay, and medium-expert datasets.}
\begin{center}
\begin{small}
\begin{adjustbox}{max width=1.47\textwidth}
\begin{tabular}{cc ccc ccc cc cc}
\toprule
& & \multicolumn{3}{c}{Diffusion-free} & \multicolumn{3}{c}{Diffusion-policy} & \multicolumn{2}{c}{Diffusion-based} & \multicolumn{2}{c}{Ours} \\
\textbf{Dataset} & \textbf{Environment} & \textbf{IQL} & \textbf{TD3+BC} & \textbf{ReBRAC} & \textbf{SfBC} & \textbf{D-QL*} & \textbf{QGPO} & \textbf{SRPO} & \textbf{DTQL*} & \textbf{IQL-AN} & \textbf{TD3-AN} \\
\midrule
\multicolumn{1}{l}{medium} & \multicolumn{1}{l}{halfcheetah} & 50.0 $\pm$ 0.1 & 54.7 $\pm$ 0.3 & \textbf{65.6} $\pm$ 0.3 & 45.9 $\pm$ 0.7 & 51.1 $\pm$ 0.5 & 54.1 $\pm$ 0.2 & 60.4 $\pm$ 0.3 & 57.9 $\pm$ 0.1 & 55.4 $\pm$ 0.3 & \underline{61.5 $\pm$ 0.3} \\
\multicolumn{1}{l}{medium} & \multicolumn{1}{l}{hopper} & 65.2 $\pm$ 1.3 & 60.9 $\pm$ 2.4 & \textbf{102.0} $\pm$ 0.3 & 57.1 $\pm$ 1.3 & 90.5 $\pm$ 4.6 & 98.0 $\pm$ 1.2 & 95.5 $\pm$ 0.8 & \underline{99.6} $\pm$ 0.9 & 98.4 $\pm$ 1.2 & 98.2 $\pm$ 0.9 \\
\multicolumn{1}{l}{medium} & \multicolumn{1}{l}{walker2d} & 80.7 $\pm$ 1.1 & 77.7 $\pm$ 0.9 & 82.5 $\pm$ 1.1 & 77.9 $\pm$ 0.8 & 87.0 $\pm$ 0.9 & 86.0 $\pm$ 0.3 & 84.4 $\pm$ 1.8 & \textbf{89.4} $\pm$ 0.1 & 87.5 $\pm$ 3.7 & \underline{88.5 $\pm$ 0.6} \\
\midrule
\multicolumn{1}{l}{medium-replay} & \multicolumn{1}{l}{halfcheetah} & 42.1 $\pm$ 1.1 & 45.0 $\pm$ 0.3 & 51.0 $\pm$ 0.3 & 37.1 $\pm$ 0.5 & 47.8 $\pm$ 0.3 & 47.6 $\pm$ 0.6 & \underline{51.4} $\pm$ 1.4 & 50.9 $\pm$ 0.1 & 49.5 $\pm$ 0.4 & \textbf{53.3 $\pm$ 0.3} \\
\multicolumn{1}{l}{medium-replay} & \multicolumn{1}{l}{hopper} & 89.6 $\pm$ 4.2 & 55.1 $\pm$ 10.0 & 98.1 $\pm$ 1.7 & 86.2 $\pm$ 2.9 & 100.7 $\pm$ 0.6 & 96.9 $\pm$ 1.2 & \underline{101.2} $\pm$ 0.4 & 100.0 $\pm$ 0.1 & 100.8 $\pm$ 0.4 & \textbf{102.3 $\pm$ 0.2} \\
\multicolumn{1}{l}{medium-replay} & \multicolumn{1}{l}{walker2d} & 75.4 $\pm$ 2.9 & 68.0 $\pm$ 6.1 & 77.3 $\pm$ 2.5 & 65.1 $\pm$ 1.8 & \textbf{95.5} $\pm$ 1.5 & 84.4 $\pm$ 1.8 & 84.6 $\pm$ 2.9 & 88.5 $\pm$ 2.2 & \underline{88.8 $\pm$ 3.6} & 87.8 $\pm$ 6.3 \\
\midrule
\multicolumn{1}{l}{medium-expert} & \multicolumn{1}{l}{halfcheetah} & 92.7 $\pm$ 0.9 & 89.1 $\pm$ 1.8 & \textbf{101.1} $\pm$ 1.6 & 92.6 $\pm$ 0.2 & \underline{96.8} $\pm$ 0.3 & 93.5 $\pm$ 0.1 & 92.2 $\pm$ 1.2 & 92.7 $\pm$ 0.2 & 89.9 $\pm$ 2.3 & 96.4 $\pm$ 0.8 \\
\multicolumn{1}{l}{medium-expert} & \multicolumn{1}{l}{hopper} & 85.5 $\pm$ 9.4 & 87.8 $\pm$ 3.3 & 107.0 $\pm$ 2.0 & 108.6 $\pm$ 0.7 & \textbf{111.1} $\pm$ 1.3 & 108.0 $\pm$ 1.1 & 100.1 $\pm$ 5.7 & \underline{109.3} $\pm$ 1.5 & 105.3 $\pm$ 3.7 & 108.8 $\pm$ 0.9 \\
\multicolumn{1}{l}{medium-expert} & \multicolumn{1}{l}{walker2d} & 112.1 $\pm$ 0.2 & 110.4 $\pm$ 0.2 & 111.6 $\pm$ 0.1 & 109.8 $\pm$ 0.1 & 110.1 $\pm$ 0.3 & 110.7 $\pm$ 0.3 & \underline{114.0} $\pm$ 0.9 & 110.0 $\pm$ 0.1 & 109.6 $\pm$ 0.6 & \textbf{114.9 $\pm$ 0.2} \\
\midrule
\multicolumn{1}{l}{expert} & \multicolumn{1}{l}{halfcheetah} & 95.5 $\pm$ 0.7 & 93.4 $\pm$ 0.1 & \textbf{105.9} $\pm$ 0.5 & - & - & - & - & - & 93.8 $\pm$ 0.2 & \underline{104.4 $\pm$ 3.8} \\
\multicolumn{1}{l}{expert} & \multicolumn{1}{l}{hopper} & 108.8 $\pm$ 1.0 & \textbf{109.6} $\pm$ 1.2 & 100.1 $\pm$ 2.6 & - & - & - & - & - & 108.6 $\pm$ 2.6 & \underline{109.0 $\pm$ 3.0} \\
\multicolumn{1}{l}{expert} & \multicolumn{1}{l}{walker2d} & 96.9 $\pm$ 10.2 & 110.0 $\pm$ 0.2 & \underline{112.3} $\pm$ 0.1 & - & - & - & - & - & 108.2 $\pm$ 0.2 & \textbf{112.8 $\pm$ 0.2} \\
\midrule
\multicolumn{1}{l}{full-replay} & \multicolumn{1}{l}{halfcheetah} & 75.0 $\pm$ 0.2 & 75.0 $\pm$ 0.8 & \textbf{82.1} $\pm$ 0.3 & - & - & - & - & - & 78.3 $\pm$ 0.1 & \underline{81.2 $\pm$ 0.5} \\
\multicolumn{1}{l}{full-replay} & \multicolumn{1}{l}{hopper} & 104.4 $\pm$ 3.4 & 97.9 $\pm$ 5.5 & \underline{107.1} $\pm$ 0.1 & - & - & - & - & - & 105.9 $\pm$ 0.4 & \textbf{108.3 $\pm$ 0.1} \\
\multicolumn{1}{l}{full-replay} & \multicolumn{1}{l}{walker2d} & 97.5 $\pm$ 0.4 & 90.3 $\pm$ 1.7 & \underline{102.2} $\pm$ 0.5 & - & - & - & - & - & 100.6 $\pm$ 1.7 & \textbf{103.8 $\pm$ 0.7} \\
\midrule
\multicolumn{1}{l}{random} & \multicolumn{1}{l}{halfcheetah} & 19.5 $\pm$ 0.3 & \textbf{30.9} $\pm$ 0.1 & 29.5 $\pm$ 0.5 & - & - & - & - & - & 24.3 $\pm$ 3.1 & \underline{30.0 $\pm$ 0.5} \\
\multicolumn{1}{l}{random} & \multicolumn{1}{l}{hopper} & \textbf{10.1} $\pm$ 1.9 & 8.5 $\pm$ 0.2 & 8.1 $\pm$ 0.8 & - & - & - & - & - & 9.0 $\pm$ 0.2 & \underline{10.0 $\pm$ 0.6} \\
\multicolumn{1}{l}{random} & \multicolumn{1}{l}{walker2d} & 11.3 $\pm$ 2.2 & 2.0 $\pm$ 1.1 & \underline{18.4} $\pm$ 1.4 & - & - & - & - & - & \textbf{19.4 $\pm$ 2.8} & 5.8 $\pm$ 0.7 \\
\midrule
\multicolumn{2}{l}{\textbf{Average (Gym-MuJoCo)}} & 72.9 & 70.3 & \underline{81.2} & - & - & - & - & - & 79.6 & \textbf{82.1} \\
\multicolumn{2}{l}{\textbf{Average (medium)}} & 77.0 & 72.1 & 88.5 & 75.6 & 88.0 & 86.6 & 87.1 & \underline{88.7} & 87.2 & \textbf{90.2} \\
\midrule
\multicolumn{1}{l}{-} & \multicolumn{1}{l}{umaze} & 83.3 $\pm$ 1.4 & 66.3 $\pm$ 2.0 & \underline{97.8} $\pm$ 0.3 & 92.0 $\pm$ 0.7 & 93.4 $\pm$ 3.4 & 96.4 $\pm$ 0.6 & 97.1 $\pm$ 1.1 & 92.6 $\pm$ 1.2 & 91.2 $\pm$ 1.1 & \textbf{98.4 $\pm$ 0.5} \\
\multicolumn{1}{l}{diverse} & \multicolumn{1}{l}{umaze} & 70.6 $\pm$ 1.2 & 53.8 $\pm$ 2.7 & \textbf{88.3} $\pm$ 4.1 & \underline{85.3} $\pm$ 1.1 & 66.2 $\pm$ 8.6 & 74.4 $\pm$ 4.3 & 82.1 $\pm$ 4.4 & 74.4 $\pm$ 1.9 & 68.0 $\pm$ 3.2 & 74.6 $\pm$ 4.9 \\
\midrule
\multicolumn{1}{l}{play} & \multicolumn{1}{l}{medium} & 64.6 $\pm$ 1.5 & 26.5 $\pm$ 5.8 & \textbf{84.0} $\pm$ 1.3 & 81.3 $\pm$ 0.8 & 76.6 $\pm$ 10.8 & 83.6 $\pm$ 2.0 & 80.7 $\pm$ 2.9 & 76.0 $\pm$ 1.9 & 74.4 $\pm$ 3.7 & \underline{83.8 $\pm$ 2.7} \\
\multicolumn{1}{l}{diverse} & \multicolumn{1}{l}{medium} & 61.7 $\pm$ 1.9 & 25.9 $\pm$ 4.8 & 76.3 $\pm$ 4.3 & 82.0 $\pm$ 1.0 & 78.6 $\pm$ 10.3 & \underline{83.8} $\pm$ 1.6 & 75.0 $\pm$ 5.0 & 80.6 $\pm$ 1.8 & 75.2 $\pm$ 1.6 & \textbf{85.8 $\pm$ 1.2} \\
\midrule
\multicolumn{1}{l}{play} & \multicolumn{1}{l}{large} & 42.5 $\pm$ 2.1 & 0.0 $\pm$ 0.0 & 60.4 $\pm$ 8.3 & 59.3 $\pm$ 4.5 & 46.4 $\pm$ 8.3 & \textbf{66.6} $\pm$ 4.4 & 53.6 $\pm$ 5.1 & 59.2 $\pm$ 2.2 & 49.4 $\pm$ 3.0 & \underline{65.4 $\pm$ 2.7} \\
\multicolumn{1}{l}{diverse} & \multicolumn{1}{l}{large} & 27.6 $\pm$ 2.5 & 0.0 $\pm$ 0.0 & 54.4 $\pm$ 7.9 & 45.5 $\pm$ 2.1 & 56.6 $\pm$ 7.6 & \textbf{64.8} $\pm$ 2.5 & 53.6 $\pm$ 2.6 & \underline{62.0} $\pm$ 2.2 & 52.8 $\pm$ 2.6 & 58.0 $\pm$ 5.1 \\
\midrule
\multicolumn{2}{l}{\textbf{Average (AntMaze)}} & 58.4 & 28.8 & 76.9 & 74.2 & 69.6 & \textbf{78.3} & 73.7 & 74.1 & 68.5 & \underline{77.7} \\
\bottomrule
\end{tabular}
\end{adjustbox}
\end{small}
\label{exp:full-table}
\end{center}
\end{table*}
\end{landscape}
 
\newpage

\begin{table}[h]
\caption{Comparison of QGPO and QGPO+AN performance across different environments. The $\pm$ values represent the standard error of algorithm performance across 3 random seeds.}
\centering
\begin{tabular}{lccc}
\toprule
Environment & QGPO & QGPO-AN \\
\midrule
halfcheetah-medium-expert & 93.5 & 93.6 $\pm$ 0.3 (+0.1) \\
hopper-medium-expert & 108.0 & 111.2 $\pm$ 1.7 (+3.2) \\
walker2d-medium-expert & 110.7 & 111.0 $\pm$ 0.4 (+0.3) \\
halfcheetah-medium & 54.1 & 53.8 $\pm$ 0.4 (-0.3) \\
hopper-medium & 98.0 & 99.4 $\pm$ 1.1 (+1.4) \\
walker2d-medium & 86.0 & 86.1 $\pm$ 0.4 (+0.1) \\
halfcheetah-medium-replay & 47.6 & 47.6 $\pm$ 0.1 (-0.0) \\
hopper-medium-replay & 96.9 & 99.8 $\pm$ 0.5 (+2.9) \\
walker2d-medium-replay & 84.4 & 89.7 $\pm$ 1.5 (+5.3) \\
\bottomrule
\end{tabular}
\vspace{-0.0cm}
\label{tab:qgpo_pani_comparison}
\end{table}

\begin{table*}[ht]
    \centering
    \caption{Final performance of TD3-AN with Gaussian Noise Distribution across different $\log \sigma$ values in Gym-MuJoCo environments. Results are averaged over five training seeds with 10 trajectories per seed. Standard deviations are indicated by $\pm$, and the highest scores within 5\% of the best per task are highlighted in bold.}
\begin{center}
\begin{small}
\begin{adjustbox}{max width=\textwidth}
\begin{tabular}{llcccc}
\toprule
\textbf{Dataset} & \textbf{Environment} & \textbf{0.0} & \textbf{-0.5} & \textbf{-1.0} & \textbf{-2.0} \\
\midrule
\multicolumn{1}{l}{medium} & \multicolumn{1}{l}{halfcheetah} & 4.70$\pm$6.79 & 47.64$\pm$0.69 & 51.76$\pm$0.41 & \textbf{65.36}$\pm$1.38 \\
\multicolumn{1}{l}{medium} & \multicolumn{1}{l}{hopper} & 47.91$\pm$2.24 & \textbf{76.00}$\pm$2.60 & 71.41$\pm$23.17 & 3.98$\pm$3.90 \\
\multicolumn{1}{l}{medium} & \multicolumn{1}{l}{walker2d} & 54.84$\pm$27.71 & \textbf{82.77}$\pm$2.33 & \textbf{85.67}$\pm$1.47 & 0.29$\pm$0.82 \\
\midrule
\multicolumn{1}{l}{medium-replay} & \multicolumn{1}{l}{halfcheetah} & 8.48$\pm$8.63 & 22.49$\pm$10.17 & \textbf{35.76}$\pm$4.75 & \textbf{34.69}$\pm$3.66 \\
\multicolumn{1}{l}{medium-replay} & \multicolumn{1}{l}{hopper} & 1.82$\pm$0.01 & 40.63$\pm$38.92 & \textbf{79.68}$\pm$14.52 & 22.08$\pm$7.66 \\
\multicolumn{1}{l}{medium-replay} & \multicolumn{1}{l}{walker2d} & -0.22$\pm$0.06 & -0.21$\pm$0.02 & \textbf{63.36}$\pm$27.89 & 5.46$\pm$2.18 \\
\midrule
\multicolumn{1}{l}{medium-expert} & \multicolumn{1}{l}{halfcheetah} & 41.80$\pm$4.72 & 84.29$\pm$4.21 & \textbf{89.94}$\pm$4.55 & 67.52$\pm$23.73 \\
\multicolumn{1}{l}{medium-expert} & \multicolumn{1}{l}{hopper} & 58.54$\pm$26.48 & \textbf{107.21}$\pm$8.34 & 70.02$\pm$26.77 & 1.52$\pm$0.49 \\
\multicolumn{1}{l}{medium-expert} & \multicolumn{1}{l}{walker2d} & 78.38$\pm$41.07 & \textbf{108.40}$\pm$0.37 & \textbf{103.35}$\pm$11.21 & -0.12$\pm$0.04 \\
\midrule
\multicolumn{1}{l}{expert} & \multicolumn{1}{l}{halfcheetah} & 86.06$\pm$3.94 & \textbf{93.10}$\pm$0.36 & \textbf{97.69}$\pm$0.60 & 4.20$\pm$4.43 \\
\multicolumn{1}{l}{expert} & \multicolumn{1}{l}{hopper} & \textbf{104.92}$\pm$4.26 & \textbf{108.02}$\pm$6.66 & 76.50$\pm$18.32 & 1.03$\pm$0.32 \\
\multicolumn{1}{l}{expert} & \multicolumn{1}{l}{walker2d} & \textbf{108.33}$\pm$0.52 & \textbf{108.99}$\pm$0.19 & \textbf{109.87}$\pm$0.09 & 18.99$\pm$23.86 \\
\midrule
\multicolumn{1}{l}{full-replay} & \multicolumn{1}{l}{halfcheetah} & 0.34$\pm$0.30 & 69.54$\pm$1.47 & 73.94$\pm$0.74 & \textbf{80.42}$\pm$1.40 \\
\multicolumn{1}{l}{full-replay} & \multicolumn{1}{l}{hopper} & 0.65$\pm$0.03 & \textbf{105.52}$\pm$1.24 & \textbf{108.56}$\pm$0.60 & 39.47$\pm$12.38 \\
\multicolumn{1}{l}{full-replay} & \multicolumn{1}{l}{walker2d} & 1.35$\pm$1.55 & 71.36$\pm$43.82 & \textbf{101.12}$\pm$1.79 & 25.37$\pm$26.63 \\
\midrule
\multicolumn{1}{l}{random} & \multicolumn{1}{l}{halfcheetah} & 17.13$\pm$0.76 & 22.26$\pm$1.08 & \textbf{31.42}$\pm$0.75 & \textbf{30.08}$\pm$2.54 \\
\multicolumn{1}{l}{random} & \multicolumn{1}{l}{hopper} & 7.31$\pm$0.12 & 7.59$\pm$0.14 & 8.94$\pm$2.01 & \textbf{9.77}$\pm$0.52 \\
\multicolumn{1}{l}{random} & \multicolumn{1}{l}{walker2d} & -0.09$\pm$0.01 & -0.10$\pm$0.00 & \textbf{7.15}$\pm$0.59 & 3.83$\pm$1.78 \\
\midrule
\multicolumn{2}{l}{Average} & 34.57 & 64.20 & \textbf{70.34} & 23.00 \\
\bottomrule
\end{tabular}
\end{adjustbox}
\end{small}
\end{center}
\end{table*}
\begin{table*}[ht]
    \centering
    \caption{Final performance of TD3-AN with Laplace Noise Distribution across different $\log \sigma$ values in Gym-MuJoCo environments. Results are averaged over five training seeds with 10 trajectories per seed. Standard deviations are indicated by $\pm$, and the highest scores within 5\% of the best per task are highlighted in bold.}
\begin{center}
\begin{small}
\begin{adjustbox}{max width=\textwidth}
\begin{tabular}{llcccc}
\toprule
\textbf{Dataset} & \textbf{Environment} & \textbf{0.0} & \textbf{-0.5} & \textbf{-1.0} & \textbf{-2.0} \\
\midrule
\multicolumn{1}{l}{medium} & \multicolumn{1}{l}{halfcheetah} & 27.96$\pm$9.30 & 47.93$\pm$0.46 & 51.65$\pm$0.50 & \textbf{64.79}$\pm$0.80 \\
\multicolumn{1}{l}{medium} & \multicolumn{1}{l}{hopper} & 53.43$\pm$5.83 & 84.00$\pm$10.10 & \textbf{93.21}$\pm$3.67 & 6.79$\pm$5.63 \\
\multicolumn{1}{l}{medium} & \multicolumn{1}{l}{walker2d} & 74.03$\pm$7.65 & \textbf{84.62}$\pm$0.50 & \textbf{84.97}$\pm$1.80 & 0.50$\pm$1.15 \\
\midrule
\multicolumn{1}{l}{medium-replay} & \multicolumn{1}{l}{halfcheetah} & 32.89$\pm$7.08 & 37.54$\pm$4.96 & \textbf{47.04}$\pm$2.77 & \textbf{45.72}$\pm$7.22 \\
\multicolumn{1}{l}{medium-replay} & \multicolumn{1}{l}{hopper} & 23.23$\pm$2.71 & 35.24$\pm$36.73 & \textbf{79.60}$\pm$18.46 & 30.72$\pm$10.57 \\
\multicolumn{1}{l}{medium-replay} & \multicolumn{1}{l}{walker2d} & -0.23$\pm$0.02 & 1.58$\pm$3.06 & \textbf{71.20}$\pm$37.33 & 6.64$\pm$1.82 \\
\midrule
\multicolumn{1}{l}{medium-expert} & \multicolumn{1}{l}{halfcheetah} & 88.46$\pm$2.19 & 88.00$\pm$6.00 & \textbf{96.38}$\pm$0.82 & 85.87$\pm$8.19 \\
\multicolumn{1}{l}{medium-expert} & \multicolumn{1}{l}{hopper} & 58.91$\pm$16.27 & \textbf{91.04}$\pm$29.10 & 74.97$\pm$29.34 & 2.81$\pm$1.47 \\
\multicolumn{1}{l}{medium-expert} & \multicolumn{1}{l}{walker2d} & \textbf{109.05}$\pm$0.69 & \textbf{109.61}$\pm$0.29 & \textbf{110.00}$\pm$0.36 & 0.64$\pm$1.64 \\
\midrule
\multicolumn{1}{l}{expert} & \multicolumn{1}{l}{halfcheetah} & 88.15$\pm$4.16 & 93.87$\pm$0.62 & \textbf{99.16}$\pm$1.09 & 48.57$\pm$14.05 \\
\multicolumn{1}{l}{expert} & \multicolumn{1}{l}{hopper} & \textbf{95.63}$\pm$24.05 & \textbf{96.38}$\pm$4.84 & 70.74$\pm$18.13 & 2.12$\pm$1.84 \\
\multicolumn{1}{l}{expert} & \multicolumn{1}{l}{walker2d} & \textbf{108.68}$\pm$0.49 & \textbf{109.57}$\pm$0.32 & \textbf{110.52}$\pm$0.25 & 24.42$\pm$23.81 \\
\midrule
\multicolumn{1}{l}{full-replay} & \multicolumn{1}{l}{halfcheetah} & 15.65$\pm$12.65 & 73.29$\pm$2.56 & \textbf{76.60}$\pm$0.65 & \textbf{80.54}$\pm$0.97 \\
\multicolumn{1}{l}{full-replay} & \multicolumn{1}{l}{hopper} & 91.48$\pm$32.87 & \textbf{106.67}$\pm$0.65 & \textbf{107.07}$\pm$0.51 & 48.80$\pm$20.94 \\
\multicolumn{1}{l}{full-replay} & \multicolumn{1}{l}{walker2d} & 4.27$\pm$1.86 & \textbf{99.16}$\pm$0.75 & \textbf{102.15}$\pm$1.02 & 38.84$\pm$26.75 \\
\midrule
\multicolumn{1}{l}{random} & \multicolumn{1}{l}{halfcheetah} & 20.57$\pm$1.93 & 24.27$\pm$1.56 & 29.87$\pm$2.06 & \textbf{32.31}$\pm$1.52 \\
\multicolumn{1}{l}{random} & \multicolumn{1}{l}{hopper} & 7.64$\pm$0.19 & 7.62$\pm$0.20 & \textbf{11.08}$\pm$2.51 & 9.64$\pm$0.72 \\
\multicolumn{1}{l}{random} & \multicolumn{1}{l}{walker2d} & -0.09$\pm$0.00 & -0.07$\pm$0.01 & \textbf{9.18}$\pm$4.38 & 5.34$\pm$2.17 \\
\midrule
\multicolumn{2}{l}{Average} & 49.98 & 66.13 & \textbf{73.63} & 29.73 \\
 
\bottomrule
\end{tabular}
\end{adjustbox}
\end{small}
\end{center}
\end{table*}

\begin{table*}[!ht]
    \centering
    \caption{Final performance of TD3-AN with hybrid noise distribution across different $\log \sigma$ values in Gym-MuJoCo environments. Results are averaged over five training seeds with 10 trajectories per seed. Standard deviations are indicated by $\pm$, and the highest scores within 5\% of the best per task are highlighted in bold.}
\begin{center}
\begin{small}
\begin{adjustbox}{max width=\textwidth}
\begin{tabular}{llcccc}
\toprule
\textbf{Dataset} & \textbf{Environment} & \textbf{-20} & \textbf{-10} & \textbf{-5} & \textbf{-1} \\
\midrule
\multicolumn{1}{l}{medium} & \multicolumn{1}{l}{halfcheetah} & \textbf{61.49}$\pm$0.73 & \textbf{60.30}$\pm$0.84 & 56.00$\pm$0.41 & 25.04$\pm$7.55 \\
\multicolumn{1}{l}{medium} & \multicolumn{1}{l}{hopper} & 91.46$\pm$8.23 & \textbf{95.66}$\pm$2.80 & \textbf{98.18}$\pm$2.07 & 65.68$\pm$1.60 \\
\multicolumn{1}{l}{medium} & \multicolumn{1}{l}{walker2d} & 39.52$\pm$31.27 & 76.67$\pm$34.27 & \textbf{88.49}$\pm$1.28 & 38.13$\pm$35.15 \\
\midrule
\multicolumn{1}{l}{medium-replay} & \multicolumn{1}{l}{halfcheetah} & \textbf{53.35}$\pm$0.68 & \textbf{52.25}$\pm$0.75 & 47.39$\pm$2.42 & 13.92$\pm$11.19 \\
\multicolumn{1}{l}{medium-replay} & \multicolumn{1}{l}{hopper} & 96.04$\pm$7.99 & \textbf{102.30}$\pm$0.41 & \textbf{100.31}$\pm$0.93 & 0.95$\pm$0.57 \\
\multicolumn{1}{l}{medium-replay} & \multicolumn{1}{l}{walker2d} & \textbf{84.03}$\pm$16.25 & \textbf{86.30}$\pm$8.64 & \textbf{87.82}$\pm$14.09 & 0.86$\pm$1.61 \\
\midrule
\multicolumn{1}{l}{medium-expert} & \multicolumn{1}{l}{halfcheetah} & \textbf{94.05}$\pm$6.34 & \textbf{96.44}$\pm$1.83 & \textbf{91.66}$\pm$4.29 & 43.59$\pm$3.74 \\
\multicolumn{1}{l}{medium-expert} & \multicolumn{1}{l}{hopper} & 41.61$\pm$8.27 & 71.33$\pm$9.41 & 83.46$\pm$18.92 & \textbf{108.84}$\pm$2.10 \\
\multicolumn{1}{l}{medium-expert} & \multicolumn{1}{l}{walker2d} & \textbf{114.85}$\pm$0.47 & \textbf{111.98}$\pm$0.90 & \textbf{111.27}$\pm$0.49 & 104.10$\pm$10.82 \\
\midrule
\multicolumn{1}{l}{expert} & \multicolumn{1}{l}{halfcheetah} & \textbf{104.45}$\pm$8.52 & 97.84$\pm$13.39 & \textbf{101.62}$\pm$1.00 & 73.68$\pm$8.91 \\
\multicolumn{1}{l}{expert} & \multicolumn{1}{l}{hopper} & 37.20$\pm$11.38 & 48.20$\pm$8.17 & 65.58$\pm$9.04 & \textbf{109.03}$\pm$6.74 \\
\multicolumn{1}{l}{expert} & \multicolumn{1}{l}{walker2d} & 32.95$\pm$55.54 & 59.99$\pm$63.97 & \textbf{112.76}$\pm$0.45 & \textbf{108.92}$\pm$0.33 \\
\midrule
\multicolumn{1}{l}{full-replay} & \multicolumn{1}{l}{halfcheetah} & \textbf{81.17}$\pm$1.07 & \textbf{81.12}$\pm$0.63 & \textbf{78.61}$\pm$1.21 & -1.37$\pm$0.51 \\
\multicolumn{1}{l}{full-replay} & \multicolumn{1}{l}{hopper} & \textbf{108.31}$\pm$0.33 & \textbf{107.67}$\pm$1.13 & \textbf{106.86}$\pm$0.31 & 0.86$\pm$0.53 \\
\multicolumn{1}{l}{full-replay} & \multicolumn{1}{l}{walker2d} & \textbf{102.16}$\pm$7.32 & \textbf{103.78}$\pm$1.60 & \textbf{103.11}$\pm$0.86 & 0.67$\pm$1.21 \\
\midrule
\multicolumn{1}{l}{random} & \multicolumn{1}{l}{halfcheetah} & \textbf{30.01}$\pm$1.12 & \textbf{29.49}$\pm$1.20 & 25.62$\pm$0.89 & 2.22$\pm$1.17 \\
\multicolumn{1}{l}{random} & \multicolumn{1}{l}{hopper} & \textbf{9.55}$\pm$0.99 & \textbf{9.99}$\pm$1.44 & 9.09$\pm$0.65 & 1.04$\pm$0.06 \\
\multicolumn{1}{l}{random} & \multicolumn{1}{l}{walker2d} & \textbf{5.69}$\pm$1.82 & \textbf{5.79}$\pm$1.66 & 3.58$\pm$2.69 & -0.14$\pm$0.02 \\
\midrule
\multicolumn{2}{l}{Average} & 66.00 & 72.06 & \textbf{76.19} & 38.67 \\
\bottomrule
\end{tabular}
\end{adjustbox}
\end{small}
\end{center}
\end{table*}

\begin{table*}[!ht]
    \centering
    \caption{Final performance of IQL-AN with $\tau = 0.7$ in Gym-MuJoCo environments. Results are averaged over five training seeds with 10 trajectories per seed. Standard deviations are indicated by $\pm$, and the highest scores within 5\% of the best per task are highlighted in bold.}
\begin{center}
\begin{small}
\begin{adjustbox}{max width=\textwidth}
\begin{tabular}{llcccc}
\toprule
\textbf{Dataset} & \textbf{Environment} & \textbf{-20} & \textbf{-10} & \textbf{-5} & \textbf{-1} \\
\midrule
\multicolumn{1}{l}{medium} & \multicolumn{1}{l}{halfcheetah} & \textbf{55.41}$\pm$0.71 & \textbf{54.04}$\pm$0.47 & 51.79$\pm$0.32 & 44.87$\pm$0.24 \\
\multicolumn{1}{l}{medium} & \multicolumn{1}{l}{hopper} & 68.16$\pm$5.66 & 92.04$\pm$10.86 & \textbf{98.41}$\pm$2.57 & 65.48$\pm$4.48 \\
\multicolumn{1}{l}{medium} & \multicolumn{1}{l}{walker2d} & \textbf{87.47}$\pm$8.21 & \textbf{86.32}$\pm$4.76 & 79.93$\pm$17.56 & 78.43$\pm$4.73 \\
\midrule
\multicolumn{1}{l}{medium-replay} & \multicolumn{1}{l}{halfcheetah} & \textbf{49.45}$\pm$0.97 & 45.65$\pm$2.59 & 43.14$\pm$3.23 & 26.95$\pm$6.87 \\
\multicolumn{1}{l}{medium-replay} & \multicolumn{1}{l}{hopper} & \textbf{100.82}$\pm$0.86 & \textbf{100.25}$\pm$0.43 & \textbf{99.63}$\pm$0.49 & 3.69$\pm$3.34 \\
\multicolumn{1}{l}{medium-replay} & \multicolumn{1}{l}{walker2d} & \textbf{85.07}$\pm$14.89 & 79.22$\pm$41.81 & \textbf{88.76}$\pm$8.04 & 9.49$\pm$6.16 \\
\midrule
\multicolumn{1}{l}{medium-expert} & \multicolumn{1}{l}{halfcheetah} & 25.91$\pm$3.30 & 29.72$\pm$4.85 & 35.72$\pm$6.05 & \textbf{69.73}$\pm$22.27 \\
\multicolumn{1}{l}{medium-expert} & \multicolumn{1}{l}{hopper} & 31.52$\pm$19.68 & 46.45$\pm$21.39 & 76.87$\pm$29.94 & \textbf{105.26}$\pm$8.35 \\
\multicolumn{1}{l}{medium-expert} & \multicolumn{1}{l}{walker2d} & 101.61$\pm$16.03 & \textbf{109.62}$\pm$1.29 & \textbf{106.17}$\pm$7.39 & \textbf{108.31}$\pm$0.75 \\
\midrule
\multicolumn{1}{l}{expert} & \multicolumn{1}{l}{halfcheetah} & 9.73$\pm$3.00 & 9.90$\pm$5.86 & 10.71$\pm$2.16 & \textbf{75.79}$\pm$24.55 \\
\multicolumn{1}{l}{expert} & \multicolumn{1}{l}{hopper} & 11.51$\pm$12.58 & 37.40$\pm$26.85 & 69.05$\pm$10.55 & \textbf{108.61}$\pm$5.87 \\
\multicolumn{1}{l}{expert} & \multicolumn{1}{l}{walker2d} & 51.41$\pm$27.97 & 40.72$\pm$35.29 & 98.75$\pm$19.89 & \textbf{108.17}$\pm$0.52 \\
\midrule
\multicolumn{1}{l}{full-replay} & \multicolumn{1}{l}{halfcheetah} & \textbf{78.28}$\pm$0.28 & \textbf{76.58}$\pm$0.87 & \textbf{75.62}$\pm$1.44 & 65.04$\pm$2.21 \\
\multicolumn{1}{l}{full-replay} & \multicolumn{1}{l}{hopper} & \textbf{105.61}$\pm$0.67 & \textbf{105.78}$\pm$0.80 & \textbf{105.86}$\pm$0.80 & \textbf{104.42}$\pm$0.92 \\
\multicolumn{1}{l}{full-replay} & \multicolumn{1}{l}{walker2d} & \textbf{97.35}$\pm$6.67 & \textbf{100.56}$\pm$3.81 & \textbf{99.17}$\pm$2.62 & 69.59$\pm$13.45 \\
\midrule
\multicolumn{1}{l}{random} & \multicolumn{1}{l}{halfcheetah} & \textbf{24.25}$\pm$7.02 & \textbf{23.74}$\pm$7.43 & 16.76$\pm$4.99 & 1.71$\pm$0.92 \\
\multicolumn{1}{l}{random} & \multicolumn{1}{l}{hopper} & \textbf{8.78}$\pm$0.43 & \textbf{8.68}$\pm$1.24 & \textbf{9.04}$\pm$0.50 & 0.86$\pm$0.05 \\
\multicolumn{1}{l}{random} & \multicolumn{1}{l}{walker2d} & \textbf{19.14}$\pm$7.39 & \textbf{19.41}$\pm$6.36 & -0.11$\pm$0.00 & -0.21$\pm$0.00 \\
\midrule
\multicolumn{2}{l}{Average} & 56.19 & 59.23 & \textbf{64.74} & 58.12 \\
\bottomrule
\end{tabular}
\end{adjustbox}
\end{small}
\end{center}
\end{table*}

\begin{table*}[t!]
    \centering
    \caption{Final performance of TD3-AN in AntMaze environments. Results are averaged over five training seeds with 100 trajectories per seed. Standard deviations are indicated by $\pm$, and the highest scores within 5\% of the best per task are highlighted in bold.} 
\begin{center}
\begin{small}
\begin{adjustbox}{max width=\textwidth}
\begin{tabular}{llccccccccc}
\toprule
& & \multicolumn{3}{c|}{$\alpha=0.3$} & \multicolumn{3}{c|}{$\alpha=0.5$} & \multicolumn{3}{c}{$\alpha=1.0$} \\
\textbf{Dataset} & \textbf{Environment} & \textbf{-20.0} & \textbf{-10.0} & \textbf{-5.0} & \textbf{-20.0} & \textbf{-10.0} & \textbf{-5.0} & \textbf{-20.0} & \textbf{-10.0} & \textbf{-5.0} \\
\midrule
\multicolumn{1}{l}{-} & \multicolumn{1}{l}{umaze} & 90.80$\pm$3.27 & \textbf{96.80}$\pm$1.30 & \textbf{93.80}$\pm$2.77 & \textbf{93.60}$\pm$4.88 & \textbf{97.20}$\pm$2.95 & \textbf{98.40}$\pm$1.52 & \textbf{98.40}$\pm$1.14 & \textbf{97.80}$\pm$3.27 & \textbf{98.40}$\pm$1.52 \\
\multicolumn{1}{l}{diverse} & \multicolumn{1}{l}{umaze} & 55.40$\pm$33.88 & 31.20$\pm$26.44 & 25.60$\pm$13.35 & 64.20$\pm$15.96 & 40.60$\pm$20.23 & 27.80$\pm$6.10 & \textbf{74.60}$\pm$10.95 & 50.00$\pm$23.61 & 39.20$\pm$3.49 \\
\midrule
\multicolumn{1}{l}{play} & \multicolumn{1}{l}{medium} & 78.20$\pm$6.34 & 77.20$\pm$4.32 & 55.20$\pm$26.52 & 65.80$\pm$9.42 & 76.00$\pm$6.20 & 75.40$\pm$5.86 & 68.80$\pm$13.55 & \textbf{83.80}$\pm$6.06 & 73.40$\pm$10.95 \\
\multicolumn{1}{l}{diverse} & \multicolumn{1}{l}{medium} & 74.40$\pm$11.55 & \textbf{82.00}$\pm$9.62 & 20.40$\pm$17.49 & 61.00$\pm$25.50 & 79.20$\pm$6.83 & 11.80$\pm$13.66 & \textbf{82.60}$\pm$6.88 & \textbf{85.80}$\pm$2.59 & 12.40$\pm$7.57 \\
\midrule
\multicolumn{1}{l}{play} & \multicolumn{1}{l}{large} & 62.00$\pm$17.13 & 44.60$\pm$16.29 & 17.00$\pm$11.29 & \textbf{65.40}$\pm$6.15 & 52.20$\pm$10.78 & 19.80$\pm$12.03 & 57.80$\pm$9.98 & 43.40$\pm$12.58 & 14.00$\pm$9.27 \\
\multicolumn{1}{l}{diverse} & \multicolumn{1}{l}{large} & 39.80$\pm$11.45 & 45.40$\pm$11.89 & 27.80$\pm$11.82 & 42.20$\pm$21.50 & \textbf{56.20}$\pm$4.21 & 35.80$\pm$19.23 & 53.00$\pm$21.35 & \textbf{58.00}$\pm$11.38 & 23.40$\pm$17.83 \\
\midrule
\multicolumn{2}{l}{\textbf{Average}} & 66.77 & 62.87 & 39.97 & 65.37 & 66.90 & 44.83 & 72.53 & 69.80 & 43.47 \\
\bottomrule
\end{tabular}
\end{adjustbox}
\end{small}
\end{center}

\end{table*}

\begin{table*}[t!]
    \centering
    \caption{Final performance of IQL-AN in AntMaze environments. Results are averaged over five training seeds with 100 trajectories per seed. Standard deviations are indicated by $\pm$, and the highest scores within 5\% of the best per task are highlighted in bold.}

\begin{center}
\begin{small}
\begin{adjustbox}{max width=\textwidth}
\begin{tabular}{llcccccc}
\toprule
& & \multicolumn{2}{c|}{$\alpha=0.3$} & \multicolumn{2}{c|}{$\alpha=0.5$} & \multicolumn{2}{c}{$\alpha=1.0$} \\
\textbf{Dataset} & \textbf{Environment} & \textbf{-20.0} & \textbf{-10.0} & \textbf{-20.0} & \textbf{-10.0} & \textbf{-20.0} & \textbf{-10.0} \\
\midrule
\multicolumn{1}{l}{-} & \multicolumn{1}{l}{umaze} & \textbf{89.40}$\pm$1.52 & \textbf{89.20}$\pm$1.79 & \textbf{90.00}$\pm$3.32 & \textbf{88.20}$\pm$2.49 & \textbf{89.80}$\pm$1.30 & \textbf{91.20}$\pm$2.39 \\
\multicolumn{1}{l}{diverse} & \multicolumn{1}{l}{umaze} & 61.60$\pm$2.97 & 53.60$\pm$6.07 & 62.60$\pm$2.79 & 59.80$\pm$2.86 & \textbf{68.00}$\pm$7.18 & 60.80$\pm$6.18 \\
\midrule
\multicolumn{1}{l}{play} & \multicolumn{1}{l}{medium} & \textbf{74.40}$\pm$8.23 & \textbf{73.00}$\pm$3.46 & \textbf{74.20}$\pm$4.92 & 70.40$\pm$4.62 & 55.40$\pm$5.86 & 50.00$\pm$4.69 \\
\multicolumn{1}{l}{diverse} & \multicolumn{1}{l}{medium} & 68.50$\pm$5.97 & \textbf{75.20}$\pm$3.63 & 70.80$\pm$4.15 & 67.20$\pm$5.67 & 49.40$\pm$3.36 & 42.40$\pm$9.71 \\
\midrule
\multicolumn{1}{l}{play} & \multicolumn{1}{l}{large} & \textbf{49.40}$\pm$6.73 & \textbf{48.60}$\pm$4.45 & 38.80$\pm$2.77 & 35.20$\pm$4.44 & 2.60$\pm$3.13 & 3.00$\pm$1.22 \\
\multicolumn{1}{l}{diverse} & \multicolumn{1}{l}{large} & \textbf{52.80}$\pm$5.89 & \textbf{52.20}$\pm$6.38 & 43.00$\pm$10.61 & 36.40$\pm$5.08 & 3.40$\pm$2.51 & 1.60$\pm$1.67 \\
\midrule
\multicolumn{2}{l}{\textbf{Average}} & 66.02 & 65.30 & 63.23 & 59.53 & 44.77 & 41.50 \\
\bottomrule
\end{tabular}
\end{adjustbox}
\end{small}
\end{center}

\end{table*}

\clearpage
\subsection{Training Curves}
\label{subsec:training_curve}
 
\begin{figure*}[ht]
\centerline{\includegraphics[width=\columnwidth]{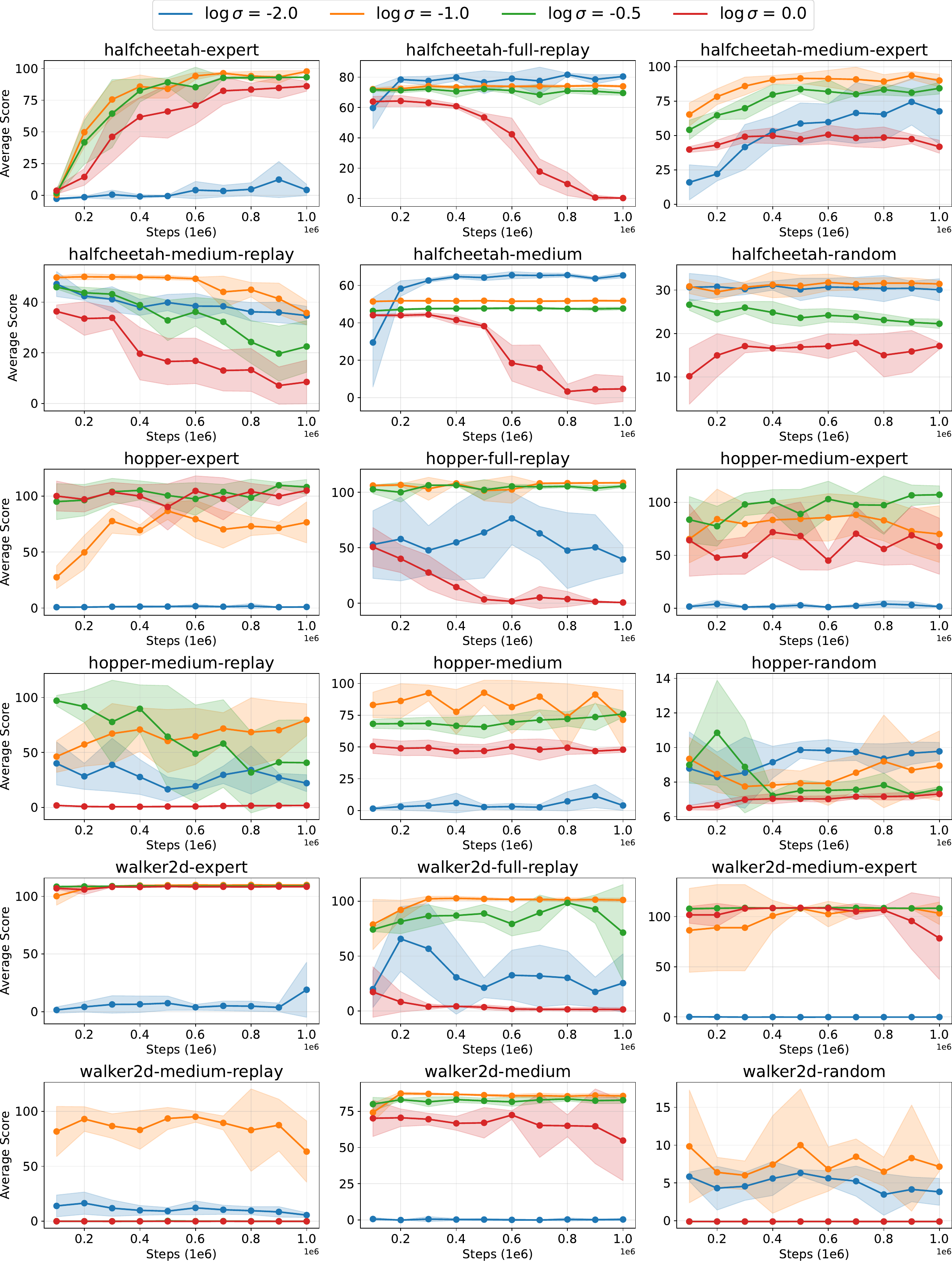}}
  \caption{Training curves of TD3-AN with Gaussian Noise Distribution in Gym-MuJoCo.}
\end{figure*}

\newpage
 
\begin{figure*}[ht]
\centerline{\includegraphics[width=\columnwidth]{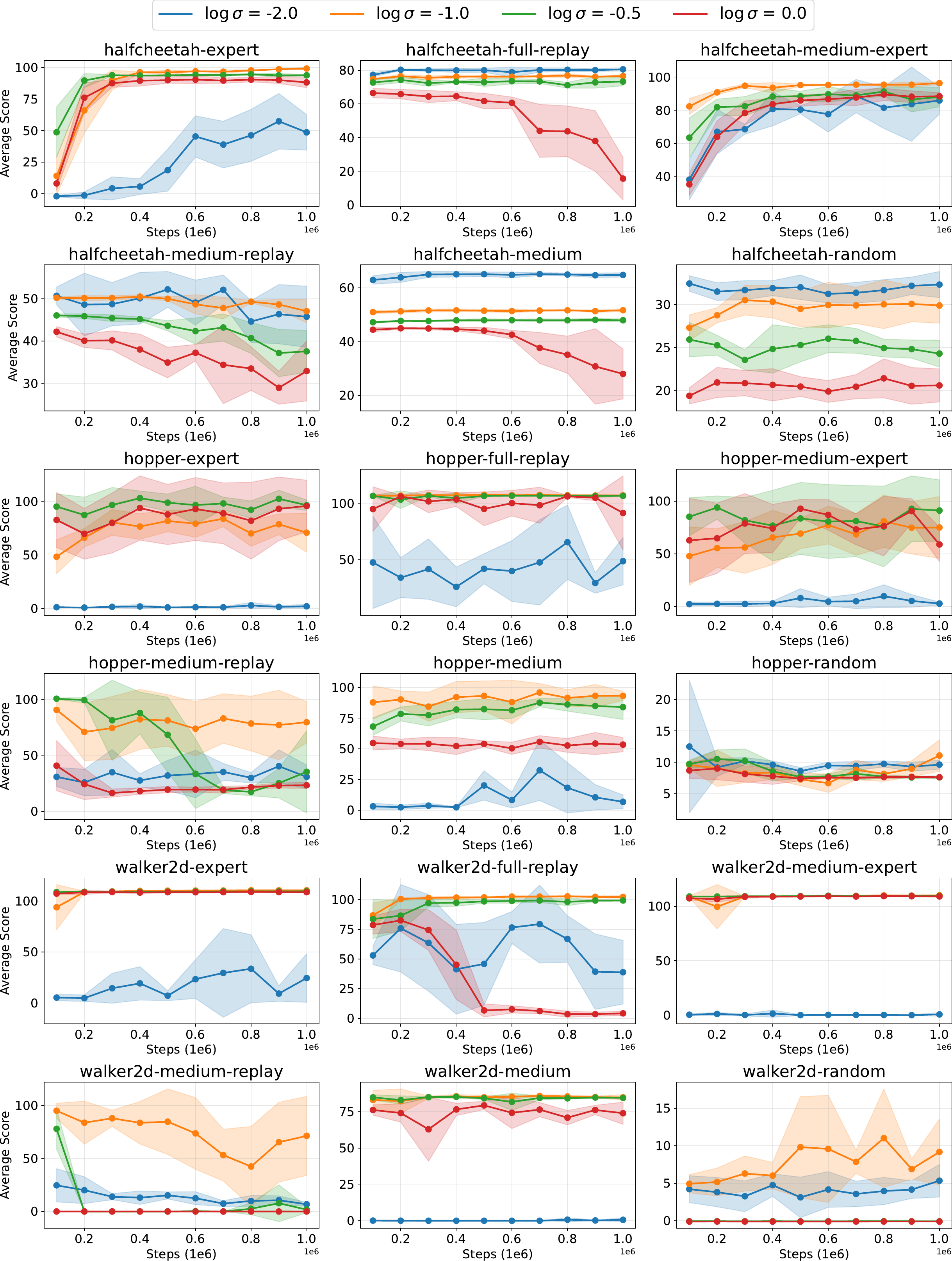}}
  \caption{Training curves of TD3-AN with Laplace Noise Distribution in Gym-MuJoCo.}
\end{figure*}

\newpage
 
\begin{figure*}[ht]
\centerline{\includegraphics[width=\columnwidth]{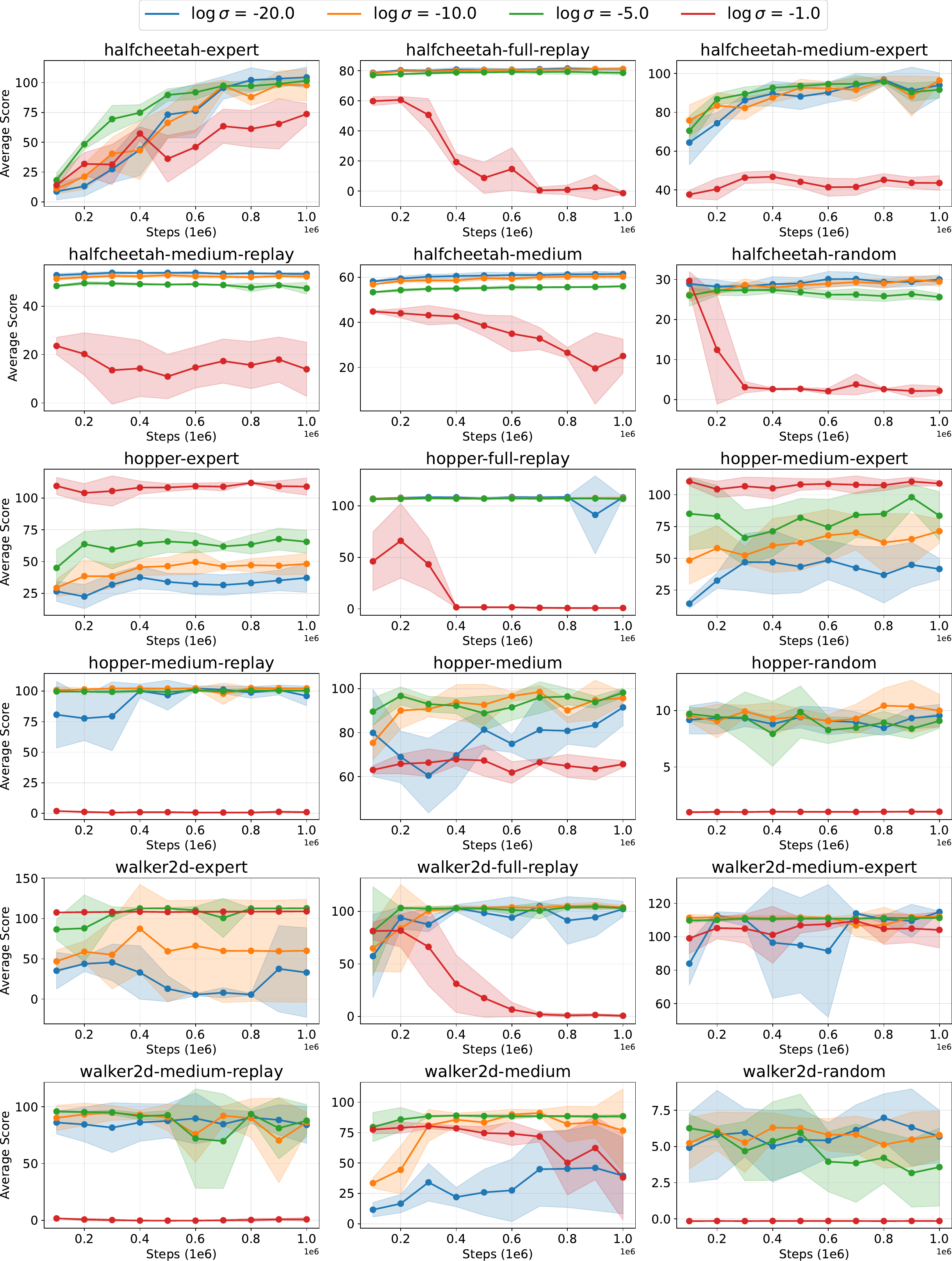}}
  \caption{Training curves of TD3-AN with hybrid noise distribution in Gym-MuJoCo.}
\end{figure*}

\newpage
 
\begin{figure*}[ht]
\centerline{\includegraphics[width=\columnwidth]{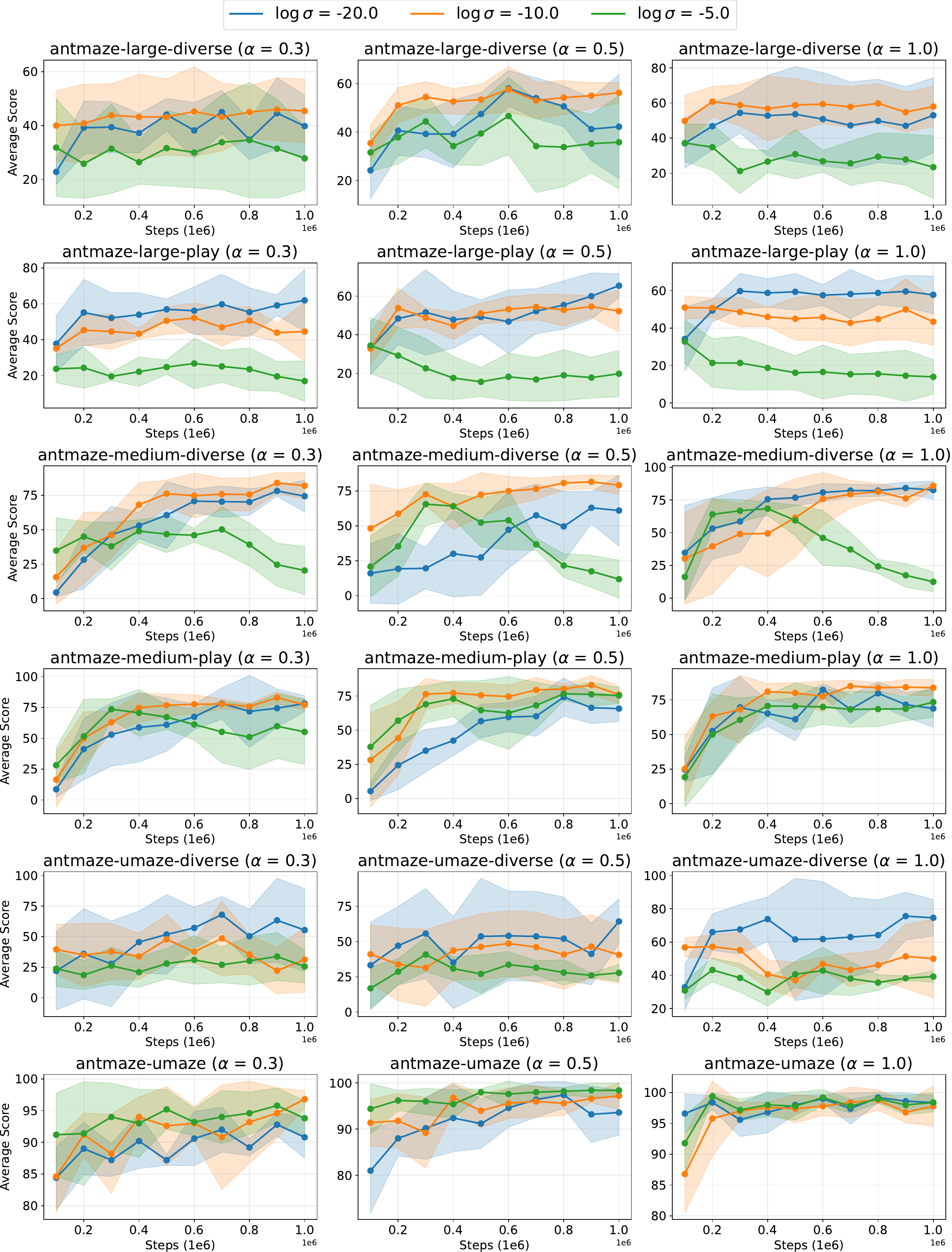}}
  \caption{Training curves of TD3-AN with hybrid noise distribution in AntMaze.}
\end{figure*}

\newpage
 
\begin{figure*}[ht]
\centerline{\includegraphics[width=\columnwidth]{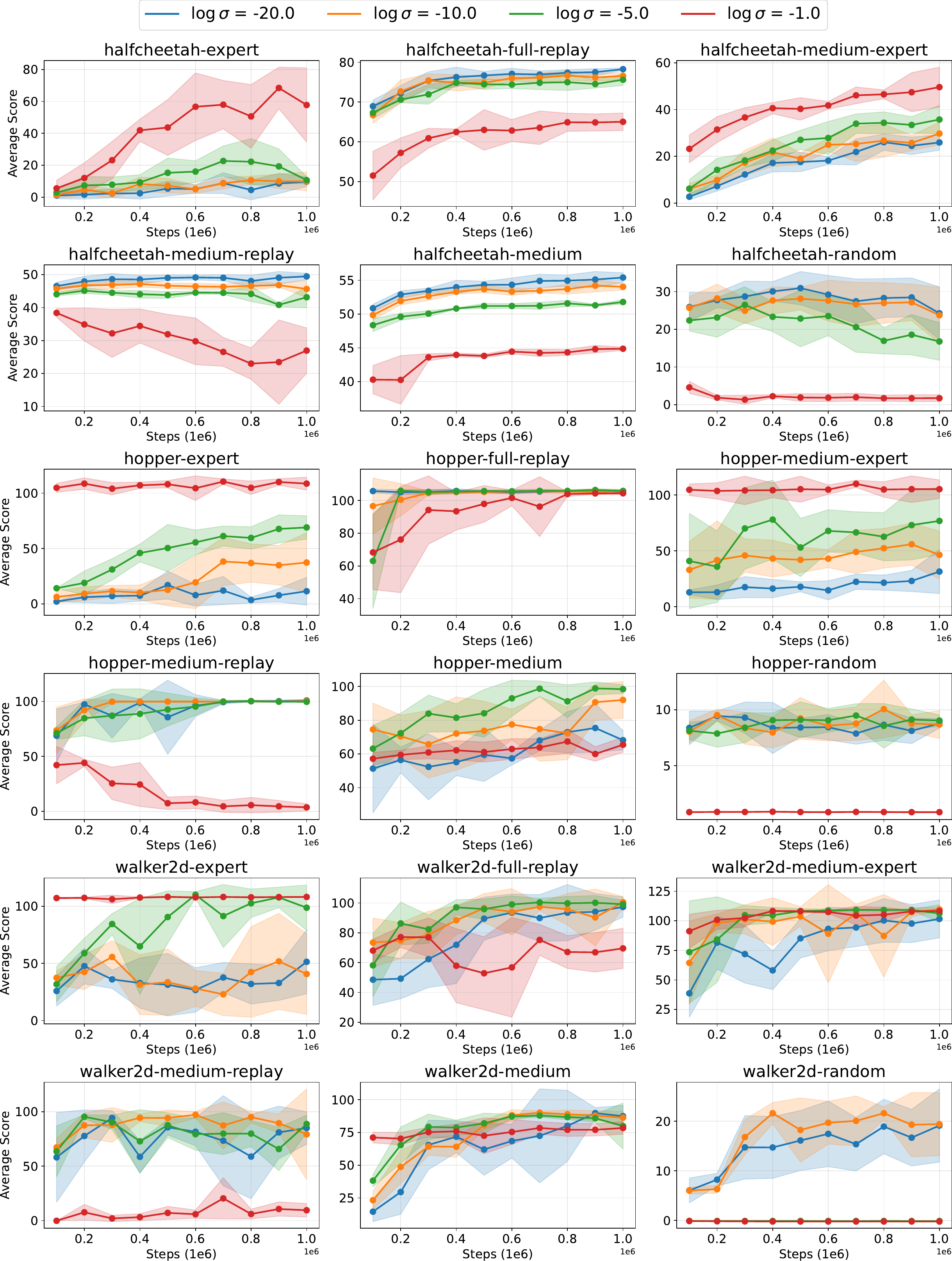}}
  \caption{Training curves of IQL-AN with hybrid noise distribution in Gym-MuJoCo $(\tau=0.7)$.}
\end{figure*}
\newpage
 
\begin{figure*}[ht]
\centerline{\includegraphics[width=\columnwidth]{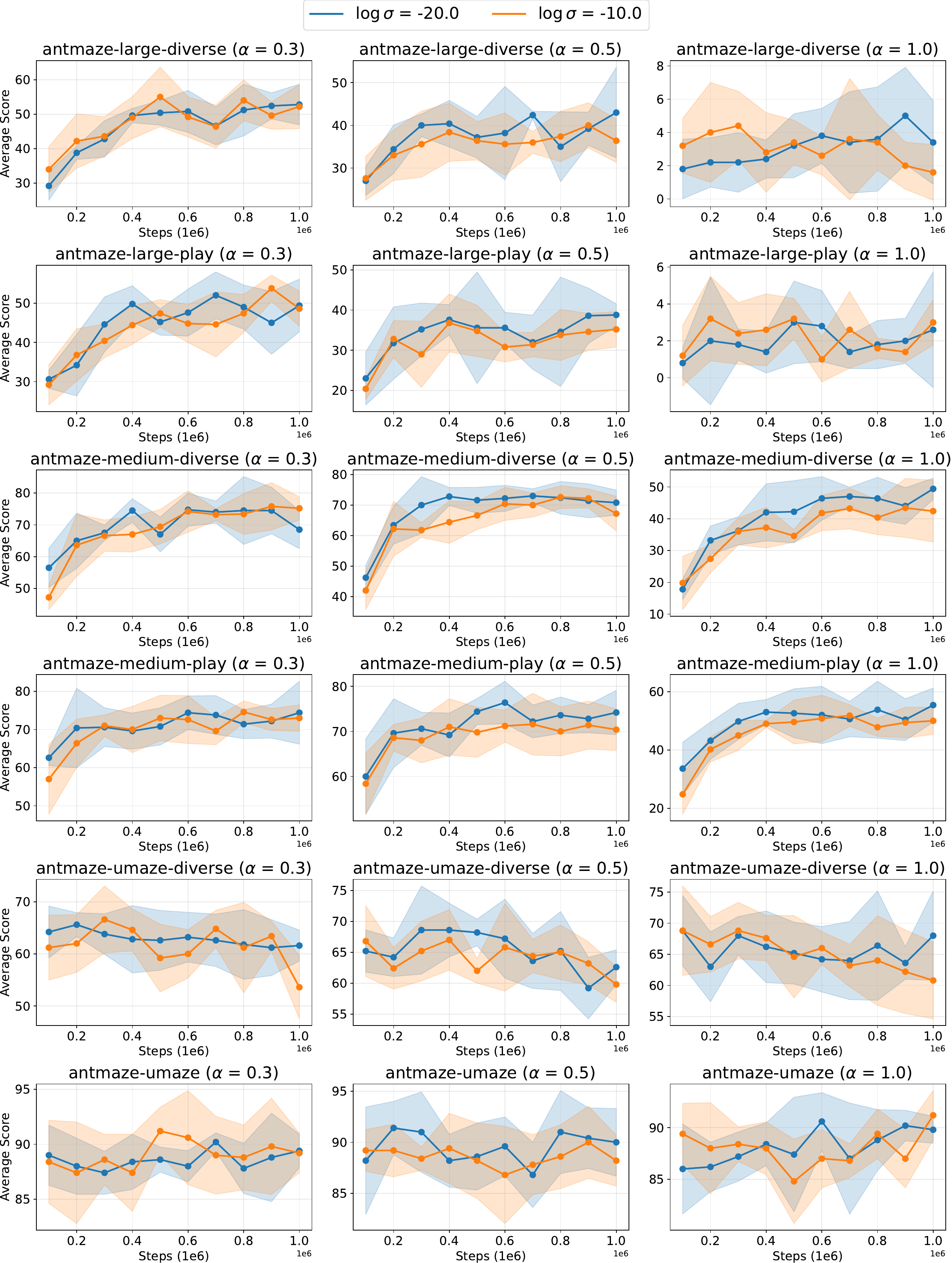}}
  \caption{Training curves of IQL-AN with hybrid noise distribution in AntMaze.}
\end{figure*}

\newpage
\section{Broader Impacts}
The proposed Penalized Action Noise Injection (PANI) has the potential to improve decision-making in domains where interaction with the environment is costly, unsafe, or impractical. Applications in areas such as healthcare, robotics, and autonomous driving could benefit from improved policy learning from limited offline data. By reducing reliance on computationally intensive generative models, PANI may help make reinforcement learning more accessible and applicable in real-world, resource-constrained scenarios.

In addition, PANI provides practical guidelines for selecting noise scales based on dataset characteristics and demonstrates robust performance through the hybrid noise distribution, making it a promising solution for offline RL environments where hyperparameter tuning is often impractical. While our work carries practical relevance, we do not identify any immediate societal risks requiring special attention in this context.

\end{document}